\documentclass[format=acmsmall,nonacm]{acmart}
\usepackage{booktabs} % For formal tables
\usepackage[ruled]{algorithm2e} % For algorithms

\SetAlFnt{\small}
\SetAlCapFnt{\small}
\SetAlCapNameFnt{\small}
\SetAlCapHSkip{0pt}
\IncMargin{-\parindent}

\setcitestyle{authoryear}

\author{Yuxuan Lu}
\authornote{Both authors contributed equally to the paper and are listed in alphabetical order.}
\authornote{Supported by National Natural Science Foundation of China award number 62372007.}
\affiliation{\institution{Peking University}\city{Beijing}\country{China}}
\email{yx\_lu@pku.edu.cn}

\author{Shengwei Xu}
\authornotemark[1]
\authornote{Supported by United States National Science Foundation award number 2313137.}
\affiliation{\institution{University of Michigan}\city{Ann Arbor}\country{USA}}
\email{shengwei@umich.edu}

\author{Yichi Zhang}
\authornotemark[3]
\affiliation{\institution{University of Michigan}\city{Ann Arbor}\country{USA}}
\email{yichiz@umich.edu}

\author{Yuqing Kong}
\authornotemark[2]
\affiliation{\institution{Peking University}\city{Beijing}\country{China}}
\email{yuqing.kong@pku.edu.cn}

\author{Grant Schoenebeck}
\authornotemark[3]
\affiliation{\institution{University of Michigan}\city{Ann Arbor}\country{USA}}
\email{schoeneb@umich.edu}

\usepackage{hyperref}       % hyperlinks
\usepackage{url}            % simple URL typesetting
\usepackage{booktabs}       % professional-quality tables
\usepackage{amsfonts}       % blackboard math symbols
\usepackage{nicefrac}       % compact symbols for 1/2, etc.
\usepackage{multirow}
\usepackage{amstext}
\usepackage{amsmath}
\usepackage{amsthm}
\usepackage{bm}
\usepackage{wrapfig}
\usepackage{graphicx}
\usepackage{bbm, comment}
\usepackage{color}
\usepackage{float,subfigure}
\usepackage{wrapfig}
\usepackage{url}
\usepackage{autobreak}
\usepackage{threeparttable}
\usepackage[utf8]{inputenc}
\usepackage{graphicx} 
\usepackage{float} 
\usepackage{pgfplots}
\usepackage{caption}
\usepackage{graphicx}
\usepackage{cleveref}
\usepackage{mdframed}
\usepackage{xcolor}
\usepackage[most]{tcolorbox}
\tcbuselibrary{skins}
\usepackage{thmtools}
\usepackage{array}

\newcommand{\LLM}{\operatorname{LLM}}

\newcommand{\signal}{x}
\newcommand{\Signal}{X}
\newcommand{\report}{\tilde{x}}
\newcommand{\Report}{\tilde{X}}
\newcommand{\strategy}{\sigma}
\newcommand{\itemz}{z}
\newcommand{\Itemz}{Z}

\newcommand{\synopsis}{\theta}
\newcommand{\Synopsis}{\Theta}

\newcommand{\lowsig}{x^{l}}
\newcommand{\Lowsig}{X^{l}}

\newtheorem*{theorem*}{Theorem}
\newtheorem{theorem}{Theorem}[section]

\newtheorem{assumption}[theorem]{Assumption}

\newtheorem{proposition}[theorem]{Proposition}
\newtheorem{definition}[theorem]{Definition}

\usepackage{color-edits}
\addauthor[Yichi]{yichi}{violet}

%\usepackage[subtle,tracking=normal,wordspacing=normal,mathdisplays=tight]{savetrees}
%\usepackage[subtle]{savetrees}
% Yuxuan: this command is evil, try not using it if we have enough space

\title{Eliciting Informative Text Evaluations with Large Language Models} 

\begin{abstract}
Peer prediction mechanisms motivate high-quality feedback with provable guarantees. However, current methods only apply to rather simple reports, like multiple-choice or scalar numbers. We aim to broaden these techniques to the larger domain of text-based reports, drawing on the recent developments in large language models. This vastly increases the applicability of peer prediction mechanisms as textual feedback is the norm in a large variety of feedback channels: peer reviews, e-commerce customer reviews, and comments on social media.

We introduce two mechanisms, the \textsc{Generative Peer Prediction Mechanism (GPPM)} and the \textsc{Generative Synopsis Peer Prediction Mechanism (GSPPM)}. These mechanisms utilize LLMs as predictors, mapping from one agent's report to a prediction of her peer's report. Theoretically, we show that when the LLM prediction is sufficiently accurate, our mechanisms can incentivize high effort and truth-telling as an (approximate) Bayesian Nash equilibrium. Empirically, we confirm the efficacy of our mechanisms through experiments conducted on two real datasets: the Yelp review dataset and the ICLR OpenReview dataset. We highlight the results that on the ICLR dataset, our mechanisms can differentiate three quality levels --- human-written reviews, GPT-4-generated reviews,  and GPT-3.5-generated reviews in terms of expected scores. Additionally, GSPPM penalizes LLM-generated reviews more effectively than GPPM.

\end{abstract}

\begin{document}

\begin{titlepage}

\maketitle
\makeatletter \gdef\@ACM@checkaffil{} \makeatother

% Optionally include a table of contents
% \vspace{1cm}
% \setcounter{tocdepth}{2} % adjust to 1 if desired
% \tableofcontents

\end{titlepage}

\section{Introduction}\label{sec:intro}

Consider the following review for an academic paper: 
\begin{quote}
\textit{``I didn't get much out of reading this paper. Their methods do not seem very rigorous. I don't think the conclusions are supported very well.''}\end{quote}

The above review is not very informative: its initial critique is too general, and the issues with the methods and conclusions should have been explained more thoroughly. If the peer review process only gathers reviews of this quality, it will struggle to make useful and fair publication decisions. The problem has been exacerbated by large language models (LLMs), which greatly reduce the cost of generating reviews that closely mimic human-written reviews but often lack substantial insight~\cite{liang2023large}.

Here are two reviews of a randomly selected paper submitted to the ICLR\footnote{The International Conference on Learning Representations (ICLR), a top-tier machine learning conference, makes all of its peer review data openly accessible on OpenReview (https://openreview.net).} 2020. An interesting exercise is to determine which was generated by an LLM and which was written by a human.

\begin{figure}[h]
\noindent\begin{minipage}[t]{0.495\textwidth}
\begin{tikzpicture}
\node[fill=blue!7,minimum width=\textwidth,minimum height=6.9cm,text width=0.9\textwidth] (box) {\dots ~ I lean towards rejecting this paper however, because I am not convinced of the results' significance. We already know how to learn symmetric functions (see Exercise 3.26 in Mohri et al., 2018). The authors' results show that we can inject this knowledge into a neural network at initialization, and then run SGD without making things too much worse. I do not see how these ideas might apply to more substantial learning problems where our prior knowledge is less precise. \dots}; 
\end{tikzpicture}
\end{minipage}
\begin{minipage}[t]{0.495\textwidth}
\begin{tikzpicture}
\node[fill=red!7,minimum width=\textwidth,minimum height=6.9cm,text width=0.9\textwidth] (box) {\dots ~ The paper makes a valuable theoretical contribution to the understanding of neural network initialization, particularly in the context of symmetric functions. The empirical validation is a strong point, although the experiments could be expanded to provide a more comprehensive evaluation. The paper's focus on a single hidden layer network is both a strength, in terms of theoretical tractability, and a weakness, in terms of practical relevance. To move forward, the authors should consider extending their analysis to more complex architectures and providing a broader empirical evaluation.  \dots}; 
\end{tikzpicture}
\end{minipage}
\caption{An example from our study: Two reviews of a submission at ICLR2020, the left one by a human reviewer, and the right one by GPT-4.}
\label{fig:example-llm-vs-human}
\end{figure}

Due to LLMs, it is no longer possible to filter out low-quality reviews by their length, lack of any relation to the paper at hand, or poor grammatical constructions. In the above example, we can observe that the AI-generated review looks informative and effectively summarizes the paper. However, it lacks the depth and insight of the expert human review. 

The need to obtain high-quality subjective human evaluation extends beyond academia to other domains, including business, the arts, and more.
For example, if customer feedback on Amazon and Yelp is inundated with shallow reviews or those generated by LLMs, consumers will struggle with making well-informed decisions \cite{resnick2000reputation,tadelis2016reputation}. The problem of incentivizing high-quality reviews is more important and daunting than ever. %

One approach is to provide well-designed incentives for the reviewer that, in particular, reward high-quality reviews more than low-quality reviews~\cite{srinivasan2021auctions}. However, because reviews are subjective, their correctness cannot be directly verified. This inherent challenge complicates the evaluation of review quality. 

One straightforward idea is to ask other people to judge the quality of the reviews. But then we face a new challenge: how to motivate these new judges? Moreover, an automated approach that does not introduce additional participants and procedures is preferable.   

Prior work has proposed the peer prediction mechanism, a powerful tool to elicit subjective information \citep{miller2005eliciting}.
The high-level idea is to determine the reward of a person according to the ``correlation'' between her report and a peer's report. The underlying intuition is that better, more insightful reports will naturally align more closely with one another. In their setting, when a person puts in the effort to understand a task, she gains a private signal such as ``good'' or ``bad''. She can then choose whether to report this signal honestly. \citet{miller2005eliciting} prove that in their setting, truth-telling is an equilibrium---if a participant believes other people will invest effort and tell the truth, she should also do this. 

However, implementing the above mechanism requires knowledge of the prior: the joint distribution of the private signals. In the original peer prediction mechanism \cite{miller2005eliciting}, agents are asked to report their private signals such as ``good'' or ``bad''. The mechanism then predicts a peer's report, such as 70\% ``good'', by getting a posterior based on an agent's report and the prior. The agent is then rewarded for the accuracy of this posterior prediction. Assuming the mechanism has perfect knowledge of the prior, this incentivizes truth-telling because only an honest report can lead to the optimal posterior. 

The mechanism's required knowledge of the prior has been seen as a major impediment to real-world implementation of peer-prediction mechanisms. Significant advances, that follow two main approaches, have enabled overcoming this limitation in several settings. Both these approaches circumvented the requirement of knowing the prior by learning, not the prior itself, but a proxy, usually the relationship between the agent reports, from agent reports themselves.  

The first approach, often called the multitask setting, involves assigning agents to multiple a priori similar tasks. This allows learning the structure of agent reports and enables measuring the amount of information in common between agent responses~\citep{dasgupta2013crowdsourced,shnayder2016informed,kong2019information,liuchensurrogate, 10.1145/3543507.3583292}. In certain settings such as multiple choice questions, even a small number of tasks may suffice~\cite{schoenebeck2020learning,burrell2021measurement,Kong2024}. The second approach called the signal-prediction framework, pioneered by~\citet{prelec2004bayesian}, involves eliciting second-order predictions, that is asking how they believe other agents will respond, for example, ``I think 70\% of my peers will answer `good' ''~\citep{radanovic2014incentives, witkowski2012robust,schoenebeck2023two,chen2021wisdom}.  

However, because both these approaches rely on learning from agent responses or predictions, they work better when the space is simple---either categorical (such as a multi-choice question) or numerical (such as a rating between 0 and 10). Otherwise, the structure is too involved to learn in the multitask setting, and the forecasts can not be communicated efficiently for forecast elicitation.

However, reducing to such a simple space often loses the rich information within the textual judgments. For example, in peer review, the decision of the editor/area chair often relies more on the arguments and justifications in the textual reviews rather than merely on numerical ratings. Furthermore, on online platforms, the inflation of ratings makes them less reliable and distinguishable, while textual reviews tend to be more stable \cite{filippas2018reputation}.  

Given these limitations and the recent success of large language models (LLMs), our research question is: \textbf{can we develop automated mechanisms that effectively incentivize high-quality, informative textual feedback by rewarding it more than generic or low-quality content?}

Intuitively, eliciting textual feedback is inherently more difficult than eliciting numerical or categorical responses. However, the recent rise of powerful Large Language Models (LLMs) has surprisingly flipped this script. 
LLMs, more or less, estimate the probability distribution of the entirety of human language. Thus, our goal is to instead run the original peer-prediction mechanism by using LLMs to gain access to the prior.  
We use the LLM's ability to analyze the structure of textual responses and predict the probability of one text given another text (LLM-prediction). This directly addresses the ``knowing prior'' problem and eliminates the need for multiple tasks or second-order predictions (common for categorical/numerical responses). It is somewhat paradoxical that moving to this much \emph{more} complex domain actually may make the entire task easier! In essence, LLMs make eliciting textual responses easier than simpler formats, as textual responses offer more complexity that LLMs can leverage, while simpler formats lack this richness. 

Directly employing LLMs may reward superficial similarities, such as matching speaking styles, or reviews that offer no more than a reiteration of the paper's abstract, which may benefit LLM-generated reviews. Ultimately, the goal is to encourage reviewers to delve deeper, providing unique perspectives. A related question is \textbf{can we distinguish the valuable, unique human expert reviews from the coherent yet potentially superficial reviews generated by AI? }

To answer the above research question and not reward superficial similarities, when using LLMs to compute the correlation, it is important to effectively condition out ``shortcut'' information such as language styles and information contained in any synopsis of the reviewed item. We borrow the term from ``shortcut learning'', where a machine learning algorithm learns the undesired information from data that is strongly correlated with labels on the training data but lacks generality \cite{geirhos2020shortcut}. In our setting, for example, a human-written review can have a high correlation with an LLM-generated review because they mentioned several particular terms in the paper. However, such ``shortcut'' information may lead to unintended rewards for shallow reviews (e.g., LLM-generated reviews) and noise caused by different language styles. By conditioning out ``shortcut'' information, we aim to filter out these superficial aspects and focus on rewarding reviews that demonstrate a deeper level of engagement.

\subsection{Our Contribution}
We apply the LLM-prediction to peer prediction and propose two mechanisms---the \textsc{Generative Peer Prediction mechanism (GPPM)} and the \textsc{Generative Synopsis Peer Prediction mechanism (GSPPM)}. At a high level, the former rewards a review based on how much it helps predict the contents of another review. The latter, however, rewards a review based on how much \emph{more} it helps predict the contents of another review than a mere synopsis of the item to be reviewed, such as the abstract of a paper, thereby conditioning out the "shortcut" information derived from superficial information contained in the synopsis.

We first use theory to present the main idea of our method, where we consider a model with three layers of effort: high, low, and no effort. We show that when the LLM prediction is sufficiently accurate, both mechanisms can incentivize high effort and truth-telling as an (approximate) Bayesian Nash equilibrium. By conditioning on a synopsis of the item to be reviewed, GSPPM can further shrink the gap of expected scores between low-effort and no-effort reporting while preserving the gap between high-effort and low-effort reporting. 

We then propose two heuristic implementations for getting the LLM-prediction---\textsc{Token} and \textsc{Judgment}. We highlight that for a robust implementation, it is necessary to preprocess the responses. We propose a straightforward yet effective heuristic preprocessing method. This involves using an LLM to rephrase and summarize the initial responses, aiming to standardize the language style and remove superficial content, and consequently, mitigate the impact of the ``shortcut''.

Moreover, we conduct experiments on the mechanisms using an ICLR dataset and a Yelp review dataset and observed the following results:

\begin{description}
    
    \item[Result 1:] \textbf{GPPM can effectively penalize report degradations.} We apply three report degradation methods, which degrade the information of an agent's report. In both ICLR and Yelp datasets, we observe that the expected score computed by the GPPM significantly decreases after all degradations.

    \item[Result 2:]  \textbf{Both GPPM and GSPPM can differentiate three quality levels---human, GPT-4, and GPT-3.5.} In the ICLR dataset, we replace an agent's review with a GPT-4-generated review and a GPT-3.5-generated review respectively, representing a decreasing level of effort. We observe that the expected scores computed by both GPPM and GSPPM significantly decrease. Furthermore, the decrease of the GPT-3.5-generated review is larger than that of the GPT-4-generated review. 

    \item[Result 3:]  \textbf{GSPPM penalizes LLM-generated peer review more than GPPM.} We find that the GSPPM applies a more significant expected score penalty on the LLM-generated peer reviews, including both GPT-4 and GPT3.5, compared to GPPM, which indicates its improved capacity to distinguish high-quality reports from low-quality reports.
    
\end{description}

We further note that our mechanisms can serve not only to assess the quality of reviews which can inform decision-making, but also to incentivize effort from agents. This can be interpreted by considering GPT-3.5, GPT-4, and human-written reviews as representing three levels of effort. Thus, by rescaling the scores of the mechanisms into payments, we can reward high-effort reviews much more than low-effort reviews.

\section{Related Work}

\paragraph{Peer prediction mechanisms}

The idea of peer prediction, introduced by \citet{miller2005eliciting}, proposes a mechanism that induces truth-telling as an equilibrium, assuming the mechanism knows the joint distribution of agents' signals and this knowledge is common knowledge among agents. This fundamental assumption, however, might be challenged in many settings. There have been two primary research directions that can mitigate it.

One primary direction is to generalize peer prediction to a multi-task setting, making it possible to estimate (implicitly or explicitly) the joint distribution between the reports with multiple tasks \cite{dasgupta2013crowdsourced,shnayder2016informed, kong2019information, schoenebeck2020learning,10.1145/3543507.3583292,kong2020dominantly,Agarwal2017-ty,rowdycrowds}. The other direction follows another mechanism, the Bayesian Truth Serum (BTS) \cite{prelec2004bayesian}, which also seeks to incentivize truthful reporting as an equilibrium. It directly solicits predictions from agents, i.e., each agent predicts the reports of their peers, bypassing the need for prior knowledge of distributions. Subsequent research has extended the BTS, enhancing its efficacy in diverse scenarios \cite{10.5555/2900929.2900940, 10.5555/2891460.2891576,zhang2014elicitability,10.1145/3565560}.

Our contribution markedly diverges from these established approaches in several ways. First, we focus on incentivizing text-based human responses, which entails a highly complex response space, where both estimating the joint distribution and directly eliciting predictions from agents are no longer practical. Second, we introduce the first peer prediction mechanism that employs a Large Language Model (LLM) as an oracle, which is presumed to effectively approximate the joint distribution of responses. As LLM capabilities enhance, our mechanism naturally improves, benefiting from more accurate oracle estimations of joint response distributions. Overall, our approach enables the deployment of peer prediction mechanisms in single-task settings without the necessity of directly soliciting predictions or possessing priori distribution knowledge.

\paragraph{Shortcuts in information elicitation}

\citet{geirhos2020shortcut} use ``shortcut learning'' to describe a machine learning algorithm's reliance on unintended, simpler features rather than learning the intended, complex representations. For example, a deep neural network might only identify cows accurately within grassy scenes, mistaking ``grass'' for ``cows'' \cite{beery2018recognition}. \citet{niven2019probing} show that a language model, BERT \cite{devlin2018bert}, might seem to understand reasoning but actually relies on spurious statistical cues in the dataset, particularly the presence of specific words, failing when these cues are absent.

This phenomenon not only exists in artificial intelligent agents but also in human agents. As accessing and processing the information may cost agents a considerable amount of effort, human agents also rely on shortcuts or heuristics. A case in point is peer review processes, where reviewers might focus primarily on assessing the story-telling in the introduction as a shortcut, instead of providing a deeper, more insightful evaluation of the work's merits.

To address this issue and elicit high-quality information rather than superficial shortcuts, researchers have proposed several approaches. \citet{li2022optimization} propose optimal scoring rules that maximize the marginal expected payoff increase of an agent who exerts a higher level of effort. \citet{zhang2022high} suggest that paying the agents according to a tournament based on their scores could also help the efficiency to elicit high effort. When applying peer prediction mechanisms in a peer grading setting, \citet{gao2016incentivizing} finds a Pareto-optimal Nash equilibrium where all agents exploit a low-effort shortcut. To address this problem, \citet{kong2018eliciting} propose a peer prediction mechanism robust against such shortcuts, by explicitly eliciting both the intended response and the shortcut, and then conditioning the shortcut out. Our GSPPM is inspired by \citet{kong2018eliciting} but focuses on text-based responses and does not require eliciting additional information from the agents.

Additionally, how to robustly aggregate information when there exists (implicitly or explicitly) a shortcut is also widely studied \cite{prelec2017solution,pan2023robust,chen2021wisdom,guo2024algorithmic,arieli2018robust,palley2019extracting,lu2024calibrating,kong2022eliciting}. A well-known task is ``Is Philadelphia the capital of Pennsylvania?'' where the majority will take the shortcut that Philadelphia is a famous big city and big cities are usually the capital, thus making mistakes in this question. \citet{prelec2017solution} provide a solution to this by eliciting the agents' prediction and aggregating the ``surprisingly popular'' choice.

\paragraph{LLM applications in peer review}

The core of our experiments relies on using LLMs to analyze peer review data on ICLR and customer review data on Yelp.
Recent works claim that LLMs have great potential to support publication \cite{gao2022comparing, salvagno2023can} and review practices \cite{liang2023large} if properly used. In a survey, it is shown that GPT-generated reviews are appreciated by over 50\% of participants, and over 80\% of participants found them more beneficial than feedback from at least some human reviewers \cite{liang2023large}. More evidence exists showing that LLMs can be used to handle easy evaluation tasks in peer review, such as summarizing methodologies and assessing writing styles \cite{donker2023dangers}, detecting gender bias \cite{10.7554/eLife.90230}, and verifying author checklists \cite{liu2023reviewergpt}. However, other work expresses concerns about the ability of LLMs to handle more complicated reviewing tasks. For example, hallucinating occurs when LLMs are asked to provide suggestions for specific improvements~\cite{donker2023dangers}, and LLMs tend to perform poorly on distinguishing clearly ``better'' papers \cite{liu2023reviewergpt}. \citet{Hosseini23} provide some recommendations for the proper use of LLMs to support review or editorial processes.

Our work is largely orthogonal to these concerns. Rather than using LLMs to generate reviews directly, our paper focuses on using LLMs as a tool to design automated mechanisms that incentivize high-quality reviews. If the quality of LLM reviews dominated the quality of high-effort human reviews, our work would largely be superfluous, but LLMs cannot yet do this. While LLM-generated reviews may provide additional feedback, for the moment, subjective high-quality human reviews are still essential.

\section{Preliminaries}

This section introduces the classic information elicitation model and the preliminaries that guide the design of our method.

\subsection{Model} \label{subsec:model}
In our setting, a set of items (e.g., papers or restaurants) are reviewed by a set of agents, where each item is assigned to multiple agents for review. We reduce the problem to the setting where there is only one item to be reviewed by two agents and emphasize that our method can be applied to any item and any pair of agents. Let $I = \{1,2\}$ be the set of agents reviewing the same item.

\paragraph{Item and Signal} Agents' judgments of the item are influenced by the inherent characteristics of the item and other related background knowledge used to generate the judgment. Let $\Itemz\in \mathcal{Z}$ denote an item such that observing $\Itemz=\itemz$ is sufficient for an agent to form her judgments about the item. Suppose $\Itemz$ is sampled from an unknown common prior $\pi$.

\begin{figure}[!ht]
    \centering
    \includegraphics[width=0.6\linewidth]{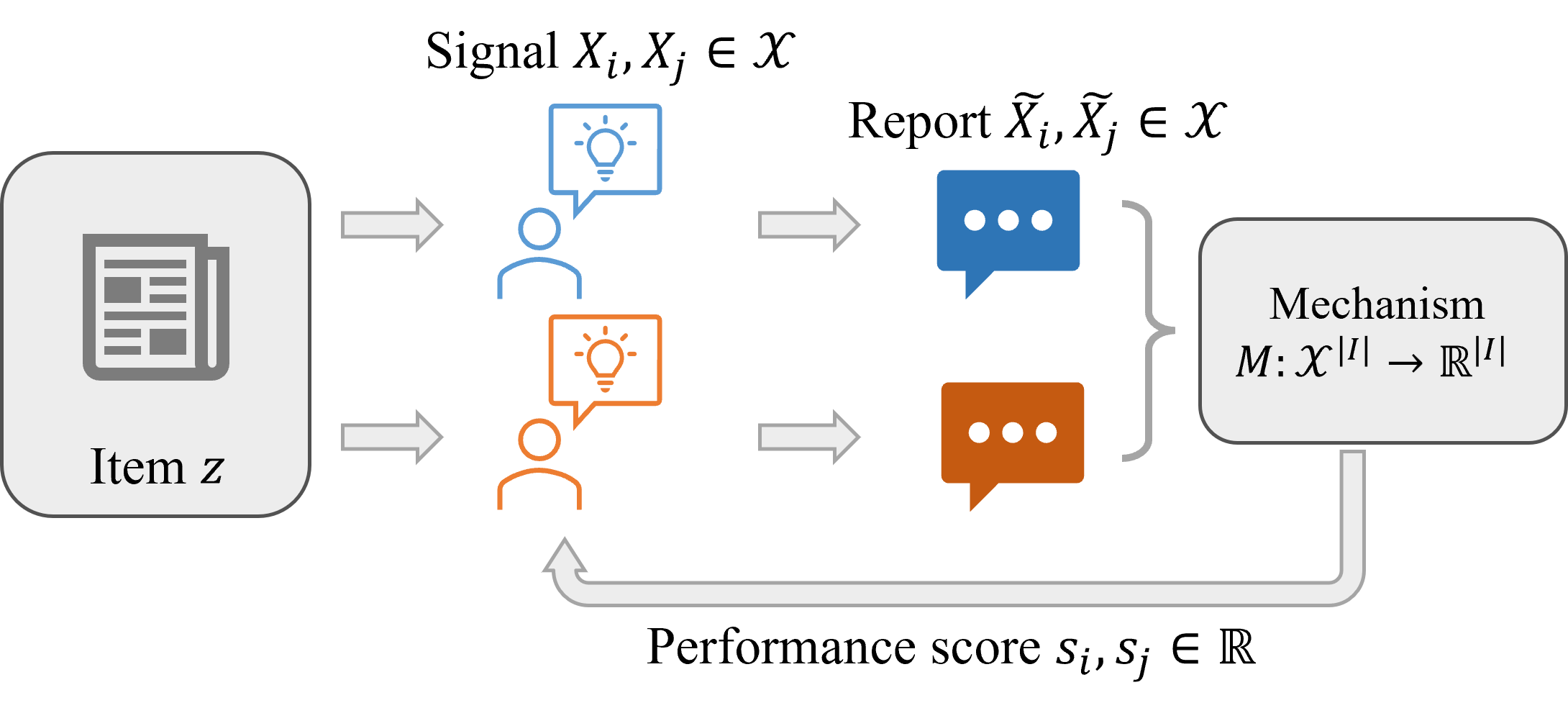}
    \caption{An Overview of Our Information Elicitation Model}
    \label{fig:model}
\end{figure}

Given an item $\Itemz=\itemz$, each agent receives a subjective signal $\Signal_i\in \mathcal{X}$ when evaluating the item. We use $\signal_i$ to denote a potential value for $\Signal_i$. Similar to prior literature \cite{miller2005eliciting, dasgupta2013crowdsourced}, we adopt the common assumption that signals are i.i.d.~conditioned on the item, i.e., $\Pr[\Signal_i\mid \Itemz=\itemz]$ is identical for any agent $i$. 
Furthermore, when agents evaluate the same item, their signals are expected to be related in some meaningful way. We thus adopt the following assumption, which is required to guarantee that truthful information is elicitable~\cite{Cremer_Mclean}.

\begin{assumption}[Stochastic Relevance] 
For any $\signal_i, \signal_i' \in \mathcal{X}$ such that $\signal_i\neq\signal_i'$, there exists $\signal_j\in \mathcal{X}$ such that
\[\Pr[\Signal_j = \signal_j \mid  \Signal_i = \signal_i] \neq \Pr[\Signal_j = \signal_j \mid  \Signal_i = \signal_i'].\]
\end{assumption}

\paragraph{Reporting Strategy} 
We assume the report of agent $i$, denoted as $\Report_i\in \mathcal{X}$, shares the same domain as the signal. Agents can truthfully report their signals or manipulate their signals as reports. 
However, only reports (not signals) are observable to the mechanism. 
Let $\sigma_i: \mathcal{X}\rightarrow\Delta_\mathcal{X}$ denote the reporting strategy of agent $i$ so that $\Report_i = \sigma_i(X_i)$. Let $\tau$ denote the truthful reporting strategy such that $\tau(X_i) = X_i$ for any signal.

\paragraph{Effort Model}

\begin{figure}[!ht]
    \centering
    \includegraphics[width=0.6\linewidth]{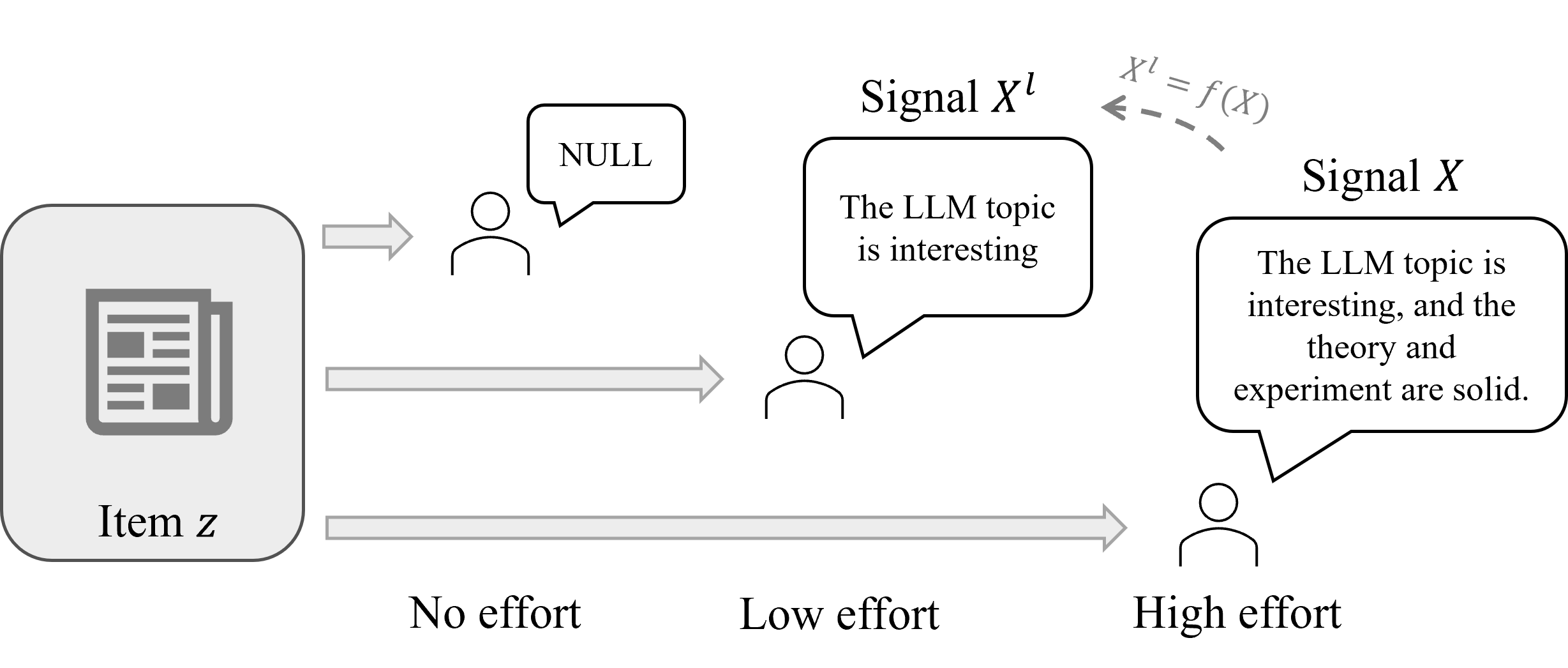}
    \caption{An Example of the Three-Level Effort Model}
    \label{fig:effort-model}
\end{figure}

We assume that agents can obtain signals with different qualities if they exert different levels of effort. We focus on a three-level effort model: exerting no effort with a cost of 0, low effort with a cost of $c_l\ge 0$, and high effort with a cost of $c_h>c_l$. We use $c_i$ to denote the agent $i$'s cost.

For each agent $i$, if she exerts high effort, she observes a high-quality signal $X_i=x_i$. If she exerts low effort, she observes a low-quality signal $\Lowsig_i=f(\signal_i)$, which is part of the high-quality signal. Here, $f$ is a deterministic function, so $\Pr[\Lowsig_i = f(x_i) \mid X_i = x_i] = 1$. However, given $\Lowsig_i$, the high-quality signal $X_i$ is uncertain. If an agent exerts no effort, they observe an uninformative signal $\textsc{Null}$ all the time. In this case, $\Pr[X = x \mid \textsc{Null}] = \Pr[X = x]$ for all random variables $X$. The above model implies that the conditional entropy\footnote{The conditional entropy $H(X \mid Y)$ measures the average amount of uncertainty in $X\in \mathcal{X}$ given the value of another random variable $Y\in \mathcal{Y}$, i.e., $H(X \mid Y) = - \sum_{x \in \mathcal{X}} \sum_{y \in \mathcal{Y}} P(x,y) \log \left( \frac{P(x,y)}{P(y)} \right)$.} $H(\Lowsig_i \mid X_i)=0$, and $H(X_i \mid \Lowsig_i)\geq 0$.

This effort model, called the hierarchical effort model \cite{kong2018eliciting}, suggests that the high-effort signals are strictly more informative than the low-effort signals. For example, in the setting of peer review, a low-effort reviewer might focus solely on surface-level aspects such as the writing quality of the paper. In contrast, a high-effort reviewer can assess not only the writing quality but also other aspects, such as the novelty of the idea, the soundness of the method, and the validity of the experiments.

Note that our model can also capture a binary effort setting which is assumed in some previous work \cite{miller2005eliciting,gao2016incentivizing}, by assuming $\Lowsig_i = f(x_i) = \textsc{Null}$ for any $x_i\in\mathcal{X}$. In this setting, the agent either receives an informative signal or a $\textsc{Null}$ signal.

\paragraph{Synopsis} 
In some applications, we are able to find a commonly known synopsis $\Synopsis = \synopsis$ of the item $z$. We use $\Sigma$ to denote the space of the synopsis. The synopsis may determine the low-effort signals. For example, in peer review, when a low-effort reviewer writes a review based solely on the introduction of the paper, the paper introduction can be regarded as a commonly known synopsis that fully determines the low-effort signal. 

\begin{definition}[Synopsis-determined Low-effort Signals]\label{def:synopsis-determined}
We say the low-effort signal is synopsis-determined if $\Lowsig_i = g(\synopsis)$ where $g$ is a deterministic function that maps the synopsis to the low-effort signal (Figure~\ref{fig:synopsis}).
\end{definition}

\begin{figure}[!ht]
    \centering
    \includegraphics[width=0.54\linewidth]{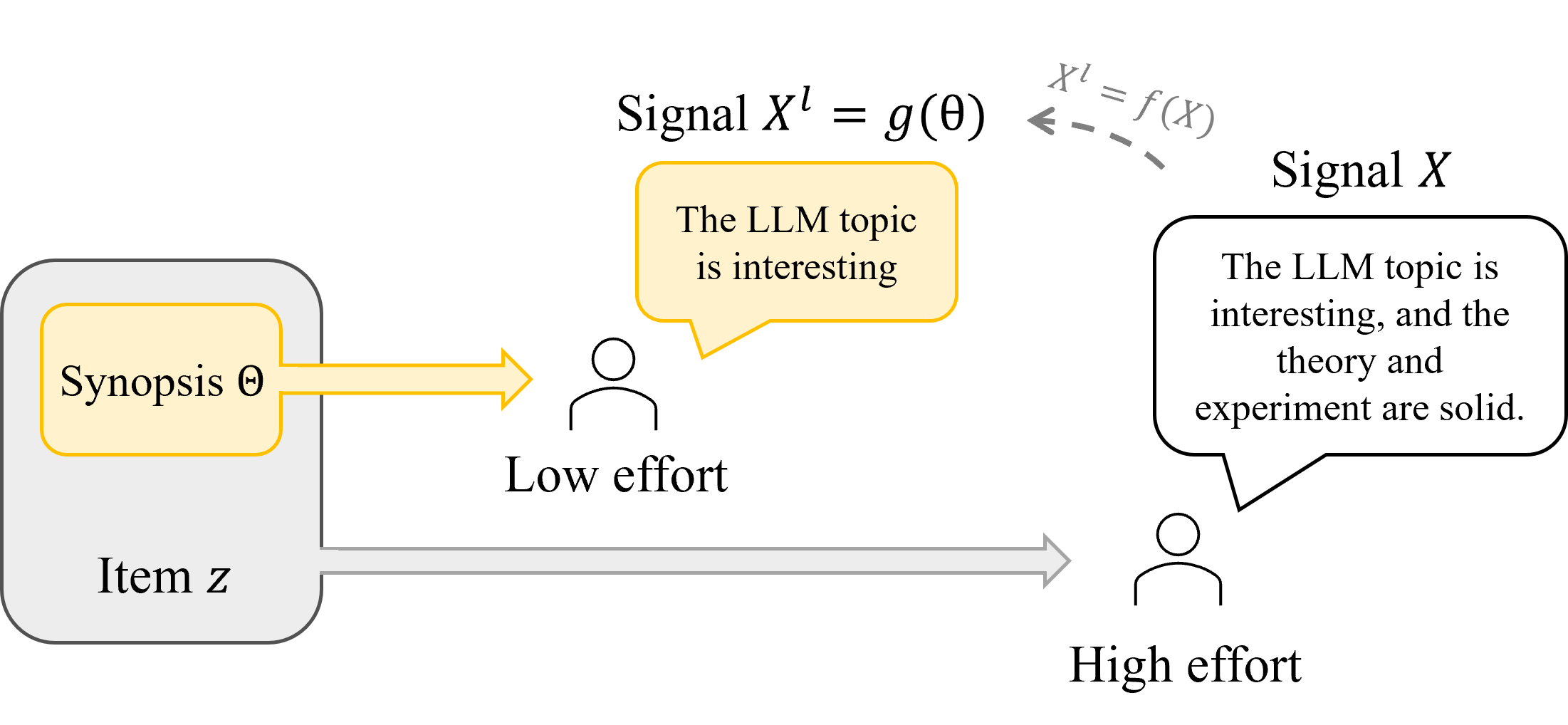}
    \caption{An Example of the Synopsis-determined Low-effort Signals}
    \label{fig:synopsis}
\end{figure}

In some cases, the low-effort signal $\Lowsig_i$ may contain more information about $X_i$ than the synopsis $\synopsis$, and the synopsis $\synopsis$ does not reveal more information about $X_i$ than $\Lowsig_i$. In the peer review example, a low-effort reviewer may write a review based on the whole introduction and a glance at the rest of the paper.

\begin{definition}[Synopsis-covering Low-effort Signals]\label{def:synopsis-covering}
We say the low-effort signal is synopsis-covering if $\Pr[X_i = x \mid \Lowsig_i = \lowsig_i, \Synopsis = \synopsis] = \Pr[X_i = x \mid \Lowsig_i = \lowsig_i]$ for any $x\in \mathcal{X}$.
\end{definition}

Note that Synopsis-determined and Synopsis-covering are not mutually exclusive. We will use these definitions in Corollary~\ref{coro:gap_bound}, but Propositions~\ref{prop:GPPM}, \ref{prop:GSPPM}, \ref{prop:GPPM-potent} and \ref{prop:GSPPM-potent} do not require them.

\paragraph{Peer Prediction Mechanism and Agents' Incentive} 
A peer prediction mechanism $M: \mathcal{X}^{|I|}\rightarrow \mathbb{R}^{|I|}$ takes all agents' reports as input and outputs a \emph{performance score} $s_i$ to each agent $i$, i.e., $s_i = M(\Report_i, \Report_j)$. 
Then, agent $i$ is paid according to a linear function $p_i = \alpha \cdot s_i + \beta$, where $\alpha >0$ and $\beta$ are constant parameters. We assume that the utility of agent $i$ is the difference between her payment and her cost of effort, i.e., $u_i = \alpha \cdot s_i + \beta - c_i$.

Given an item and a mechanism, each agent aims to maximize her expected utility by choosing a reporting strategy $\sigma_i$ and an effort $c_i$. Note that we focus on pure effort strategy, but the reporting strategy $\sigma_i$ can be randomized. Let $(\sigma_i, c_i)$ be agent $i$'s strategy and let $\{(\sigma_i,c_i)\}_{i\in I}$ be the strategy profile of all agents. Under a peer prediction mechanism, an agent's performance score and utility depend on the other agents' strategy. Therefore, we sometimes write agent $i$'s expected utility as a function of the strategy profile of both agents, i.e., $U_i((\sigma_i, c_i), (\sigma_j, c_j))$.

\subsection{Mechanism Design Goal}

We first define the solution concept. 

\begin{definition}
    A strategy profile $\{(\sigma_i,c_i)\}_{i\in I}$ is an $\epsilon$-BNE if, for any agent $i$ and for any alternative strategies $(\sigma_i', c_i')$, we have:
\[U_i((\sigma_i, c_i), (\sigma_j, c_j)) \ge U_i((\sigma_i', c_i'), (\sigma_j, c_j)) - \epsilon.\]
\end{definition}
In other words, no agent can gain more than $\epsilon$ in expected utility by unilaterally deviating from her strategy in an $\epsilon$-BNE.

At a high level, our goal is to design a mechanism that (approximately) maximizes an agent's expected performance score if she exerts high effort and reports truthfully. We call such a mechanism ($\epsilon$-)\emph{potent}.

\begin{definition}[Potent Mechanism]\label{def:potent-mechanism}
A peer prediction mechanism $M$ is $\epsilon$-potent if there exists a linear payment scheme with parameters $\alpha,\beta$, such that exerting high effort ($c_i = c_h$) and reporting truthfully ($\sigma_i=\tau$) is an $\epsilon$-Bayesian Nash equilibrium.
\end{definition}

\subsection{\citet{miller2005eliciting}'s Peer Prediction Mechanism}

\citet{miller2005eliciting} propose the first peer prediction mechanism. We refer to this as the original peer prediction mechanism. The original mechanism scores an agent based on how well her report predicts a randomly selected peer's report. They use the log scoring rule (LSR) \cite{cooke1991experts} to quantify the quality of the prediction.

\begin{definition}[Log Scoring Rule (LSR)]

Given a set of outcomes, $\mathcal{Y}$, and a prediction over the outcomes $p\in \Delta \mathcal{Y}$, the log scoring rule maps the prediction and an outcome $y\in \mathcal{Y}$ to a score $\operatorname{LSR}(p,y) = \log (p(y))$. Furthermore, for $q \in \Delta \mathcal{Y}$, let $\operatorname{LSR}(p,q) = \sum_{y\in\mathcal{Y}} q(y) \operatorname{LSR}(p,y)$ denote the expected log score when the outcome is sampled from the distribution $q$.

\end{definition}

The Log Scoring Rule is \textit{strictly proper} \cite{selten1998axiomatic,gneiting2007strictly}, i.e., \[\operatorname{LSR}(p,q) < \operatorname{LSR}(p,p),~ \forall p,q\in\Delta\mathcal{Y} \text{ and } p\neq q,\] meaning that reporting the true belief of the outcome maximized the score. The idea of \citet{miller2005eliciting}'s peer prediction mechanism is thus very straightforward: scoring agent $i$ based on how well her report can predict her peer's report according to a proper scoring rule (such as the Log Scoring Rule).

\begin{definition}[Original Peer Prediction Mechanism]\label{def:miller}
    Given agent $i$'s report $\report_i$, and the peer agent $j$'s report $\report_j$. The performance score of agent $i$ is 
    \[\operatorname{LSR}(\Pr[\Signal_j \mid \Signal_i = \report_i], \report_j) = \log \Pr[\Signal_j = \report_j \mid \Signal_i = \report_i].\]
\end{definition}

\begin{proposition}[Proposition 1 of \cite{miller2005eliciting}]
   In the binary effort model, if the common prior $\pi$ and $\Pr[X_i\mid Z=z]$ are known, the above mechanism is potent.
\end{proposition}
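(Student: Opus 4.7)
The plan is to verify that some linear payment scheme $\alpha s + \beta$ makes the profile (high effort, truthful) a Bayesian Nash equilibrium. Fixing agent $j$ to play this strategy, I analyze agent $i$'s possible deviations in two branches, corresponding to the binary effort model's two effort levels: (a) conditional on high effort, is truth-telling a best reporting strategy; and (b) does (high effort, truthful) outperform any no-effort deviation by at least $c_h / \alpha$ in expected score? Together these pin down the needed $\alpha$.

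For (a), given agent $i$ exerts high effort and observes $\Signal_i = \signal_i$, her expected score for reporting $\report_i$ (with agent $j$ reporting truthfully) is
\[\sum_{\signal_j} \Pr[\Signal_j = \signal_j \mid \Signal_i = \signal_i]\,\log \Pr[\Signal_j = \signal_j \mid \Signal_i = \report_i] = \operatorname{LSR}\bigl(\Pr[\Signal_j \mid \Signal_i = \report_i],\, \Pr[\Signal_j \mid \Signal_i = \signal_i]\bigr).\]
Strict propriety of the log scoring rule uniquely maximizes this at $\Pr[\Signal_j \mid \Signal_i = \report_i] = \Pr[\Signal_j \mid \Signal_i = \signal_i]$, and stochastic relevance then forces $\report_i = \signal_i$. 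Thus truth-telling is the unique best reporting response under high effort, and averaging over $\signal_i$ yields the resulting expected score $S_H := -H(\Signal_j \mid \Signal_i)$.

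For (b), under no effort agent $i$ observes $\textsc{Null}$ and picks a (possibly mixed) report independent of any signal. For any deterministic report $\report_i$, her expected score equals
\[\sum_{\signal_j} \Pr[\Signal_j = \signal_j]\,\log \Pr[\Signal_j = \signal_j \mid \Signal_i = \report_i] = -H(\Signal_j) - D_{\mathrm{KL}}\bigl(\Pr[\Signal_j] \,\|\, \Pr[\Signal_j \mid \Signal_i = \report_i]\bigr) \le -H(\Signal_j),\]
and randomizing over reports only averages such values. So $S_H$ minus the best no-effort score is at least $H(\Signal_j) - H(\Signal_j \mid \Signal_i) = I(\Signal_i; \Signal_j) > 0$, where positivity follows from stochastic relevance (independence would collapse the posteriors $\Pr[\Signal_j \mid \Signal_i = \signal_i]$ across $\signal_i$). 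Choosing any $\alpha \ge c_h / I(\Signal_i; \Signal_j)$ together with any $\beta$ then makes the utility gap $\alpha(S_H - S_0^{\star}) - c_h \ge 0$, so (high effort, truthful) is a best response and the mechanism is potent in the sense of Definition~\ref{def:potent-mechanism} with $\epsilon = 0$.

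The main care point I anticipate is the no-effort branch: one must rule out that some mixed report distribution beats all deterministic ones (handled by the pointwise upper bound $-H(\Signal_j)$ above, which survives averaging) and one implicitly needs the common prior $\pi$ together with $\Pr[\Signal_i \mid \Itemz = \itemz]$ to place full support on $\mathcal{X}$, so that every $\log \Pr[\Signal_j = \signal_j \mid \Signal_i = \report_i]$ is finite; otherwise the log scoring rule can diverge to $-\infty$ and the gap computation becomes ill-defined. Everything else reduces to standard properties of entropy, mutual information, and strict propriety.
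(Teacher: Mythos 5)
Your argument is correct, and the paper does not actually prove this proposition itself --- it is quoted as Proposition~1 of \citet{miller2005eliciting} --- but your proof is precisely the $\epsilon = 0$, binary-effort specialization of the paper's own proofs of Propositions~\ref{prop:GPPM} and~\ref{prop:GPPM-potent} (Appendix~\ref{app:proofs}): lower-bound the truthful high-effort score by $-H(\Signal_j \mid \Signal_i)$ via propriety of LSR, upper-bound any deviating strategy's score by the corresponding negative conditional entropy, identify the gap with $I(\Signal_i;\Signal_j)$ (positive by stochastic relevance), and choose $\alpha \geq c_h / I(\Signal_i;\Signal_j)$ to absorb the effort cost. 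Your additional care point about full support is a sensible technical note that the paper leaves implicit.
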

Intuitively, this mechanism incentivizes effort and truth-telling because only the prediction induced by the informative true signal can maximize the expected score under a strictly proper scoring rule.

\section{Peer Prediction Mechanisms for Textual Signals}
\label{sec:mechanisms}

As discussed, the original peer prediction mechanism requires knowledge about the prior distribution over the items and signals. However, the conditional distribution over signals $\Pr[\Signal_j = \signal_j \mid \Signal_i = \signal_i]$ can often be complex and difficult to learn from historical data, especially when eliciting textual signals. To address the textual settings, we propose leveraging Large Language Models (LLMs) to create an estimator of this distribution. We design two new mechanisms --- the \textsc{Generative Peer Prediction Mechanism (GPPM)}  and \textsc{Generative Synopsis Peer Prediction Mechanism (GSPPM)}. Assuming a sufficiently accurate LLM estimator, we theoretically prove that both mechanisms are (approximately) potent and show that GSPPM can better differentiate exerting high effort from exerting low effort by reducing the difference between exerting low effort and exerting no effort.

\subsection{Main Assumption: the LLM-Prediction}
We first integrate LLMs into the information elicitation model and discuss the key assumption that builds the theoretical foundations of our mechanisms.
Given a prompt $\psi$ and a response $\report$, a pre-trained LLM can produce a prediction indicating the likelihood of the response being $\report$. 
We denote the distribution of responses generated by an LLM with a prompt $\psi$ as $\Pr_{\operatorname{LLM}(\psi)}$ and refer to it as the \textit{LLM-prediction}. 
Thus, $\Pr_{\LLM(\psi)}[\report]$ denotes the probability that response $\report$ is predicted by $\operatorname{LLM}$ via prompt $\psi$. 

Sometimes, the prompt itself depends on some input $y$ (e.g., the review from a different agent), in which case we write the prompt as $\psi(y)$. If the input to the prompt itself is a random variable $Y$
, (which it often is as in Figure~\ref{fig:prompt-eg}), 
to be consistent with the classic information elicitation model, we use $\Pr_{\LLM(\psi)}[\Signal_j = \cdot\mid Y = y]$ to denote the LLM-prediction $\Pr_{\LLM(\psi(y))}[\cdot]$. We are particularly interested in two predictions. First, $\Pr_{\LLM(\psi)}[\Signal_j = \report_j \mid \Signal_i = \report_i]$ is the LLM-prediction of agent $j$'s report while integrating agent $i$'s report into the prompt. Second, $\Pr_{\LLM(\psi)}[\Signal_j = \report_j \mid \Signal_i = \report_i, \Synopsis = \synopsis]$ is the LLM-prediction of agent $j$'s report given agent $i$'s report $\report_i$ and a commonly known synopsis $\synopsis$. %, which implies the low-effort signal $x^l = g(\synopsis)$.
 
Our theoretical results require the following (strong) assumption about the fidelity of the LLM-prediction to that of the model:

\begin{assumption}[LLM-Prediction]\label{assum:llm-prior}
    We assume that for an information elicitation task of interest, there exist sets of prompts $\{\psi(\signal)\}_{\signal \in \mathcal{X}}$, 
    $\{\psi'(\signal, \synopsis)\}_{\signal \in \mathcal{X},\synopsis \in \Sigma}$, and $\epsilon, \epsilon' \ge 0$ such that for any pair of signal $\signal_i\in \mathcal{X}$ and synopsis $\synopsis\in \Sigma$:  
    \[D_{KL}\left[\Pr[\Signal_j = \cdot \mid \Signal_i = \signal_i] ~\Big\|~ \Pr_{\LLM(\psi)}[\Signal_j = \cdot \mid \Signal_i = \signal_i]  \right] \le \epsilon,\] 
    \[D_{KL}\left[\Pr[\Signal_j = \cdot \mid \Signal_i = \signal_i, \Synopsis = \synopsis] ~\Big\|~ \Pr_{\LLM(\psi')}[\Signal_j = \cdot \mid \Signal_i = \signal_i, \Synopsis = \synopsis]  \right] \le \epsilon',\] 
    where $D_{KL}[P~\|~Q]$ denotes the KL-divergence\footnote{The KL-divergence between two distributions over the same probability space is $D_{\text{KL}}(P \| Q) = \sum_{x} P(x) \log \left(P(x)/Q(x)\right).$} between two distributions. Furthermore, this fact is common knowledge for all agents. 
\end{assumption}

\Cref{assum:llm-prior} implies that the prediction of an (idealized) LLM can accurately estimate the underlying information structure of the high-effort signals, allowing us to leverage the vast knowledge embedded within LLMs to predict the probability of a new review $\signal_j$ given an existing review $\signal_i$. Such an LLM provides a data-driven way of computing this distribution. In \Cref{sec:method}, we will detail various \emph{implementations} for computing an approximation to $\Pr_{\LLM(\psi(\signal_i))}$.

We now turn toward defining our two mechanisms by assuming that we have access to some way of computing the LLM-prediction.

\subsection{The Generative Peer Prediction Mechanism (GPPM)}

To define our \textsc{Generative Peer Prediction Mechanism (GPPM)}, we combine the idea of an LLM prediction with Miller et al.'s mechanism in the textual setting.

\begin{definition}[\textsc{Generative Peer Prediction Mechanism (GPPM)}]\label{def:GPPM} 
    Given the peer's report $\report_j$, the performance score of agent $i$ with report $\report_i$ is 
    \[\operatorname{LSR}(\Pr_{\LLM(\psi)}[\Signal_j \mid \Signal_i = \report_i], \report_j) = \log \Pr_{\LLM(\psi)}[\Signal_j = \report_j \mid \Signal_i = \report_i]\]
\end{definition}

If \Cref{assum:llm-prior} holds and the peer's report $\report_j$ is of high effort and truthful, \textsc{GPPM} will be able to successfully differentiate between different effort levels. We defer the formal theory to \Cref{subsec:theory} while providing the high-level intuitions of why \textsc{GPPM} is potent here.

First, GPPM should be able to reward an effortful signal more than a no-effort signal. This is because even a low-effort signal $\Signal^l_i$ helps predict some of the terms in $\Signal_j$. 
For example, if $\Signal^l_i$ merely makes it clear that the reviewed paper proposes a new machine learning algorithm, then the words ``loss function'' and ``training'' will (correctly) be much more likely.

In addition, we anticipate that the GPPM should be able to reward a high-effort signal more than a low-effort signal. Intuitively, this is because some insights/critiques can only be predicted by high-effort signals.  
Consider the peer review example again, a low-effort signal may report common features of a machine learning paper but could overlook specific details of the particular paper such as an elegant proof and a potential broader impact of the method.

\subsection{The Generative Synopsis Peer Prediction Mechanism (GSPPM)}

Intuitively, the above \textsc{GPPM} pays both high and low-effort signals. However, in some applications where low-effort signals may be easily generated by LLMs, such as academic peer review shown in \cref{fig:example-llm-vs-human}, we only want the high-effort signals. This raises the question: Is it possible to further penalize the reporting of low-effort cheap signals? 

We propose the \textsc{Generative Synopsis Peer Prediction Mechanism (GSPPM)}. The idea is to condition the LLM-prediction on the commonly known synopsis, such as a summary of the reviewed item. 
As the low-effort signal can be fully characterized by the synopsis, reporting the low-effort signal brings no extra information on predicting agent $j$'s report when the synopsis is conditioned out.
In the above peer review example, if the abstract of the reviewed paper has already been inputted into the LLM as prompts, then a low-effort signal simply reiterating the abstract's contents would be redundant.

\begin{definition}[Generative Synopsis Peer Prediction Mechanism (GSPPM)]\label{def:GSPPM}
Suppose we have a synopsis $\synopsis$ of the item being reviewed. Given the peer's report being $\report_j$, the performance score for agent $i$ with report $\report_i$ is
   \[\operatorname{LSR}(\Pr_{\LLM}[\Signal_j \mid \Signal_i = \report_i, \Synopsis = \synopsis], \report_j) = \log \Pr_{\LLM}[\Signal_j = \report_j \mid \Signal_i = \report_i, \Synopsis = \synopsis].\]
\end{definition}

In practice, we hypothesize that GSPPM can generally decrease the scores of low-effort signals to that of no-effort signals. This is because the low-effort signal is unlikely to offer additional insights beyond those already present in the synopsis.  

For the same reason, we hypothesize that GSPPM can outperform GPPM in distinguishing between low-effort and high-effort signals. Because the scores of low-effort signals are pushed closer to the baseline of no-effort, the reduction in entropy by providing more specific insights that appear in both $\Signal_i$ and  $\Signal_j$ should be more salient. 
In other words, we hypothesize that it will improve the signal-to-noise ratio by making insights more prominent and vocabulary alignment less important.  

This intuition is very similar to that in ~\citet{kong2018eliciting}, which also used conditioning to motivate high-effort signals above low-effort signals.

\subsection{Theoretical Results: GPPM and GSPPM are $\varepsilon$-Potent} \label{subsec:theory}

Here, we provide formal theoretical guarantees of our mechanisms under \Cref{assum:llm-prior}. We first present several important notations before introducing our propositions.

We use $I(X_i;X_j)$ to denote the Shannon mutual information \cite{shannon1948mathematical} between two signals, \( X_i \) and \( X_j \). It provides a quantitative measure of the information shared between them.
\begin{small}\begin{align*}
    I(X_i;X_j) =& \sum_{x_i,x_j \in \mathcal{X}} \Pr[X_i=x_i, X_j=x_j] \log \frac{\Pr[X_i=x_i, X_j=x_j]}{\Pr[X_i=x_i]\Pr[X_j=x_j]} 
\end{align*}\end{small}
Furthermore, we use \( I(X_i;X_j \mid X_i^l) \) to denote the conditional mutual information.
\begin{small}\begin{align*}
& I(X_i;X_j \mid X_i^l) \\
= & \sum_{X^l_i \in \mathcal{X}} \Pr[X^l_i = x^l_i] \sum_{x_i,x_j \in \mathcal{X}} \Pr[X_i=x_i, X_j=x_j \mid X^l_i = x^l_i] \log \frac{\Pr[X_j=x_j,X_i=x_i \mid  X^l_i = x^l_i]}{\Pr[X_i=x_i \mid X^l_i = x^l_i]\Pr[X_j=x_j \mid X^l_i = x^l_i]}
\end{align*}\end{small}

Then, we have the following propositions for GPPM and GSPPM respectively.

\begin{restatable}{proposition}{propGPPM}\label{prop:GPPM}
For the \textsc{Generative Peer Prediction Mechanism (GPPM)}, when \Cref{assum:llm-prior} holds with parameter $\epsilon\geq0$, for any agent $i$, given the peer agent $j$ exerting high effort and reporting truthfully, any untruthful reporting strategy $\hat{\sigma}\neq \tau$ or effort $c_i\in\{0,c_l\}$ (no-effort or low-effort) implying signal $\textsc{Null}$ or $\Lowsig_i$ won't bring more than $\epsilon$ score increase. Specifically, we have
\begin{align*}
    \mathbb{E}[\textsc{GPPM}(\tau(\Signal_i), \Signal_j)] - \mathbb{E}[\textsc{GPPM}(\hat{\sigma}(\Signal_i), \Signal_j)] &> - \epsilon. \tag{High effort, truthful v.s. High effort, untruthful}\\
    \mathbb{E}[\textsc{GPPM}(\tau(\Signal_i), \Signal_j)] - \mathbb{E}[\textsc{GPPM}(\strategy(\Signal^l_i), \Signal_j)] &\geq I(\Signal_i;\Signal_j\mid \Lowsig_i) - \epsilon. \tag{High effort, truthful v.s. Low effort, either truthful or untruthful} \\
    \mathbb{E}[\textsc{GPPM}(\tau(\Signal_i), \Signal_j)] - \mathbb{E}[\textsc{GPPM}(\strategy(\textsc{Null}), \Signal_j)] &\geq I(\Signal_i;\Signal_j) - \epsilon \tag{High effort, truthful v.s. No effort, either truthful or untruthful}. 
\end{align*}

Note that the mutual information $I(\Signal_i;\Signal_j\mid \Lowsig_i)$ and $I(\Signal_i;\Signal_j)$ are non-negative.
\end{restatable}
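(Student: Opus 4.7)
The plan is to derive a single master inequality bounding the expected-score gap between the truthful high-effort strategy and an arbitrary alternative, then read off the three claims by specialization. Write $q_y(\cdot) := \Pr_{\LLM(\psi)}[\Signal_j = \cdot \mid \Signal_i = y]$ for the LLM prediction produced when the prompt is instantiated with input $y$. When agent $i$ uses a (possibly randomized) effort-and-strategy choice producing report $\Report_i$ while the peer truthfully reports her high-effort signal $\Signal_j$, the expected GPPM score can be written as $\mathbb{E}[\log q_{\Report_i}(\Signal_j)]$, with the expectation taken under the true joint distribution of $(\Signal_i, \Signal_j, \Report_i)$ induced by those choices.

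For any report $\Report_i$, conditioning on its realization and applying the identity $\mathbb{E}_{Y \sim p}[\log q(Y)] = -H(p) - D_{KL}(p\|q)$ gives
\[
\mathbb{E}[\log q_{\Report_i}(\Signal_j)] = -H(\Signal_j \mid \Report_i) - \mathbb{E}_{\Report_i}\!\bigl[D_{KL}\bigl(\Pr[\Signal_j \mid \Report_i] \,\big\|\, q_{\Report_i}\bigr)\bigr] \leq -H(\Signal_j \mid \Report_i),
\]
where the inequality is just non-negativity of KL. Specializing to the truthful high-effort report $\Report_i = \tau(\Signal_i) = \Signal_i$ and invoking Assumption~\ref{assum:llm-prior}, which bounds each summand $D_{KL}(\Pr[\Signal_j \mid \Signal_i = x] \,\|\, q_x) \leq \epsilon$, instead yields the matching lower bound $-H(\Signal_j \mid \Signal_i) - \epsilon$. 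Subtracting produces the master inequality
\[
\mathbb{E}[\textsc{GPPM}(\tau(\Signal_i), \Signal_j)] - \mathbb{E}[\textsc{GPPM}(\Report_i, \Signal_j)] \geq H(\Signal_j \mid \Report_i) - H(\Signal_j \mid \Signal_i) - \epsilon.
\]

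It then suffices to lower bound $H(\Signal_j \mid \Report_i)$ in each of the three cases. If $\Report_i = \hat{\sigma}(\Signal_i)$ is any randomized function of $\Signal_i$, then the strategy's randomness is independent of $\Signal_j$ given $\Signal_i$, so $H(\Signal_j \mid \Report_i) \geq H(\Signal_j \mid \Report_i, \Signal_i) = H(\Signal_j \mid \Signal_i)$, which recovers the first bound of $-\epsilon$. If $\Report_i = \strategy(\Lowsig_i)$ is any low-effort strategy, the analogous argument with $\Lowsig_i$ in place of $\Signal_i$ gives $H(\Signal_j \mid \Report_i) \geq H(\Signal_j \mid \Lowsig_i)$; since $\Lowsig_i = f(\Signal_i)$ is deterministic one has $H(\Signal_j \mid \Signal_i, \Lowsig_i) = H(\Signal_j \mid \Signal_i)$, so $H(\Signal_j \mid \Lowsig_i) - H(\Signal_j \mid \Signal_i) = I(\Signal_i;\Signal_j \mid \Lowsig_i)$. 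Finally, if $\Report_i = \strategy(\textsc{Null})$ then $\Report_i$ is independent of $\Signal_j$, so $H(\Signal_j \mid \Report_i) = H(\Signal_j)$ and the gap becomes $I(\Signal_i;\Signal_j) - \epsilon$.

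The main conceptual obstacle is keeping straight two subtly different conditionings: the LLM treats its input $y$ as though $y$ were a genuine high-effort signal and thus approximates $\Pr[\Signal_j \mid \Signal_i = y]$, whereas under a manipulation the true conditional of $\Signal_j$ given the \emph{report} $\Report_i = y$ is $\Pr[\Signal_j \mid \Report_i = y]$, which can differ considerably. The resolution is to invoke Assumption~\ref{assum:llm-prior} only on the truthful side---where the two conditionings coincide---and to discard the LLM-dependent KL term on the alternative side by non-negativity, so that the $\epsilon$ approximation error enters the final bound only once and in the correct direction.
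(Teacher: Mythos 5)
Your proof is correct and follows essentially the same structure as the paper's: upper-bound the alternative's expected score via properness of the log score (equivalently, non-negativity of KL), lower-bound the truthful high-effort score via Assumption~\ref{assum:llm-prior}, and read off the entropy gaps as mutual informations. Your organization into a single master inequality through $H(\Signal_j \mid \Report_i)$ followed by a chain-rule / data-processing argument ($H(\Signal_j \mid \Report_i) \ge H(\Signal_j \mid \Report_i, \Signal_i) = H(\Signal_j \mid \Signal_i)$, and analogously with $\Lowsig_i$) is somewhat tidier than the paper's case-by-case computation, and you correctly isolate the key subtlety: the LLM prediction $q_y$ always approximates $\Pr[\Signal_j \mid \Signal_i = y]$, not $\Pr[\Signal_j \mid \Report_i = y]$, so the assumption should only be invoked on the truthful side.

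The one place you fall slightly short of the stated result is the first inequality, which the proposition asserts as strict ($> -\epsilon$) but which you establish only as $\ge -\epsilon$: your step $H(\Signal_j \mid \Report_i) \ge H(\Signal_j \mid \Signal_i)$ is an equality whenever $\hat\sigma$ is an injective deterministic map, and your bound discards exactly the term — $\mathbb{E}\bigl[D_{KL}\bigl(\Pr[\Signal_j \mid \Report_i] \,\|\, q_{\Report_i}\bigr)\bigr]$ — where the slack lives. The paper recovers strictness by combining stochastic relevance with strict properness of the LSR, noting that some $x_i$ has $\hat\sigma(x_i)\neq x_i$ with positive probability; you gesture at this in your final paragraph but do not close it. To match the paper you would argue that for such $x_i$, stochastic relevance gives $\Pr[\Signal_j\mid\Signal_i=\hat\sigma(x_i)] \neq \Pr[\Signal_j\mid\Signal_i=x_i]$, and (assuming the approximation error is small enough that $q_{\hat\sigma(x_i)}$ cannot coincide with $\Pr[\Signal_j\mid\Signal_i=x_i]$) the dropped KL term is strictly positive on that event. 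This is a minor point — the potency propositions downstream use only the non-strict bound — but it is a genuine gap relative to what the proposition literally claims.
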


\begin{restatable}{proposition}{propGSPPM}\label{prop:GSPPM}
For the \textsc{Generative Synopsis Peer Prediction Mechanism (GSPPM)}, when \Cref{assum:llm-prior} holds with parameter $\epsilon'\geq 0$, for any agent $i$, given the peer agent $j$ exerting high effort and reporting truthfully,  any untruthful reporting strategy $\hat{\sigma}\neq \tau$ or effort $c_i\in\{0,c_l\}$ (no-effort or low-effort) implying signal $\textsc{Null}$ or $\Lowsig_i$ won't bring more than $\epsilon$ score increase. Specifically, we have
\begin{align*}
    \mathbb{E}[\textsc{GSPPM}(\tau(\Signal_i), \Signal_j)] - \mathbb{E}[\textsc{GSPPM}(\hat{\sigma}(\Signal_i), \Signal_j)] &> - \epsilon'. \tag{High effort, truthful v.s. High effort, untruthful}\\
    \mathbb{E}[\textsc{GSPPM}(\tau(\Signal_i), \Signal_j)] - \mathbb{E}[\textsc{GSPPM}(\strategy(\Signal^l_i), \Signal_j)] &\geq I(\Signal_i;\Signal_j\mid \Synopsis ,\Lowsig_i) - \epsilon'. \tag{High effort, truthful v.s. Low effort, either truthful or untruthful} \\
    \mathbb{E}[\textsc{GSPPM}(\tau(\Signal_i), \Signal_j)] - \mathbb{E}[\textsc{GSPPM}(\strategy(\textsc{Null}), \Signal_j)] &\geq I(\Signal_i;\Signal_j \mid \Synopsis) - \epsilon' \tag{High effort, truthful v.s. No effort, either truthful or untruthful}. 
\end{align*}

Note that the mutual information $I(\Signal_i;\Signal_j\mid \Synopsis, \Lowsig_i)$ and $I(\Signal_i;\Signal_j\mid \Synopsis)$ are non-negative.
\end{restatable}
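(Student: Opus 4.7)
The plan is to mirror the argument sketched for \Cref{prop:GPPM} with the synopsis $\Synopsis$ carried through as an extra conditioning variable everywhere. Two tools do nearly all the work: the cross-entropy decomposition $\E_{Y\sim P}[\log Q(Y)] = -H(P) - D_{KL}(P\,\|\,Q)$ and the data-processing inequality applied to the agent's reporting strategy. \Cref{assum:llm-prior} bounds the KL term in the cross-entropy decomposition by $\epsilon'$ on the truthful high-effort side, while non-negativity of KL is enough on the deviating side, so every comparison reduces to a difference of conditional Shannon entropies whose gap coincides with the target mutual information quantity.

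Concretely, I would first lower-bound the truthful high-effort score by
\[\E[\textsc{GSPPM}(\tau(\Signal_i),\Signal_j)] = -H(\Signal_j\mid \Signal_i,\Synopsis) - \E_{\Signal_i,\Synopsis}\bigl[D_{KL}\bigl(\Pr[\Signal_j\mid \Signal_i,\Synopsis]\,\|\,\Pr_{\LLM(\psi')}[\Signal_j\mid \Signal_i,\Synopsis]\bigr)\bigr] \geq -H(\Signal_j\mid \Signal_i,\Synopsis) - \epsilon'.\]
For any alternative strategy producing a report $\Report_i\in\mathcal{X}$ from $\Signal_i$, $\Lowsig_i$, or $\textsc{Null}$, the same decomposition together with $D_{KL}\geq 0$ gives the matching upper bound $\E[\log\Pr_{\LLM(\psi')}[\Signal_j\mid \Report_i,\Synopsis]] \leq -H(\Signal_j\mid \Report_i,\Synopsis)$. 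A data-processing step then yields $H(\Signal_j\mid \Report_i,\Synopsis) \geq H(\Signal_j\mid \Signal_i,\Synopsis)$ in the high-effort case, $H(\Signal_j\mid \Report_i,\Synopsis) \geq H(\Signal_j\mid \Lowsig_i,\Synopsis)$ in the low-effort case (since $\Report_i$ is a randomization of $\Lowsig_i$ conditionally independent of $(\Signal_j,\Synopsis)$ given $\Lowsig_i$), and $H(\Signal_j\mid \Report_i,\Synopsis) = H(\Signal_j\mid \Synopsis)$ in the no-effort case (since $\Report_i$ is fully independent of $(\Signal_j,\Synopsis)$). Subtracting the upper bound from the lower bound and recognizing the resulting entropy gaps as $0$, $I(\Signal_i;\Signal_j\mid \Synopsis,\Lowsig_i)$, and $I(\Signal_i;\Signal_j\mid \Synopsis)$ respectively delivers the three stated inequalities.

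The main obstacle I anticipate is bookkeeping the joint distribution in the low- and no-effort cases: the law of $(\Report_i,\Signal_j,\Synopsis)$ is induced by the agent observing only $\Lowsig_i$ (resp.\ $\textsc{Null}$), so $\Report_i$ is \emph{not} a bona fide high-effort signal and nothing in \Cref{assum:llm-prior} forces the LLM to accurately predict $\Signal_j$ from $\Report_i$. This is resolved by using only KL non-negativity on the deviating side, so no LLM guarantee is needed there, together with the conditional independence of $\Report_i$ from $(\Signal_j,\Synopsis)$ given its input, which is exactly what activates the three data-processing inequalities above. A secondary, milder concern is that \Cref{assum:llm-prior} is stated only for genuine signals $x_i\in\mathcal{X}$; fortunately this is precisely the case used to lower-bound the truthful high-effort score, so the asymmetric use of the assumption (invoked only for $\tau(\Signal_i)$, never for deviating reports) aligns perfectly with what the argument needs.
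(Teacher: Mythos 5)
Your proposal matches the paper's proof in all essential respects: both lower-bound the truthful high-effort score by $-H(\Signal_j\mid \Signal_i,\Synopsis)-\epsilon'$ via \Cref{assum:llm-prior}, both upper-bound the deviating scores by $-H(\Signal_j\mid \Lowsig_i,\Synopsis)$ (resp.\ $-H(\Signal_j\mid \Synopsis)$) via propriety of the log scoring rule, and both identify the resulting gaps as the stated conditional mutual information quantities. The only stylistic difference is on the deviating side: you condition on the \emph{report} to obtain $-H(\Signal_j\mid \Report_i,\Synopsis)$ and then invoke a data-processing inequality to reach $-H(\Signal_j\mid \Lowsig_i,\Synopsis)$, while the paper conditions directly on the agent's observation $(\Lowsig_i,\Synopsis)$ and applies propriety in one step; these are equivalent since $\Report_i$ is a (randomized) function of the observation and is conditionally independent of $\Signal_j$ given it.

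There is one genuine, if small, gap: for the high-effort/untruthful case you record the entropy gap as $0$, which yields only the non-strict $\geq -\epsilon'$, whereas the proposition asserts the strict $>-\epsilon'$. The paper obtains strictness by additionally invoking stochastic relevance together with \emph{strict} propriety of the log scoring rule: whenever $\hat{\sigma}\neq\tau$ there exists a signal realization of positive probability with $\hat{\sigma}(\signal_i)\neq\signal_i$, for which the prediction $\Pr_{\LLM}[\Signal_j\mid\Signal_i=\hat{\sigma}(\signal_i),\Synopsis=\synopsis]$ differs from the true posterior $\Pr[\Signal_j\mid\Signal_i=\signal_i,\Synopsis=\synopsis]$, so the propriety inequality is strict at that realization and hence in expectation. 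You should add this step to close the first bound; the other two bounds are complete as written.
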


To prove the above results, we analyze agent $i$'s expected scores obtained by each strategy, given that agent $j$ exerts high effort and reports truthfully. When agent $i$ exerts a high effort and reports truthfully, we provide a lower bound of her expected score based on the LLM-Prediction Assumption (\Cref{assum:llm-prior}), which is the negative entropy of agent $j$'s high-effort signal, conditioning on agent $i$'s high-effort signal. For other strategies, we provide an upper bound of agent $i$'s expected score based on the fact that the log scoring rule is proper. The expected score will be at most the the negative entropy of agent $j$'s high-effort signal, conditioning on agent $i$'s signal. Finally, because the high-effort signal is more informative than the low-effort signal, we prove that exerting high effort and reporting truthfully provides approximately the highest expected score. The gap between the negative conditional entropy can be interpreted as mutual information, as stated in the propositions. We defer the proof to \Cref{app:proofs}. 

We now discuss the potent properties of the GPPM and GSPPM. 
Intuitively, if a mechanism can lead to a positive gap between the performance scores, by linearly rescaling the performance scores as payments with parameters $\alpha$ and $\beta$, it can motivate effort and truth-telling as long as the gap between exerting high effort and exerting low (or zero) effort overcomes the gap between the cost of effort, and consequently, it is potent. 

Formally, for the approximately potent properties, we have the following propositions. Again, we defer the proof to \Cref{app:proofs}. 

\begin{restatable}{proposition}{propGPPMpotent}\label{prop:GPPM-potent}

For the \textsc{Generative Peer Prediction Mechanism (GPPM)}, when \Cref{assum:llm-prior} holds with parameter $\epsilon\geq 0$. When $I(X_i;X_j \mid \Lowsig_i)> 0$, GPPM is $\alpha\epsilon$-potent, where
\[\alpha = \max\left(\frac{c_h-c_l}{ I(\Signal_i;\Signal_j\mid \Lowsig_i)}, \frac{c_h}{I(\Signal_i;\Signal_j)}\right).\] When $\epsilon=0$, GPPM is potent. 

\end{restatable}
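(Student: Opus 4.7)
The plan is to obtain Proposition \ref{prop:GPPM-potent} as a direct consequence of the three score gaps already established in Proposition \ref{prop:GPPM}, combined with an appropriate choice of the linear rescaling $(\alpha,\beta)$ that converts performance scores into payments. Fix any agent $i$, assume agent $j$ plays the target strategy (high effort, truthful), and let $(\strategy', c_i')$ be any alternative strategy available to $i$. Since $p_i = \alpha\, s_i + \beta$, the utility gap from deviating is
\[U_i((\tau, c_h), (\tau, c_h)) - U_i((\strategy', c_i'), (\tau, c_h)) = \alpha\bigl(\mathbb{E}[\textsc{GPPM}(\tau(\Signal_i),\Signal_j)] - \mathbb{E}[\textsc{GPPM}(\strategy'(\cdot),\Signal_j)]\bigr) - (c_h - c_i'),\]
so the $\epsilon'$-BNE condition with $\epsilon' = \alpha\epsilon$ reduces to lower-bounding this quantity by $-\alpha\epsilon$ uniformly in $(\strategy', c_i')$.

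First I would split the deviations into the three cases of Proposition \ref{prop:GPPM}. For a deviation that keeps high effort but reports untruthfully ($c_i' = c_h$, $\strategy' \neq \tau$), the cost difference vanishes and the first bound of Proposition \ref{prop:GPPM} gives a score gap strictly greater than $-\epsilon$, so the utility gap exceeds $-\alpha\epsilon$ for any $\alpha > 0$. For a deviation to low effort with any reporting strategy, the score gap is at least $I(\Signal_i;\Signal_j\mid \Lowsig_i) - \epsilon$ and the cost saving is $c_h - c_l$; the utility gap is therefore at least $\alpha\, I(\Signal_i;\Signal_j\mid \Lowsig_i) - \alpha\epsilon - (c_h - c_l)$, and forcing this to be $\geq -\alpha\epsilon$ gives $\alpha \geq (c_h - c_l)/I(\Signal_i;\Signal_j\mid \Lowsig_i)$. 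The no-effort deviation analogously requires $\alpha \geq c_h/I(\Signal_i;\Signal_j)$.

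Setting $\alpha$ equal to the maximum of these two thresholds (and $\beta$ arbitrary, e.g., $\beta = 0$) simultaneously bounds every class of deviation by at most $\alpha\epsilon$ in expected utility, so high effort and truth-telling form an $\alpha\epsilon$-BNE and GPPM is $\alpha\epsilon$-potent; the $\epsilon = 0$ case follows immediately from the same inequalities. The one subtlety worth checking, and essentially the only obstacle, is that both denominators defining $\alpha$ are strictly positive so that $\alpha$ is finite. The first is positive by hypothesis. For the second I would invoke the chain rule $I(\Signal_i;\Signal_j) = I(\Lowsig_i;\Signal_j) + I(\Signal_i;\Signal_j \mid \Lowsig_i)$, which applies because $\Lowsig_i = f(\Signal_i)$ is a deterministic function of $\Signal_i$; both summands are non-negative and the second is $> 0$ by hypothesis, so $I(\Signal_i;\Signal_j) > 0$ and $\alpha$ is well-defined.
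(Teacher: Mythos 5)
Your proof is correct and follows essentially the same route as the paper: split deviations into the three cases handled by Proposition~\ref{prop:GPPM}, translate each score gap into a utility gap via $p_i = \alpha s_i + \beta$, and pick $\alpha$ large enough that the effort-cost savings are absorbed. The one thing you add that the paper's proof leaves implicit is the explicit check that $I(\Signal_i;\Signal_j) > 0$ (so $\alpha$ is finite); your chain-rule argument $I(\Signal_i;\Signal_j) = I(\Lowsig_i;\Signal_j) + I(\Signal_i;\Signal_j\mid\Lowsig_i)$, valid because $\Lowsig_i$ is a deterministic function of $\Signal_i$, matches the inequality $I(\Signal_i;\Signal_j) \geq I(\Signal_i;\Signal_j\mid\Lowsig_i)$ that the paper establishes inside the proof of Proposition~\ref{prop:GPPM}, so this is a welcome but minor tidying rather than a new idea.
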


\begin{restatable}{proposition}{propGSPPMpotent}\label{prop:GSPPM-potent}

For the \textsc{Generative Synopsis Peer Prediction Mechanism (GSPPM)}, when \Cref{assum:llm-prior} holds with parameter $\epsilon'\geq 0$. When $I(X_i;X_j \mid \Synopsis,\Lowsig_i)> 0$, GSPPM is $\alpha\epsilon'$-potent, where
\[\alpha = \max\left(\frac{c_h-c_l}{ I(\Signal_i;\Signal_j\mid \Synopsis,\Lowsig_i)}, \frac{c_h}{I(\Signal_i;\Signal_j \mid \Synopsis)}\right).\] When $\epsilon'=0$, GSPPM is potent. 

\end{restatable}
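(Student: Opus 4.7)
The plan is to plug the three score-gap inequalities of Proposition~\ref{prop:GSPPM} into a linear payment scheme $p_i = \alpha\cdot s_i + \beta$ and then directly verify the $\alpha\epsilon'$-BNE condition from Definition~\ref{def:potent-mechanism}. Fix agent $j$ at the putative equilibrium (high effort, truthful), and partition agent $i$'s deviations into three types: (a) high-effort with $\hat\sigma\neq\tau$; (b) low-effort with any reporting strategy $\sigma$; and (c) no-effort with any $\sigma$. Under the linear payment, the utility difference $U_i(\tau,c_h) - U_i(\sigma',c_i')$ equals $\alpha$ times the expected-score difference minus the difference in costs, which is $0$ in (a), $-(c_h - c_l)$ in (b), and $-c_h$ in (c).

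Next, I would invoke Proposition~\ref{prop:GSPPM} to replace the three score differences by their explicit lower bounds: $-\epsilon'$ in case (a), $I(\Signal_i;\Signal_j\mid\Synopsis,\Lowsig_i)-\epsilon'$ in case (b), and $I(\Signal_i;\Signal_j\mid\Synopsis)-\epsilon'$ in case (c). Case (a) immediately gives $U_i(\tau,c_h) - U_i(\hat\sigma,c_h) > -\alpha\epsilon'$ for any $\alpha>0$. For (b) the needed condition $U_i(\tau,c_h) - U_i(\sigma,c_l) \geq -\alpha\epsilon'$ rearranges to $\alpha\cdot I(\Signal_i;\Signal_j\mid\Synopsis,\Lowsig_i) \geq c_h - c_l$, and analogously (c) rearranges to $\alpha\cdot I(\Signal_i;\Signal_j\mid\Synopsis) \geq c_h$. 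Setting $\alpha$ to the maximum of $\frac{c_h-c_l}{I(\Signal_i;\Signal_j\mid\Synopsis,\Lowsig_i)}$ and $\frac{c_h}{I(\Signal_i;\Signal_j\mid\Synopsis)}$ simultaneously satisfies both inequalities, and (with any $\beta$) yields the $\alpha\epsilon'$-BNE property. Specializing to $\epsilon'=0$ collapses the slack and gives an exact BNE, proving the final sentence of the statement.

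Before declaring $\alpha$ finite, I have to check that both denominators are strictly positive. The hypothesis directly supplies $I(\Signal_i;\Signal_j\mid\Synopsis,\Lowsig_i)>0$. For the second denominator, since $\Lowsig_i = f(\Signal_i)$ is a deterministic function of $\Signal_i$, conditioning on $\Synopsis$ and applying the chain rule for mutual information gives
\[I(\Signal_i;\Signal_j\mid\Synopsis) = I(\Signal_i;\Signal_j\mid\Synopsis,\Lowsig_i) + H(\Lowsig_i\mid\Synopsis) - H(\Lowsig_i\mid\Synopsis,\Signal_j),\]
and the last two terms combine to a non-negative quantity because conditioning reduces entropy. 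Hence $I(\Signal_i;\Signal_j\mid\Synopsis) \geq I(\Signal_i;\Signal_j\mid\Synopsis,\Lowsig_i) > 0$, so $\alpha$ is well defined.

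The argument is mechanical once Proposition~\ref{prop:GSPPM} is in hand; the main subtlety (and the only place that is not a one-line manipulation) is the positivity check for the second denominator, which is why I would state explicitly the entropy-decreasing-under-conditioning step above. Everything else is bookkeeping: matching each deviation type to the correct cost gap and the correct score-gap bound, then reading off the scaling constant $\alpha$ needed to dominate the two effort-cost differences.
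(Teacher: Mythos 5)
Your proposal is correct and follows essentially the same route as the paper: the paper proves Proposition~\ref{prop:GPPM-potent} exactly by the three-case bookkeeping you describe (subtract the appropriate cost gap from $\alpha$ times each score-gap lower bound from Proposition~\ref{prop:GPPM}, then choose $\alpha$ as the stated maximum), and then simply cites "analogous analyses" for the GSPPM version. The only thing you add beyond what the paper writes out is the explicit check that $I(\Signal_i;\Signal_j\mid\Synopsis)\geq I(\Signal_i;\Signal_j\mid\Synopsis,\Lowsig_i)>0$ so that $\alpha$ is finite; the paper establishes this same inequality in the proof of Proposition~\ref{prop:GSPPM} (via $H(X_j\mid\Synopsis)\geq H(X_j\mid\Lowsig_i,\Synopsis)$) rather than your chain-rule decomposition through $I(\Lowsig_i;\Signal_j\mid\Synopsis)$, but the two derivations are equivalent and both rest on conditioning reducing entropy and $\Lowsig_i=f(\Signal_i)$.
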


These two propositions show that better LLM-prediction approximations will lead to better incentive properties. On the other hand, to achieve any desired incentive property, there's a corresponding threshold for the error of the LLM-prediction approximation.

Additionally, we compare GPPM and GSPPM. Let $\text{Gap}(h,l)$ denote the difference in the expected score of agent $i$ between exerting high effort and exerting low effort while both agents report truthfully. Let $\text{Gap}(h,\text{Null})$ be the analogous notation for the gap between exerting high effort and exerting no effort. Combining \cref{prop:GPPM} and \cref{prop:GSPPM} together, we have the following corollary.

\begin{restatable}{corollary}{coroGAP}\label{coro:gap_bound}
If the low-effort signals are synopsis-determined (\cref{def:synopsis-determined}) and synopsis-covering (\ref{def:synopsis-covering}), and $\epsilon'=\epsilon$, $\text{Gap}(h,l)$ has the same lower bound, $I(X_i;X_j \mid \Synopsis)-\epsilon$, in both GPPM and GSPPM. In contrast, $\text{Gap}(h,\text{Null})$ has a smaller lower bound $I(X_i;X_j \mid \Synopsis)-\epsilon<I(X_i;X_j)-\epsilon$ in GSPPM than in GPPM. 
\end{restatable}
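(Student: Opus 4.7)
The plan is to derive the four lower bounds from Propositions \ref{prop:GPPM} and \ref{prop:GSPPM} and then simplify each mutual-information expression using the two structural definitions, \cref{def:synopsis-determined} and \cref{def:synopsis-covering}, together with the assumption $\epsilon' = \epsilon$.

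First I would apply Proposition \ref{prop:GSPPM} to obtain GSPPM's lower bounds on $\text{Gap}(h,l)$ and $\text{Gap}(h,\text{Null})$ as $I(\Signal_i;\Signal_j \mid \Synopsis, \Lowsig_i) - \epsilon$ and $I(\Signal_i;\Signal_j \mid \Synopsis) - \epsilon$, respectively. The synopsis-determined assumption gives $\Lowsig_i = g(\synopsis)$, hence $H(\Lowsig_i \mid \Synopsis) = 0$, so conditioning on $(\Synopsis, \Lowsig_i)$ is equivalent to conditioning on $\Synopsis$ alone. This collapses both GSPPM bounds to $I(\Signal_i;\Signal_j \mid \Synopsis) - \epsilon$.

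Next, for GPPM, Proposition \ref{prop:GPPM} gives lower bounds $I(\Signal_i;\Signal_j \mid \Lowsig_i) - \epsilon$ for $\text{Gap}(h,l)$ and $I(\Signal_i;\Signal_j) - \epsilon$ for $\text{Gap}(h,\text{Null})$. To show the first equals $I(\Signal_i;\Signal_j \mid \Synopsis) - \epsilon$, I would invoke synopsis-covering, which gives $\Signal_i \perp \Synopsis \mid \Lowsig_i$, and, by the symmetry of the model (signals are i.i.d.\ conditional on $\Itemz$, and $\Lowsig_i$ is the same deterministic function of $\Synopsis$ for each agent), also $\Signal_j \perp \Synopsis \mid \Lowsig_i$. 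Combined with synopsis-determined, this yields $H(\Signal_i \mid \Lowsig_i) = H(\Signal_i \mid \Synopsis, \Lowsig_i) = H(\Signal_i \mid \Synopsis)$, and analogously $H(\Signal_i \mid \Signal_j, \Lowsig_i) = H(\Signal_i \mid \Signal_j, \Synopsis)$; subtracting yields $I(\Signal_i;\Signal_j \mid \Lowsig_i) = I(\Signal_i;\Signal_j \mid \Synopsis)$. For the $\text{Gap}(h,\text{Null})$ comparison, the inequality $I(\Signal_i;\Signal_j \mid \Synopsis) \le I(\Signal_i;\Signal_j)$ follows because $\Synopsis$ summarizes the common-cause information from $\Itemz$ that correlates the two agents' signals, so conditioning on it can only weaken the residual correlation; strictness holds whenever $\Synopsis$ carries non-trivial such information.

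The main obstacle is the second-step identity $H(\Signal_i \mid \Signal_j, \Lowsig_i) = H(\Signal_i \mid \Signal_j, \Synopsis)$, which goes beyond what synopsis-covering delivers directly: $\Signal_i \perp \Synopsis \mid \Lowsig_i$ need not imply $\Signal_i \perp \Synopsis \mid \Lowsig_i, \Signal_j$. I expect to close this gap by exploiting the conditional i.i.d.\ structure of $(\Signal_i, \Signal_j)$ given $\Itemz$, which together with synopsis-covering makes $\Lowsig_i$ an effectively sufficient summary of $\Synopsis$ for each signal, even after further conditioning on the peer signal. The strict inequality for $\text{Gap}(h,\text{Null})$ is best presented as a structural observation about $\Synopsis$ functioning as a common cause, rather than as an intricate computation.
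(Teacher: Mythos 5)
Your reduction and identification of the crux are exactly right: the $\mathrm{Gap}(h,l)$ comparison hinges on $I(X_i;X_j\mid X^l_i)=I(X_i;X_j\mid\Theta)$ for GPPM, and the obstacle you flag — that synopsis-covering gives $X_i\perp\Theta\mid X^l_i$ whereas the identity needs $X_i\perp\Theta\mid X_j,X^l_i$, i.e.\ $I(X_j;\Theta\mid X_i)=0$ — is real, and it is precisely the step that the paper's own two-sentence proof (which simply asserts that Propositions~\ref{prop:GPPM} and~\ref{prop:GSPPM} ``directly imply the results'') glosses over. However, the repair you propose does not work. Conditional i.i.d.\ of the signals given $Z$, together with the per-agent independences $X_i\perp\Theta\mid X^l_i$ and $X_j\perp\Theta\mid X^l_j$, does not yield the \emph{joint} independence $(X_i,X_j)\perp\Theta\mid X^l_i$: conditional marginals do not determine the conditional joint, which still depends on $\Theta$ through $\Pr[Z\mid\Theta]$. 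One can build small finite examples satisfying \cref{def:synopsis-determined}, \cref{def:synopsis-covering}, and conditional i.i.d.\ given $Z$ in which $I(X_j;\Theta\mid X_i)>0$, so $I(X_i;X_j\mid X^l_i)<I(X_i;X_j\mid\Theta)$ strictly. (Take $\Theta$ a strict coarsening of $Z$, $X^l_i=g(\Theta)$ a further coarsening, and a conditional law $q(\cdot\mid Z)$ whose mixture over the two $\Theta$-cells inside each $X^l_i$-cell agrees but whose within-cell conditionals differ.)

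The same difficulty affects the $\mathrm{Gap}(h,\textsc{Null})$ half. The heuristic ``conditioning on the common cause can only weaken residual correlation'' is not a theorem when the conditioner $\Theta$ is a strict coarsening of the true common cause $Z$: the interaction information $I(X_i;X_j;\Theta)=I(X_i;X_j)-I(X_i;X_j\mid\Theta)$ can be negative even under conditional i.i.d., so $I(X_i;X_j\mid\Theta)>I(X_i;X_j)$ is possible under the stated hypotheses. The assumption that actually carries the corollary is a strengthened, joint form of synopsis-covering, $(X_i,X_j)\perp\Theta\mid X^l_i$: under it both $\mathrm{Gap}$ collapses are immediate and $I(X_i;X_j\mid\Theta)=I(X_i;X_j\mid X^l_i)\le I(X_i;X_j)$ follows from the chain already present in the proof of Proposition~\ref{prop:GPPM}. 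In short, you located the missing step more sharply than the reference proof does, but neither your argument nor the paper's closes it from the definitions as written; the fix is to strengthen \cref{def:synopsis-covering} to a joint conditional independence, not to invoke the i.i.d.\ structure.
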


\Cref{coro:gap_bound} suggests that compared with GPPM, GSPPM shrinks the gap between no-effort and low-effort, while preserving the gap between low-effort and high-effort. This property of GSPPM offers two advantages over GPPM: 1) in practice, it is harder for agents to ``cheat'' the mechanism by submitting a low-effort signal and getting a partial payoff, which consequently further incentivizes high-effort signals; 2) it reduces the noise caused by low-effort signals and produces more reliable scores, better differentiating between low-effort and high-effort.

Although this comparison concerns the lower bounds of the gaps, we will confirm this theoretical insight by empirically showing that GSPPM can distinguish low-effort and high-effort reports better. We demonstrate this by using GSPPM and GPPM to score human-written and LLM-generated reviews in \Cref{sec:result3}.

\section{Estimating the Posterior Prediction via LLMs}\label{sec:method}

Although our mechanisms, at this point, may appear to be straightforward generalizations from prior work, implementing them in practice with textual reports presents distinct challenges. In this section, we present the implementations of our mechanisms, which crucially involve estimating the underlying distribution $\Pr[\Signal_j = \report_j \mid \Signal_i = \report_i]$ via LLMs. We introduce two heuristic implementation methods, each leveraging the capabilities of the LLM in different ways and degrees.

\begin{itemize}
    \item The first implementation, denoted as \textsc{Token}, leverages the LLM by directly accessing its output layer to obtain the log probability feedback of the next token, which requires access to the neural network (NN) of the LLM. It has two variants:
    \begin{itemize}
        \item \textsc{Token-Raw}: We directly use the log probability to predict agents' raw reports.
        \item \textsc{Token-Preprocess}: We first use the LLM to preprocess agents' reports and use the log probability to predict the pre-processed reports. The goal is to standardize the language styles and extract essential information.
    \end{itemize}
    \item The second implementation, denoted as \textsc{Judgment}, uses the the LLM to first distill each report into a set of ``judgments'' and further apply the LLM chatbot to estimate the likelihood of each judgment with textual response. This implementation is particularly useful when it is hard to access the output layer of the LLM, since logprob feedback usually cannot be obtained from commercial LLM APIs, such as GPT-4 Chat Completion.
\end{itemize}

Note that both implementations can be used for \textit{zero-shot} estimation, meaning that estimating $\Pr[\Signal_j = \report_j \mid \Signal_i = \report_i]$ only requires $\report_i$ and $\report_j$ without any historical data. We alternatively offer a non-zero-shot implementation that estimates judgment distributions by clustering historical data. We primarily focus on the zero-shot implementations in the main body, while we defer the discussions of the clustering implementation to \Cref{appendix:alter}. The full implementations and results are available at \url{https://github.com/yx-lu/Eliciting-Informative-Text-Evaluations-with-Large-Language-Models}.

\begin{table}[!ht]
\centering\renewcommand{\arraystretch}{0.85}
\setlength{\tabcolsep}{4pt}
\begin{tabular}{cccccc}
\toprule
Implementation & Require network access & LLM usage\\ 
\midrule
\textsc{Token-Raw} & Yes & Generate log-probability\\
\midrule
\textsc{Token-Preprocess} & Yes & Preprocess, generate log-probability\\
\midrule
\textsc{Judgment} & No & Distill report into judgments, predict judgments\\
\bottomrule
\end{tabular}
\caption{A high-level comparison of three implementations.}
\label{table:compare}
\end{table} 

\subsection{\textsc{Token}: Implementation by LLM Token-Prediction}\label{sec:method1}

As discussed, the idea of \textsc{Token} is to use the log-probability (\textit{logprob}) feedback of an LLM to predict a given report $\report_j$. 
This is possible because LLMs are fundamentally designed to estimate the likelihood of a subsequent token in a sequence based on a distribution induced by the prompt. This inherent capability is deeply embedded in their pre-training datasets. Consequently, with access to an \textit{open-source} LLM, such as Llama-2~\cite{touvron2023llama} or ChatGLM~\cite{du2022glm,zeng2023glm-130b}, it becomes feasible to compel the LLM to generate a specific output and report the logprob for each output token. 

\begin{figure}[!ht]
\noindent\begin{minipage}[t]{0.9\textwidth}
\begin{tikzpicture}
\node[fill=gray!7,minimum width=\textwidth,minimum height=1.6cm,text width=0.9\textwidth] (box) {\textbf{Prompt $\psi(\report_i)$: }You are the second reviewer for a scientific paper. You are given a peer review from the other reviewer: \textbf{[Review $\report_i$]} Your task is to provide your own judgments of the paper based on the given materials. 
}; 
\end{tikzpicture}
\end{minipage}
\noindent\begin{minipage}[t]{0.9\textwidth}
\begin{tikzpicture}
\node[fill=blue!7,minimum width=\textwidth,minimum height=1.6cm,text width=0.9\textwidth] (box) {\textbf{Response:} [Predicted Review $\report_j$] 
 \\~\\ \textbf{Logprob = $\log \Pr_{\LLM(\psi)}[\Signal_j = \report_j \mid \Signal_i = \report_i]$}}; 
\end{tikzpicture}
\end{minipage}
    \caption{A simplified example of a prompt for \textsc{Token} in academic peer review scenario. In Appendix~\ref{appendix:prompt}, we present all exact prompts used in our experiment.}
    \label{fig:prompt-eg}
\end{figure}

Formally, we view the textual signal $\Signal = \{\Signal^{(k)}\}_{k \in [n]}$ as a sequence of $n$ tokens, where $\Signal^{(k)}$ denotes the $k$-th token within $\Signal$. Again, we use $\report^{(k)}$ to denote the value of the $k$-th token in a report. 

Given a prompt $\psi$ and the first $k-1$ tokens, an open-source LLM can provide the conditional distribution for the subsequent token via \textit{logprob} feedback, denoted as $\Pr_{\operatorname{LLM}(\psi)}[\Signal^{(k)} \mid \Signal^{(l)} = \report^{(l)} ~\forall l \in \{1,2,\ldots,k-1\} ]$. With Bayes' rule, we can write the probability of the occurrence of a given report $\report$ as:
\[
    \Pr_{\operatorname{LLM}(\psi)}[\Signal=\report] = \Pr_{\operatorname{LLM}(\psi)}\left[\Signal^{(1)} = \report^{(1)}\right] \prod_{k=2}^{n} \Pr_{\operatorname{LLM}(\psi)}\left[\Signal^{(k)} = \report^{(k)} ~\middle|~ \Signal^{(l)} = \report^{(l)} ~\forall l \in \{1,2,\ldots,k-1\}\right].
\]

\subsubsection{\textsc{Token-Raw}} 
The most straightforward idea is to integrate the raw report of agent $i$ into the prompt $\psi_{\text{token}}(\report_i)$ and compute $\Pr_{\operatorname{LLM}(\psi_{\text{token}}(\report_i))}[\Signal=\report_j]$ with the log probability output by the LLM. We view this probability as an approximation for $\Pr[X_j=\report_j \mid X_i=\report_i]$. Finally, agent $i$'s performance score is 
\[\textsc{Token-Raw}(\report_i,\report_j) =\log \Pr_{\operatorname{LLM}(\psi_{\text{token}}(\report_i))}[\Signal=\report_j].\]

\subsubsection{\textsc{Token-Preprocess}} 

In practice, agents' raw reports can vary significantly in language style, such as vocabulary usage, sentence structure, and grammatical errors. Additionally, reports may contain superficial information, such as a summary of the paper, in cases of peer review.

Note that even low-quality reviews can be well correlated with superficial information or language style. However, we aim to reward agents based on the quality of the reports' semantics, rather than the correlation on superficial information or language style. Such information may provide ``shortcuts'' that confound the LLM predictions and consequently lead to unintended rewards. Therefore, we filter out the shortcut information, including language style and superficial information, before applying LLMs to predict the responses. 

To address these issues, we propose a simple yet effective preprocessing technique that involves employing a uniform LLM (It may be the same as or different from the LLM generating logprob) to rephrase the text signal into a pre-set format. Note that the preprocessing step should be tailored for different tasks, considering the trade-off between retaining details and distilling the essential information. The details of our preprocessing prompt are presented in \Cref{app:pre-process} for reference.

Formally, the performance score of agent $i$ using the implementation $\textsc{Token-Preprocess}$ can be defined as:
\[\textsc{Token-Preprocess}(\report_i,\report_j) = \textsc{Token-Raw}(\textsc{Preprocess}(\report_i),\textsc{Preprocess}(\report_j)).\]

We will provide evidence comparing \textsc{Token-Raw} and \textsc{Token-Preprocess} and discuss this further in \Cref{sec:need-for-preprocess}. Generally, we find that \textsc{Token-Preprocess} is likely to effectively filter out such shortcut information and provide scores according to the semantic quality. Therefore, our main paper will mainly discuss the variant \textsc{Token-Preprocess}.

\subsection{\textsc{Judgment}: Implementation by LLM Judgment-Prediction}\label{sec:method2}

Accessing the output layer of an LLM can sometimes be impossible, especially when the LLM is not open-source. For example, the popular OpenAI GPT-4 chat completion API can only output sampled textual responses rather than the probability of tokens. We propose an alternative method that uses an LLM as a black box. Our idea is to first summarize each report as a set of judgments and then predict the peer's report by predicting the probability of each of its judgments and taking the product.

Formally, suppose the set of all possible judgments is $J = \{w_1, w_2, \ldots, w_m\}$. Suppose each signal $\signal$ (and report $\report$) is a subset of judgments, i.e., $\signal\subset J$. We assume that the event of whether each judgment belongs to a report is independent, thus, we have \[\Pr[\Report_j = \report_j \mid \Report_i = \report_i] = \prod_{w\in \report_j} \Pr\left[w \in \report_j \mid \Report_i = \report_i\right] \cdot \prod_{w\in J\backslash \report_j} \Pr\left[w \not\in \report_j \mid \Report_i = \report_i\right].\]

In the zero-shot setting, it is hard to access the full universe of judgments $J$, making it infeasible to estimate the probability $\Pr\left[w \not\in \report_j \mid \Report_i = \report_i\right]$. Therefore, we use $\prod_{w\in \Report_j} \Pr\left[w \in \Report_j \mid \Report_i = \report_i\right]$ as a heuristic predictor of $\Pr[\Report_j = \report_j \mid \Report_i = \report_i]$.

Furthermore, we discretize the prediction space for LLMs' responses. In particular, given a review $\report_i$ and a target judgment $w$, we ask the LLM to score how much $w$ contradicts or supports $\report_i$ with the score ranging from $-3$ (strong contradiction) to $3$ (strong support). We provide exact prompts used in our experiment in Appendix~\ref{appendix:prompt}. We view this score as the gain in log probability, i.e., $\log \Pr\left[w \in \Report_j \mid \Report_i = \report_i\right] - \log \Pr\left[w \in \Report_j\right]$.

By taking the sum over the judgments in $\report_j$, we obtain an estimate of the conditional log probability, which we use as the performance score, i.e.
\[\textsc{Judgment}(\report_i,\report_j) = \log \Pr[\Report_j = \report_j \mid \Report_i = \report_i] - \log \Pr[\Report_j = \report_j].\] 
Note that the performance score is not exactly the log of the conditional probability (the first term) as defined in \cref{def:GPPM}. However, subtracting the second term which is independent of agent $i$'s strategy will not disturb the incentive of agent $i$.

We emphasize that this method is not an accurate estimate of the goal probability $\Pr[\Signal_j = \report_j \mid \Signal_i = \report_i]$ in general. However, it is a plausible and feasible solution given a black-box LLM. As we will see in the experiments, this heuristic simplification can still capture key information within agents' reports. We note that the performance of our method may be improved with future versions of LLMs that can output more calibrated and accurate predictions.

\section{Evaluation}\label{sec:evaluation}

This section presents the methods we use to empirically evaluate the efficacy of our mechanisms. We are primarily interested in testing whether replacing the original high-quality reports with less informative low-quality reports leads to an expected performance score decrease with our mechanisms. We hypothesize that these low-quality reports can be viewed as reports by low-effort agents, and thus, a decrease in performance scores would indicate the empirical effectiveness of our mechanisms in eliciting high effort.

Toward this goal, we first introduce a general workflow and then propose several methods to create low-quality reports, including degrading the original reports and replacing the original reports with LLM-generated fictitious reports, which are regarded as lower quality than the original human expert reports. Note that these methods generating low-quality reports are not necessarily to be restricted in the effort model in \Cref{subsec:model} as we focus on testing the efficacy in a realistic scenario without relying on the theoretical assumptions. We finally introduce the statistical metrics used to measure the significance.

Specifically, let $z$ represent an item randomly selected from the dataset. We randomly draw two reports related to $z$, denoted as $\report_i$ and $\report_j$. Given the mechanism $M$, the computed score of agent $i$ is $s_{+} := M(\report_i, \report_j)$. We then replace $\report_i$ with a low-quality report $\report_i^l$ and recalculate the score $s_{-} := M(\report_i^l, \report_j)$. Our hypothesis is that the score will decrease in expectation. To test this, we apply a $t$-test to evaluate whether the decrease in score from \( s_{+} \) to \( s_{-} \) is statistically significant, thereby confirming the sensitivity of our methods to manipulations. 

Formally, we describe the implementation of the above workflow in \Cref{alg:evaluation}.

\begin{algorithm}[]
\caption{Evaluation Workflow}
\label{alg:evaluation}

\KwIn{A dataset with items and associated text reports. A mechanism $M$ computing the peer prediction scores. A low-quality report generation process $D$.}
\KwOut{Lists $\mathbf{s_{+}}$ and $\mathbf{s_{-}}$ and the statistics metrics.}

\BlankLine

\For{$k = 1$ \KwTo $K$}{
    Draw an item $z$ u.a.r. from all the items\;
    Draw two text reports $\tilde{x_i}$ and $\tilde{x_j}$ u.a.r. from all reports related to $z$ \;
    Compute $s_{+}^{(k)} := M(\tilde{x_i}, \tilde{x_j})$\;
    
    \BlankLine

    Replace the text report $\tilde{x_i}$ with $\tilde{x_i}^l$ according to process $D$\;
    Compute $s_{-}^{(k)} := M(\tilde{x_i}^l, \tilde{x_j})$\;
    
    \BlankLine
}

Add the computed values to lists $\mathbf{s_{+}} = \{s_{+}^{(k)}\}$, $\mathbf{s_{-}} = \{s_{-}^{(k)}\}$\;

Compute the statistics metrics based on lists $\mathbf{s_{+}}$ and $\mathbf{s_{-}}$\;
\end{algorithm}

\subsection{Reports Degradations} 
\label{subsec:manipulations}

We regard the original responses from human agents as truthful and high-effort reports and create three degradation methods, which obviously degrade the information within agents' reports and, consequently, should lead to lower scores. We use these degradations as a ``sanity check'', implying that any mechanisms that fail to penalize these degradations are unlikely to be useful in practice. 

\paragraph{Random Report Replacement:} We replace a report \( \tilde{x}_i \) with a new report \( \tilde{x}_{i'} \), which is randomly selected from a different randomly selected item \( z' \). This process is denoted as \( \tilde{x}_i^l = \tilde{x}_{i'} \). Note that $\tilde{x}_{i'}$ can also be viewed as a zero-effort signal as the information is likely to be irrelevant. For example, this method corresponds to the behaviors of malicious customers who upload irrelevant reviews in exchange for a restaurant's discount rewards.

\paragraph{Sentence-Level Degradation:} We degrade the original reports by deleting every other sentence. 

\paragraph{Judgment-Level Degradation:} We degrade the list of judgments by deleting every other judgment. This can only be performed with the \textsc{Token-Preprocess} and \textsc{Judgment} implementations, as the preprocessing step has already provided a well-structured list of the judgments. 

The sentence-level and judgment-level degradation methods only depend on the agent's report. Thus, they can be viewed as not only creating a low-effort signal but also untruthfully reporting a high-effort signal. The same experiments here (\Cref{sec:result1}) can also be used to test whether they can incentivize truth-telling.

\subsection{LLM-generated Reviews.} 
\label{subsec:LLM-generated}

Furthermore, we employ the LLMs to create synthetic text reports based on a given item, simulating the scenario of the creation of fictitious academic peer reviews. We conduct the experiment on the ICLR2020 OpenReview dataset\footnote{As in 2020, the generative AI is not as widely used as now, we assume all the academic peer reviews in the dataset are written by humans.}. Specifically, we provide the paper $z$ as input to both GPT-3.5 and GPT-4 \citep{achiam2023gpt,brown2020language}, requesting them to generate comprehensive reviews following \citet{liang2023large}'s method. The prompt for this task is presented in \Cref{app:prompt_review}. 

We compare three types of reviews: the human-written review, the GPT-4-generated review, and the GPT-3.5-generated review. Given that GPT-4 is commonly considered a stronger AI than GPT-3.5, and both models are considered worse than human expert reviewers, we hypothesize that we can utilize the LLM-generated reviews to simulate low-quality human reviews, thereby simulating three levels of quality.

It is worth noting that, unlike the degradations discussed in \cref{subsec:manipulations}, the LLM-generated review does not only depend on the agent's signal $\signal_i$ since the reviewed paper is input into the LLMs to create reviews. It can not be regarded as untruthfully reporting the original signal. Thus, our focus remains on assessing the effectiveness of differentiating various quality levels of reports across different mechanisms.

\subsection{Statistics Metrics}
\label{subsec:measures}

The principal criterion for our method's effectiveness is a statistically significant decrease in scores following the degradations. To empirically validate this criterion, we conduct statistical tests on the samples of scores computed with the original reports (\( s_{+} \)) and the degraded/LLM-generated reports (\( s_{-} \)). Specifically, We employ the t-test, which is able to identify whether there is a significant decrease in scores from \( s_{+} \) to \( s_{-} \). Our empirical distributions (Section~\ref{sec:empirical-distribution}) show that the measured differences ($s_{+}-s_{-}$) are approximately normally distributed, confirming our data is suitable for the t-test. Below, we introduce the detailed implementation of the $t$-test.

\paragraph{Paired Difference t-test} 

For each pair of scores $(s_+, s_-)$, we use the paired difference t-test to verify if the mean of their difference is greater than 0. 
Let $d_k:=s_{+}^{(k)}-s_{-}^{(k)}$, and $\bar{d}$ denote the mean of the differences $d_k$. Formally, the hypotheses for the Paired Difference t-test are:

\begin{itemize}
    \item Null Hypothesis (\(H_0\)): The mean difference $\bar{d}$ is equal to 0.
    \item Alternative Hypothesis (\(H_1\)): The mean difference $\bar{d}$ is greater than 0.
\end{itemize}

The test statistic for the Paired Difference t-test is calculated as follows:
\[
t = \frac{\bar{d}}{\sigma_d / \sqrt{K}}
\]
where \(\sigma_d\) is the standard deviation of the differences $\{d_t\}$, and \(K\) is the number of pairs. Furthermore, we denote $SE(\bar{d}) = \frac{\sigma_d}{\sqrt{K}}$, which is the standard error of the mean difference.

The p-value is obtained by $p = \Pr[X > t]$, where \( X \) is a random variable following the t-distribution with the calculated degrees of freedom \( K-1 \), and \( t \) is the calculated t-statistic above. Typically, a p-value threshold of 0.05 is used, where a p-value lower than 0.05 leads to the rejection of the null hypothesis, indicating that there's a statistically significant difference. 

\section{Experiments}

This section presents the setup of our experiments and the empirical results. 

\subsection{Experiment Setup}

\subsubsection{Datasets}

We use two datasets for our experiment. 

\begin{description}
    \item[Yelp Online Review Data (Yelp)] is publicly available online review data from Yelp. The restaurants featured on Yelp are interpreted as the ``items'' in our model, and customer reviews for each restaurant are interpreted as the ``text reports'' for our model. We construct our dataset by randomly sampling 1000 items (restaurants) from the entire dataset. For these 1000 items, we have 198,444 text reports (customer reviews) in total, i.e., averaged about 200 reviews per restaurant.
    \item[ICLR Peer Review Data (ICLR)] includes peer review data from the International Conference on Learning Representations (ICLR) 2020, accessed via the OpenReview API.\footnote{The reason for using the ICLR 2020 dataset is to exclude the chance that reviewers use LLMs to generate their reports, as we discussed in \Cref{sec:evaluation}, in 2020, AI-generated reviews were rare.} In this dataset, the ``items'' are the submitted papers, and the ``text reports'' are the peer reviews they received. Given the typically longer and more informative nature of ICLR reviews compared to Yelp reviews, we choose a smaller sample size for the ICLR dataset to manage computational demands efficiently. Thus, we construct our dataset by randomly sampling 300 items (papers) from the entire dataset. For these 300 items, we have 911 text reports (peer reviews) in total, i.e., averaged about 3 reviews per paper. 
\end{description}

The Yelp dataset represents a crowdsourcing setting where reviews are completed by the public. On the other hand, the ICLR dataset can be viewed as an example of expert sourcing, involving reviews provided by experts in a highly specialized field. 

\subsubsection{LLMs}

\begin{description}
\item[GPT-4 / GPT-3.5] We employ the \textsf{gpt-4-1106-preview} \citep{achiam2023gpt} model for preprocessing the reports on the ICLR dataset and the \textsf{gpt-3.5-turbo-1106} \citep{brown2020language} model for preprocessing the reports on the Yelp dataset.\footnote{This selection is based on the nature of the texts in each dataset: Yelp reviews tend to be shorter and less complex, thus not requiring the advanced capabilities of a more powerful language model. In contrast, ICLR reviews are more intricate, justifying the use of the higher-capacity \textsf{gpt-4-1106-preview} model for effective rephrasing.} For the $\textsc{Judgment}$ implementation, we use \textsf{gpt-4-1106-preview} to predict judgments.
\item[Llama-2] For the $\textsc{Token}$ implementations, including both \textsc{Token-Preprocess} and \textsc{Token-Raw} (As discussed in \Cref{sec:method1}, we defer the results for \textsc{Token-Raw} to \Cref{sec:need-for-preprocess}), we use the \textsf{llama-2-70b-chat} \citep{touvron2023llama} model with 4-bit quantization to calculate log probabilities for token prediction. The open-source nature of Llama-2 allows for local execution and access to the log probability for each token in a text report.
\end{description} 

\begin{table}[htbp]
\centering\renewcommand{\arraystretch}{0.85}
\setlength{\tabcolsep}{10pt}
\begin{tabular}{ccc}
\toprule
\textbf{Task} & \textbf{LLM Usage} & \textbf{Notes}\\ 
\cmidrule{1-3}
\multirow{2}{*}{Preprocessing}  & ICLR: \textsf{gpt-4-1106-preview}  & \multirow{2}{*}{Discussed in footnote\footnotemark[6]}\\
% \cmidrule{2-2}
                       & Yelp: \textsf{gpt-3.5-1106-preview} & \\
\cmidrule{1-3}
\textsc{Token} implementations & \textsf{llama-2-70b-chat} & Open source LLM\\
\cmidrule{1-3}
\textsc{Judgment} implementation & \textsf{gpt-4-1106-preview} & /\\
\cmidrule{1-3}
\multirow{2}{*}{Generating LLM reviews} & \textsf{gpt-3.5-1106-preview} & \multirow{2}{*}{Discussed in \Cref{subsec:LLM-generated}}\\
% \cmidrule{2-2}
                       & \textsf{gpt-4-1106-preview}  \\
\bottomrule
\end{tabular}
\caption{Table illustrating the use of different LLM models in various tasks}
\label{tab:tasks_usage}
\end{table}

We provide all prompts used, along with example inputs and outputs, in \Cref{appendix:prompt}.

\subsubsection{Mechanisms}

\begin{description}
\item[GPPM.] We test the GPPM on both the Yelp and the ICLR datasets with \textsc{Token-Preprocess} and \textsc{Judgment} implementations. We perform the report degradation evaluation (\Cref{subsec:manipulations}) and LLM-generated-review evaluation (\Cref{subsec:LLM-generated}).

\item[GSPPM.] We test the GSPPM on the ICLR dataset. This is because the Yelp dataset we use lacks detailed features of the items (restaurants) and thus we do not have access to a suitable synopsis of each item. For the ICLR dataset, we consider the abstract of a paper as the commonly-known synopsis. Conditional on this, we apply the LSR to calculate the score in the GSPPM. 

We only apply the \textsc{Token-Preprocess} implementation for two reasons: First, when we have access to a neural network of an LLM, \textsc{Token-Preprocess} implementation is more time- and cost-efficient than \textsc{Judgment}, given there is no need for API calls. Second, the results in GPPM indicate that \textsc{Token-Preprocess} generally works better than \textsc{Judgment} on the ICLR dataset.

We are primarily interested in comparing its performance differentiating high-quality and low-quality reports. Thus, we only perform the LLM-generated-review evaluation (\Cref{subsec:LLM-generated}) to test the GSPPM. 

\item[Baseline.] Additionally, we present a baseline mechanism (\Cref{def:miller}) that uses only the numerical ratings from reports. This approach rebuilds the joint distribution of two ratings\footnote{There are 4 possible ratings in ICLR dataset and 5 possible ratings in Yelp dataset.} based on historical data and assigns a log scoring rule to agent $i$ based on her rating compared to agent $j$'s rating. Since it is not clear how to degrade numerical scores to the same degree as sentence/judgment-level degradations, this baseline is only applicable to the experiment of Random Report Replacement.

\end{description}

\subsection{Result 1: GPPM Effectively Penalizes Report Degradations.}\label{sec:result1}

We now delve into our main results. We first describe the results of evaluating the GPPM with three report degradations defined in \Cref{subsec:manipulations}. We apply sample sizes of $K=500$ and $K=1000$ for the experiments on the ICLR and the Yelp datasets respectively. We use both \textsc{Token-Preprocess} and \textsc{Judgment} to compute the estimated conditional probability. We visualize the p-values in \Cref{fig:bar-chart-token} and defer the comprehensive statistics metrics to \Cref{appendix:additional} (Table~\ref{table:random-replace}, \ref{table:sentence-degrade} and \ref{table:judgment-degrade}).

\begin{figure}[ht]
\centering
\subfigure[Results on ICLR Dataset]{\centering
\includegraphics[scale=0.39]{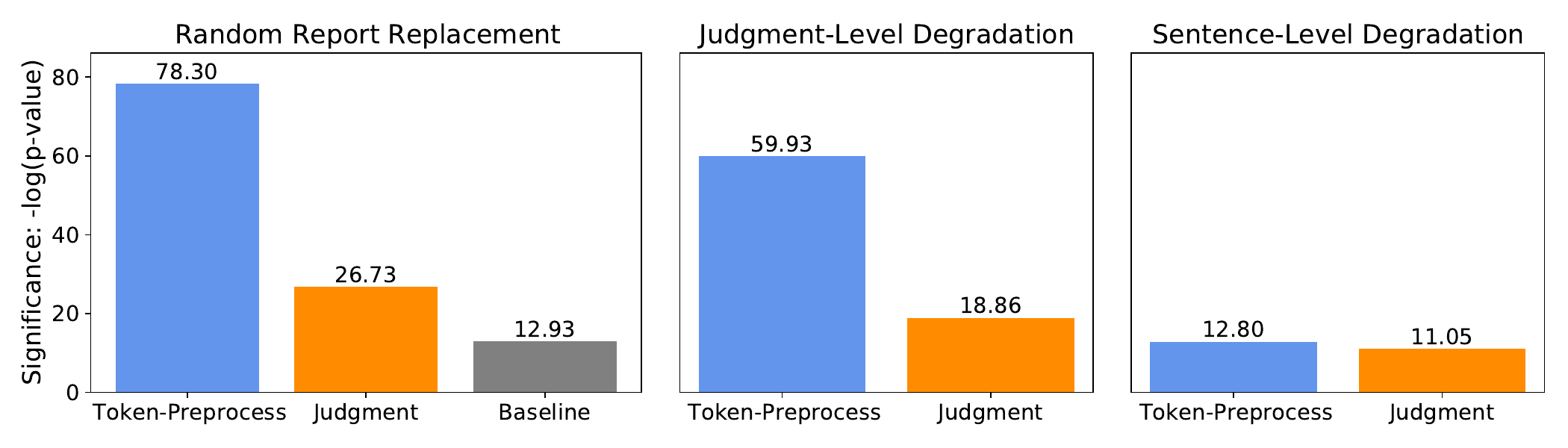}}

\subfigure[Results on Yelp Dataset]{\centering
\includegraphics[scale=0.39]{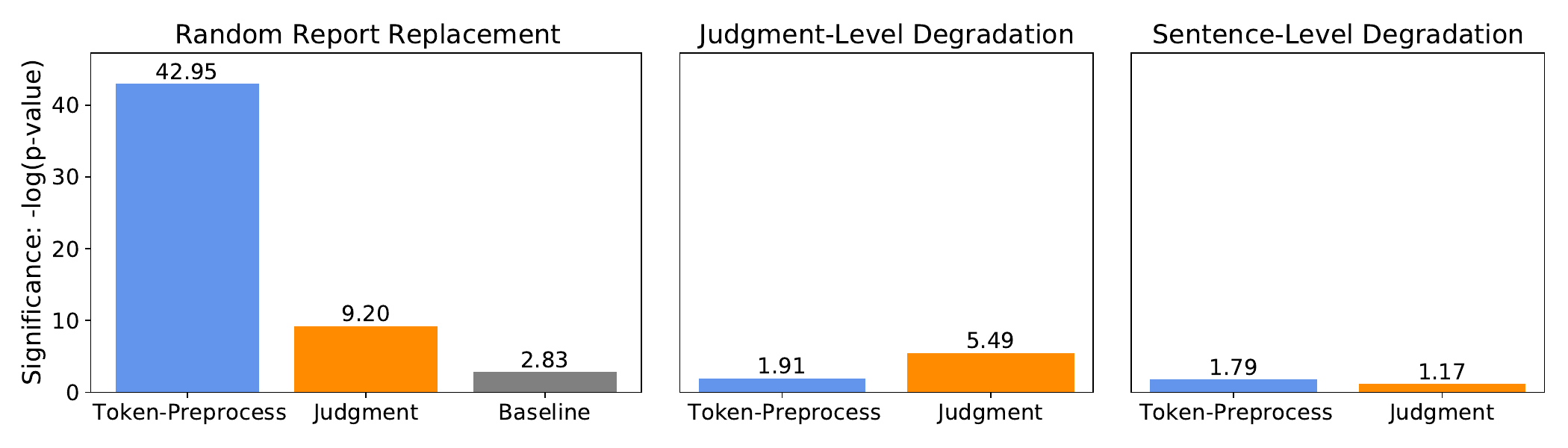}}
\caption{Report Degradation Evaluation Result for GPPM with \textsc{Token-preprocess} and \textsc{Judgment} Implementations, as well as the Baseline Mechanism: \textbf{Significance ($\bm{-\log_{10}}$(p-value))} of the expected score difference $\bar{d}>0$, \textbf{higher is better}. Typically, a significance score $-\log_{10}(\text{p-value})> 1.3$ (equivalent to p-value $< 0.05$) is regarded as significant difference. }
\label{fig:bar-chart-token}
\end{figure}

\textbf{GPPM significantly outperforms the baseline.}
We observe a positive $\bar{d}$ in all experiments: for three degradation methods, two datasets, and two implementations. Moreover, in the case of ``random report replacement'', although all tested mechanisms, including the baseline, exhibit a significance score ($-\log_{10}(\text{p-value})$) well above the threshold of $1.30$ (equivalent to p-values $< 0.05$), the significance score associated with the GPPM are significantly higher compared with the baseline. This observation matches our intuition that there exists a substantially larger amount of information within agents' textual responses, and our GPPM --- with either implementation --- can successfully extract it.

\textbf{\textsc{Token} outperforms \textsc{Judgment} on the ICLR dataset.}
Compared with the \textsc{Judgment} implementation, we observe a higher significance score (equivalent to lower p-values) of \textsc{Token-Preprocess} implementations on the ICLR dataset. However, \textsc{Token-Preprocess} does not perform well on the ``judgment-level degradation'' test conducted on the Yelp dataset. 

Additionally, one may be interested in the performance of \textsc{Judgment} with Llama-2. As GPT-4 has better inference capacity than Llama-2, the performance of \textsc{Judgment} with Llama-2 is worse than \textsc{Judgment} with GPT-4. We provide detailed results in \Cref{appendix:additional}. 
Hence, the \textsc{Token-Preprocess} implementation can be considered superior to \textsc{Judgment}, as it consistently outperforms \textsc{Judgment} when both are applied with the same LLM model. However, \textsc{Judgment} is still valuable when there is no access to the LLM's log probability feedback.

\subsection{Result 2: Both GPPM and GSPPM Differentiate Three Quality Levels --- human, GPT-4, and GPT-3.5}\label{sec:result2}

We now show the results of testing the GPPM and GSPPM with LLM-generated reviews introduced in Section~\ref{subsec:LLM-generated}. We focus on the ICLR dataset and use the \textsc{Token-Preprocess} implementation, given that the previous section shows that it is more efficient and performs better than \textsc{Judgment} on the ICLR dataset. We apply a sample size of $K=500$. We visualize the p-values in \Cref{fig:bar-chart-LLM-review-overall} and provide the statistics metrics in Table~\ref{table:llm-review}.

\begin{table}[htbp]
\small\centering\renewcommand{\arraystretch}{0.85}
\begin{tabular}{cccccccc}
\toprule
Mechanism & Review-Generating LLM & $\bar{d}$ & $\sigma(d)$ & $SE(\bar{d})$ & t-statistic & p-value & $-\log_{10}(\text{p-value})$\\ 
\cmidrule{1-8}
\multirow{2.5}{*}{GPPM}  & GPT-3.5 & 10.020 & 19.290 & 0.863 & 11.603 & 5.0e-28 & 27.297\\
\cmidrule{2-8}
                       & GPT-4 & 2.904 & 20.489 & 0.916 & 3.166 & 8.2e-04 & 3.086\\
\cmidrule{1-8}
\multirow{2.5}{*}{GSPPM} & GPT-3.5 & 9.197 & 14.173 & 0.634 & 14.495 & 2.7e-40 & 39.562\\
\cmidrule{2-8}
                       & GPT-4 & 5.357 & 14.716 & 0.658 & 8.131 & 1.7e-15 & 14.770\\
\bottomrule
\end{tabular}
\caption{Statistics Metrics of LLM-Generated Review Evaluation for GPPM/GSPPM implemented with \textsc{Token-Preprocess}. $\bar{d}$ represents the mean of the score differences, $\sigma(d)$ represents the standard deviation of the score differences, and $SE(\Bar{d})={\sigma(d)}/{\sqrt{K}}$ represents the standard error of the mean difference.}
\label{table:llm-review}
\end{table}

\begin{figure}[ht]
\centering
\includegraphics[scale=0.39]{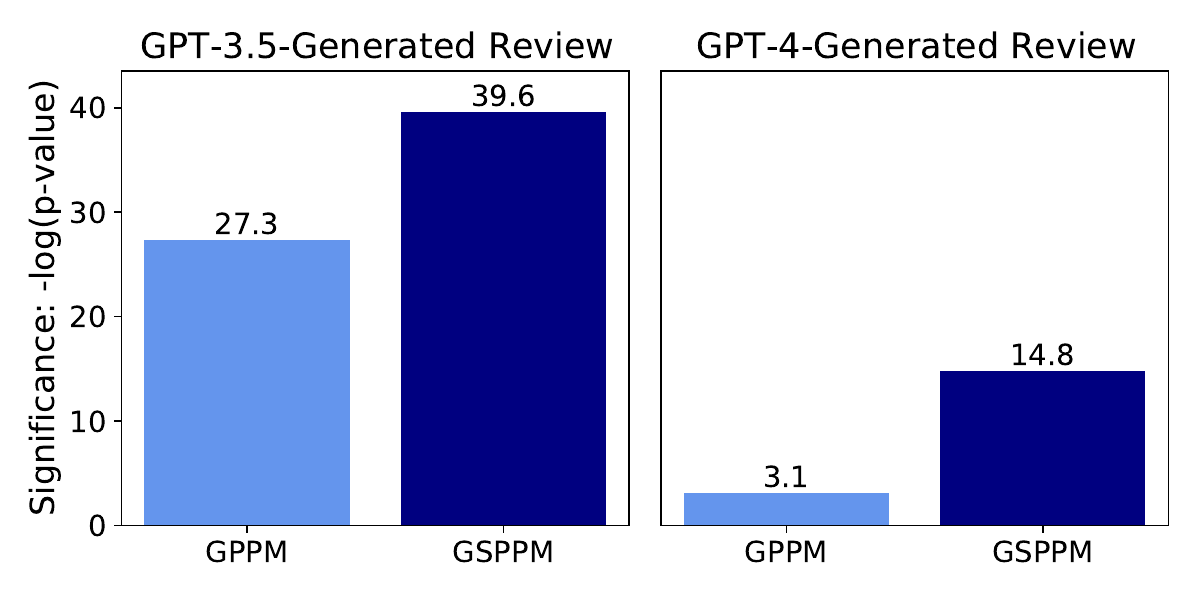}
\caption{LLM-generated Review Evaluation Result for GPPM and GSPPM with \textsc{Token-preprocess} Implementation: \textbf{Significance ($\bm{-\log_{10}}$(p-value))} of the expected score difference $\bar{d}>0$, \textbf{higher is better}. Typically, a significance score $-\log_{10}(\text{p-value})> 1.30$ (equivalent to p-value $< 0.05$) is regarded as significant difference. }
\label{fig:bar-chart-LLM-review-overall}
\end{figure}

In \Cref{table:llm-review}, we observe that, with both mechanisms and the \textsc{Token-Preprocess} implementation, replacing human-written reviews with either GPT-3.5 or GPT-4-generated reviews leads to a statistically significant decrease in scores, as the significance score are all more than the threshold of $1.30$ indicating p-values are all less than the threshold of $0.05$. 

Comparing the first row and the second row of Table~\ref{table:llm-review}, the scores are on the same scale as they are computed by the same mechanism. Thus, we observe that while the standard deviation  $\sigma(d)$ remains similar, replacing the human-written reviews with GPT-3.5-generated reviews leads to a greater reduction in the expected score, $\bar{d}$, than replacing with GPT-4-generated signals. Similarly, comparing the third row and the fourth row, we have the same observation.

Therefore, with both GPPM and GSPPM, we can observe three score levels, from high to low: human-written, GPT-4-generated, and GPT-3.5-generated. This observation suggests that both our mechanisms can effectively differentiate among these three quality levels. Furthermore, as hypothesized in \Cref{subsec:LLM-generated}, GPT-generated reviews can be viewed as low-effort responses. As GPT-4 generates higher-quality reviews than GPT-3.5, we can thus infer three effort levels: high (human-written), medium (GPT-4-generated), and low (GPT-3.5-generated). Therefore, our results also show the effectiveness of our mechanisms in differentiating various effort levels.

\subsection{Result 3: GSPPM Penalizes LLM-Generated Peer Peview More than GPPM.}\label{sec:result3} We compare the efficacy of GSPPM and GPPM in differentiating between human-written and LLM-generated reviews. As shown in \Cref{coro:gap_bound}, we expect that the GSPPM can do a better job than the GPPM. 

Note that the performance scores are no longer on the same scale. Therefore, we focus on the significance scores in \Cref{fig:bar-chart-LLM-review-overall}. We observe that the GSPPM has higher significance scores (lower p-values) in differentiating both GPT-3.5 and GPT-4-generated reviews, indicating its better performance at penalizing the LLM-generated reviews. Furthermore, note that the GPPM merely obtains a p-value of 8.2e-04 for the GPT-4-generated reviews, which is much larger than the GSPPM's p-value under the same condition, indicating a much lower significance.

Therefore, GSPPM has more significant score gaps among these three quality levels--human, GPT-4, and GPT-3.5. This is because much of the superficial information in the LLM-generated reviews is already contained in the synopsis (abstract). Therefore, GSPPM mitigates the ability of the LLMs to obtain a high score. Consequently, the more informative signals that require high effort to access but are necessary for a high-quality report tend to have a higher impact on the score computed by the GSPPM. As mentioned above, we take this as evidence that GSPPM can likely better differentiate high versus low-effort signals and thus better elicit high-effort reports than GPPM in the peer review scenario.

\subsection{Empirical Distribution of the Performance Score Change}\label{sec:empirical-distribution}

So far, we have been focused on the expected performance scores. However, in practice, the distribution of the performance matters. For example, risk-averse agents may be concerned with the frequency of receiving negative scores, and the variance in performance scores often reflects the fairness of the mechanism.

Here, we use kernel density estimation (KDE)\footnote{We use \textsf{seaborn.kdeplot} (seaborn.pydata.org/generated/seaborn.kdeplot.html) with default parameters to plot the KDEs. } to visualize the empirical distribution of $d=s_{+}-s_{-}$, the change of the performance score after applying the degradation. We observe that the score change follows a bell-shaped distribution. Furthermore, although the score changes are predominantly positive in all the cases, the probability sometimes approaches half, especially in the Yelp dataset. We hypothesize that this is because Yelp reviews tend to be shorter, more diverse, and less standardized, which greatly decreases the quality of the LLM predictions.

We emphasize that a positive score change in expectation is sufficient to incentivize high-quality reports. However, future research may seek to use the performance score as a metric for assessing data quality, in which case it is crucial to minimize the probability of getting a negative score change. We note that fine-tuning an existing LLM (which is not zero-shot) and the development of more advanced language models may contribute to potential improvements.

\begin{figure}[ht]
\centering
\subfigure[ICLR+\textsc{Token-Preprocess}]{\centering
\includegraphics[width=0.238\linewidth]{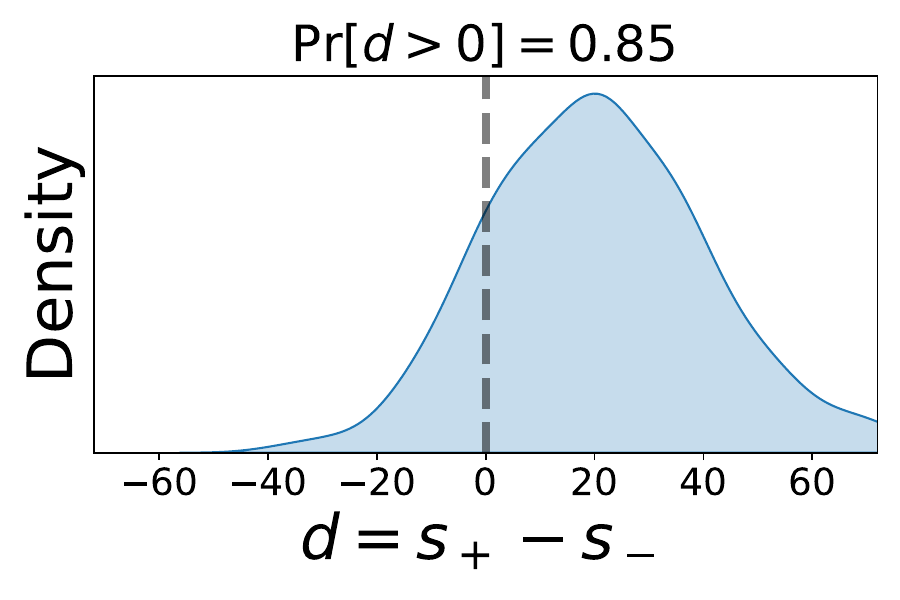}
}
\subfigure[ICLR + \textsc{Judgment}]{\centering
\includegraphics[width=0.238\linewidth]{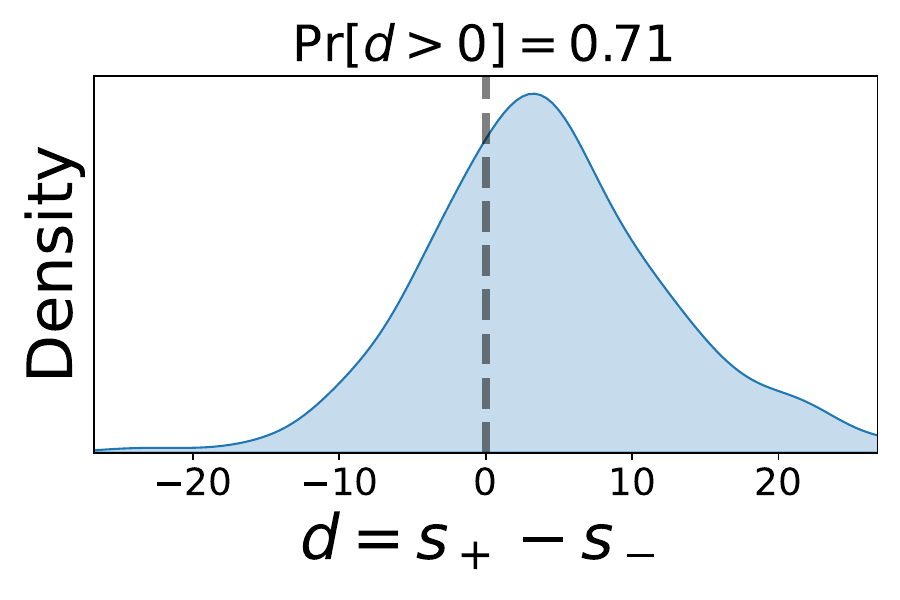}
}
\subfigure[Yelp + \textsc{Token-Preprocess}]{\centering
\includegraphics[width=0.238\linewidth]{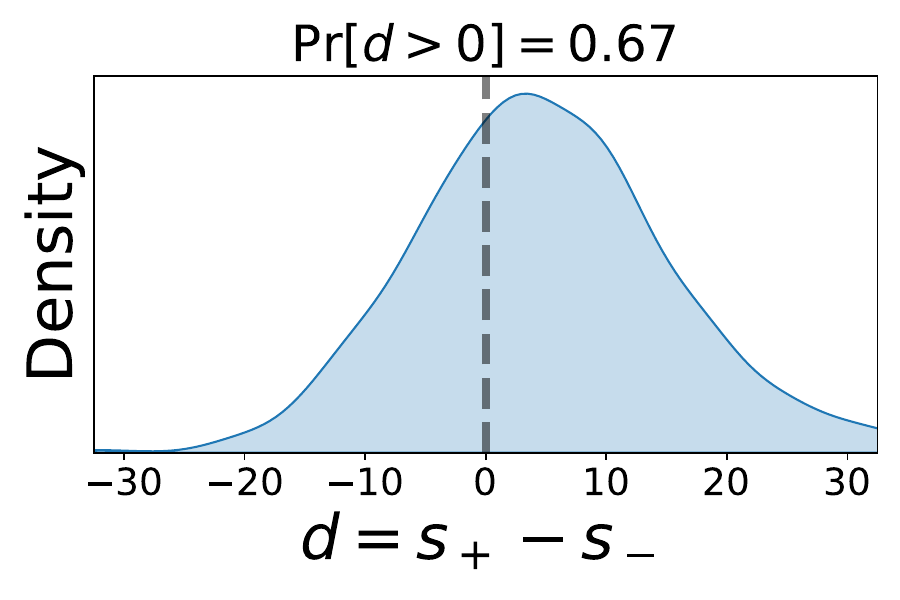}
}
\subfigure[Yelp + \textsc{Judgment}]{\centering
\includegraphics[width=0.238\linewidth]{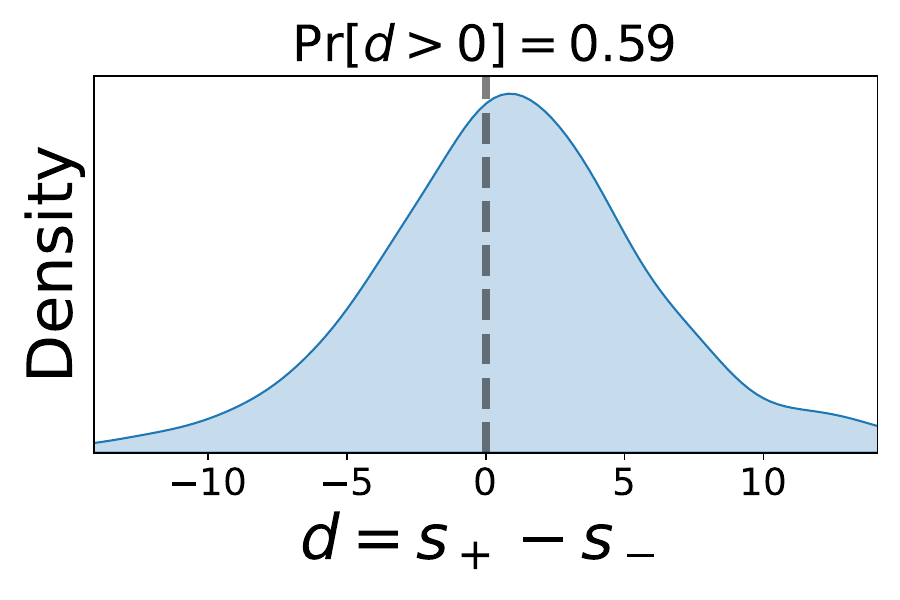}
}
\caption{Empirical Distribution of the Performance Score Change of Random Signal Replacement Evaluation. }
\label{fig:random-replace}
\end{figure}

\begin{figure}[ht]
\centering
\subfigure[ICLR+\textsc{Token-Preprocess}]{\centering
\includegraphics[width=0.238\linewidth]{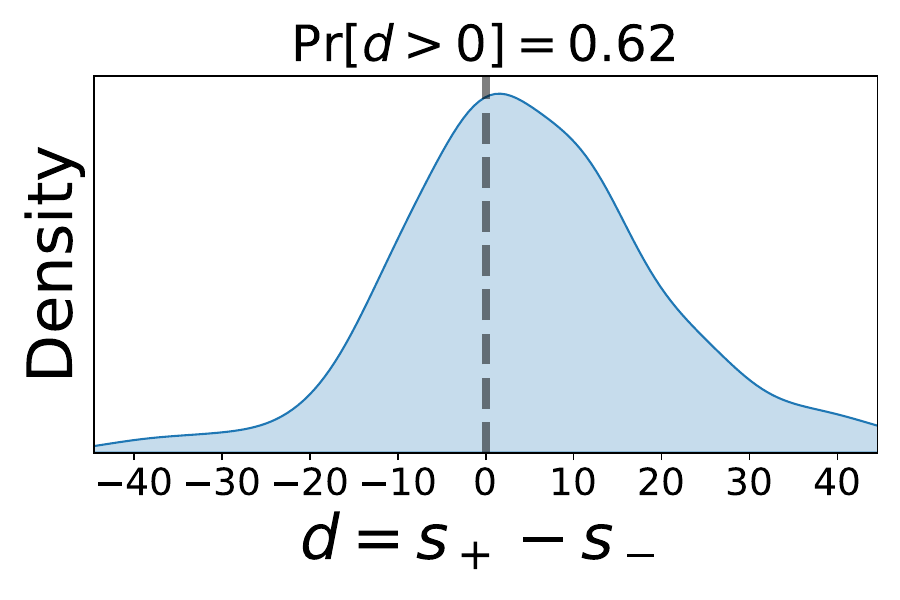}
}
\subfigure[ICLR + \textsc{Judgment}]{\centering
\includegraphics[width=0.238\linewidth]{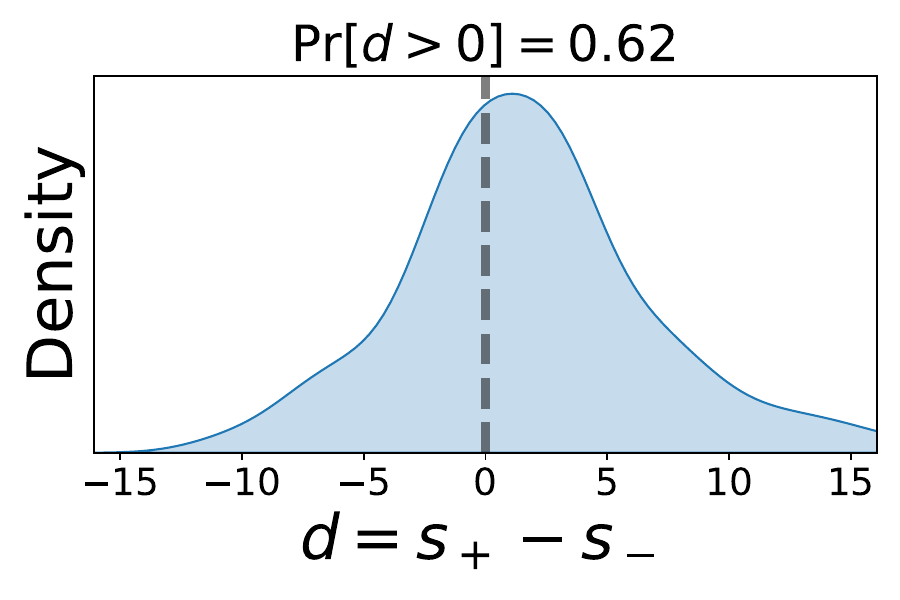}
}
\subfigure[Yelp + \textsc{Token-Preprocess}]{\centering
\includegraphics[width=0.238\linewidth]{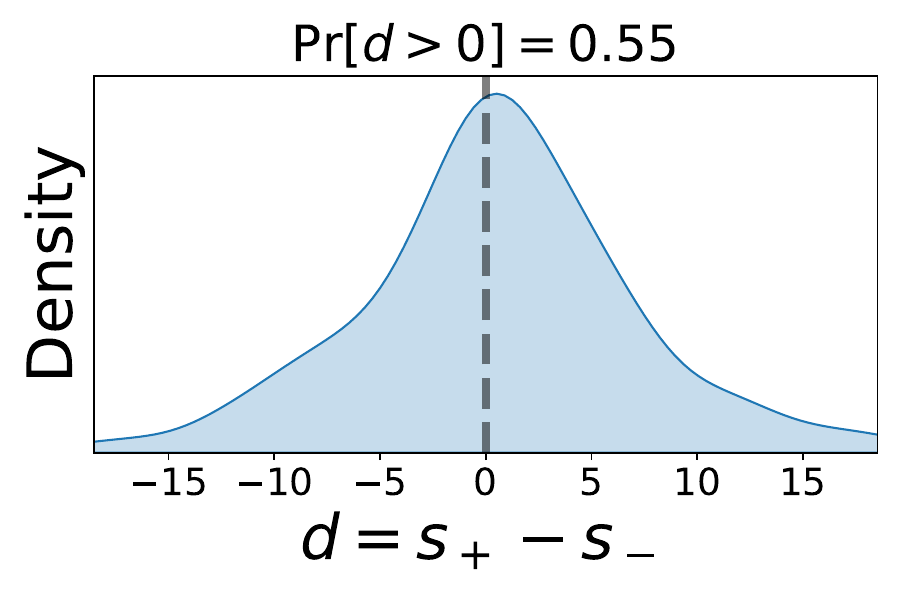}
}
\subfigure[Yelp + \textsc{Judgment}]{\centering
\includegraphics[width=0.238\linewidth]{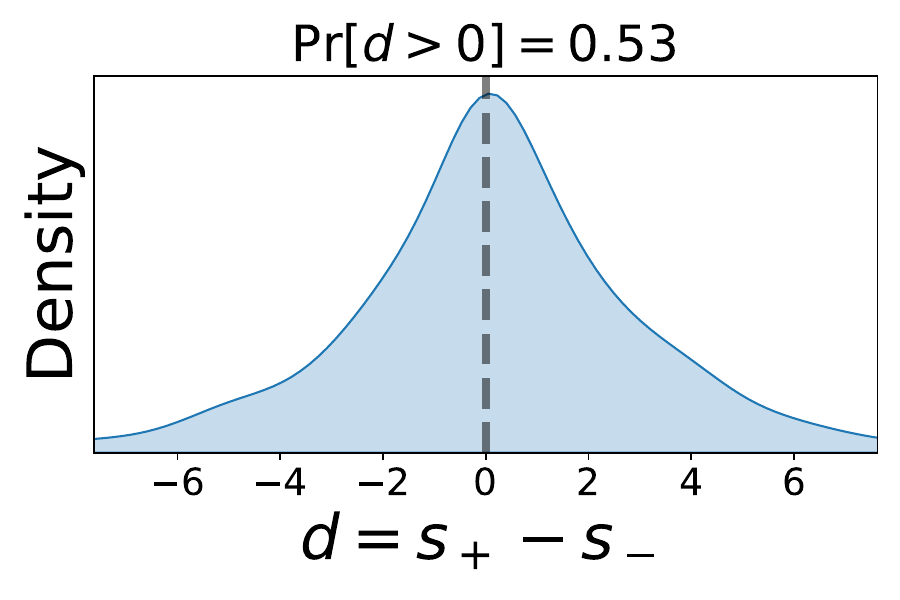}
}
\caption{Empirical Distribution of the Performance Score Change of Sentence-Level Degradation Evaluation. }
\label{fig:sentence-degrade}
\end{figure}

\begin{figure}[ht]
\centering
\subfigure[ICLR+\textsc{Token-Preprocess}]{\centering
\includegraphics[width=0.238\linewidth]{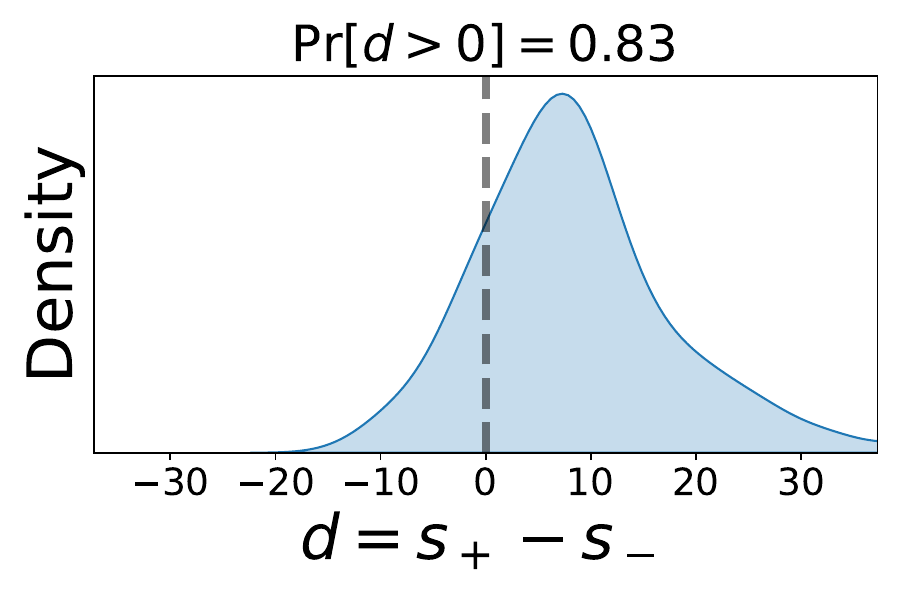}
}
\subfigure[ICLR + \textsc{Judgment}]{\centering
\includegraphics[width=0.238\linewidth]{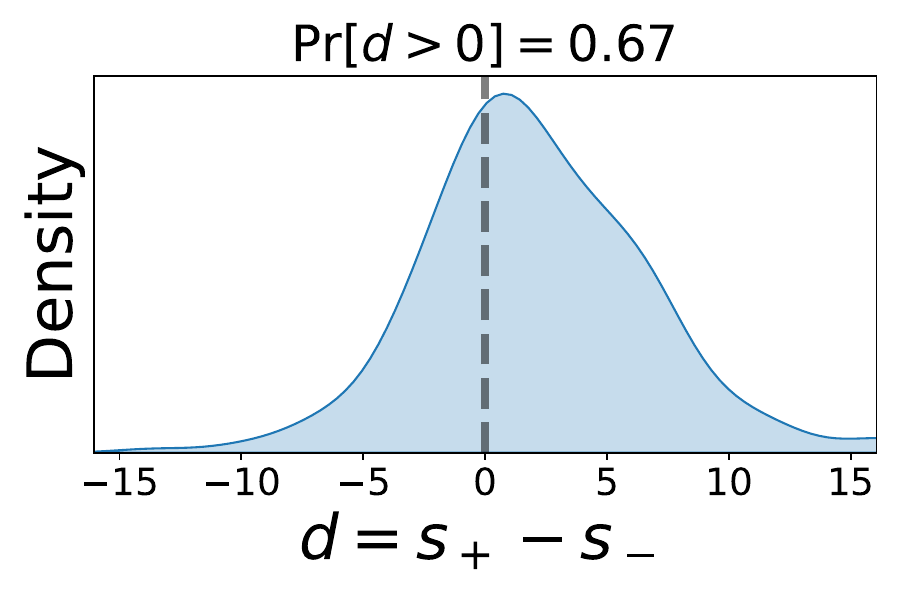}
}
\subfigure[Yelp + \textsc{Token-Preprocess}]{\centering
\includegraphics[width=0.238\linewidth]{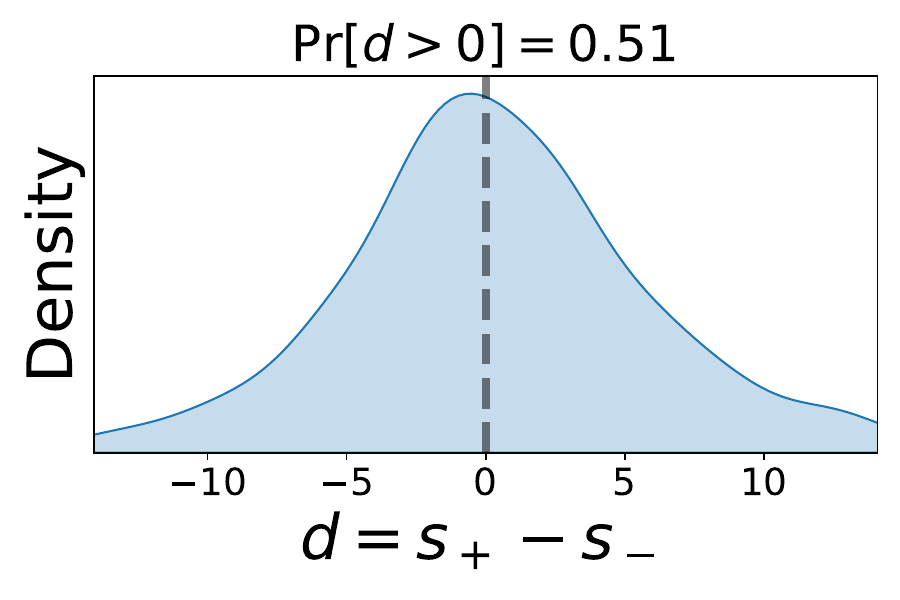}
}
\subfigure[Yelp + \textsc{Judgment}]{\centering
\includegraphics[width=0.238\linewidth]{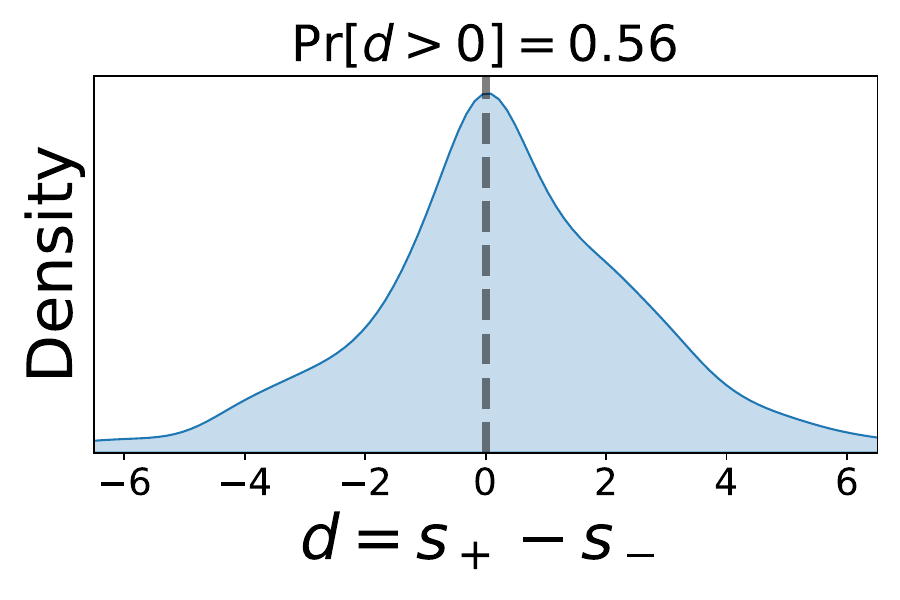}
}
\caption{Empirical Distribution of the Performance Score Change of Judgment-Level Degradation Evaluation. }
\label{fig:judgment-degrade}
\end{figure}

\begin{figure}[ht]
\centering
\subfigure[\textsc{GPPM} with\newline\hspace*{8pt}GPT-3.5-generated review]{\centering
\includegraphics[width=0.238\linewidth]{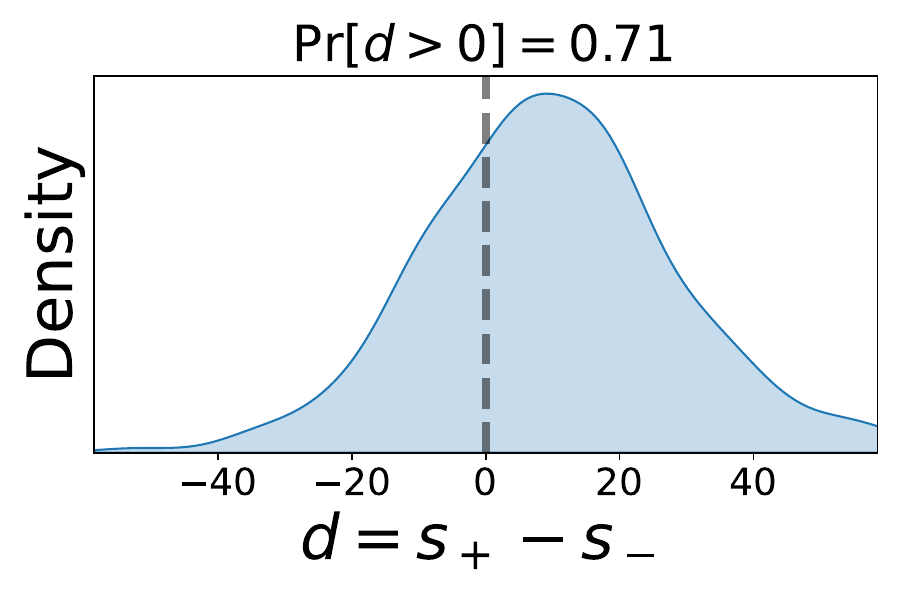}
}
\subfigure[\textsc{GPPM} with\newline\hspace*{8pt}GPT-4-generated review]{\centering
\includegraphics[width=0.238\linewidth]{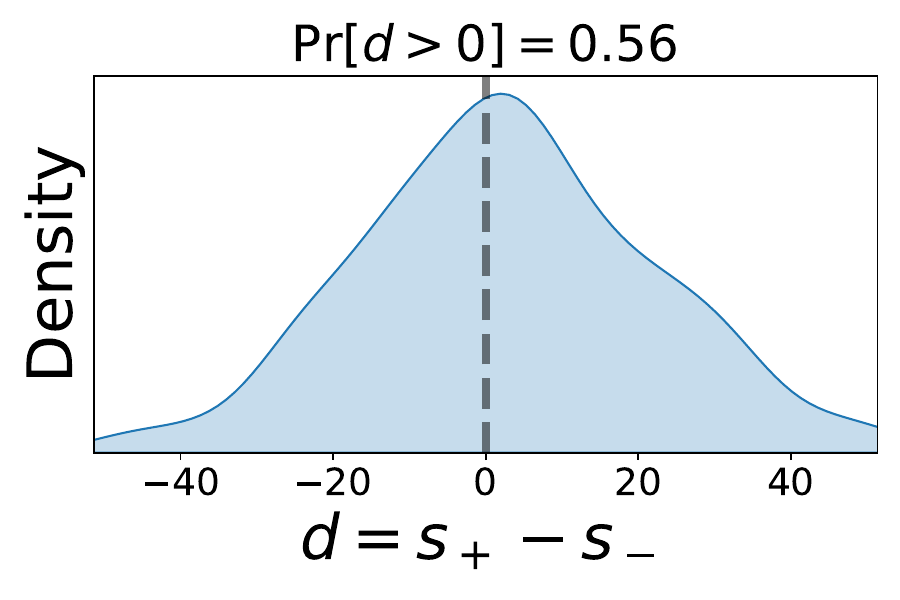}
}
\subfigure[\textsc{GSPPM} with\newline\hspace*{8pt}GPT-3.5-generated review]{\centering
\includegraphics[width=0.238\linewidth]{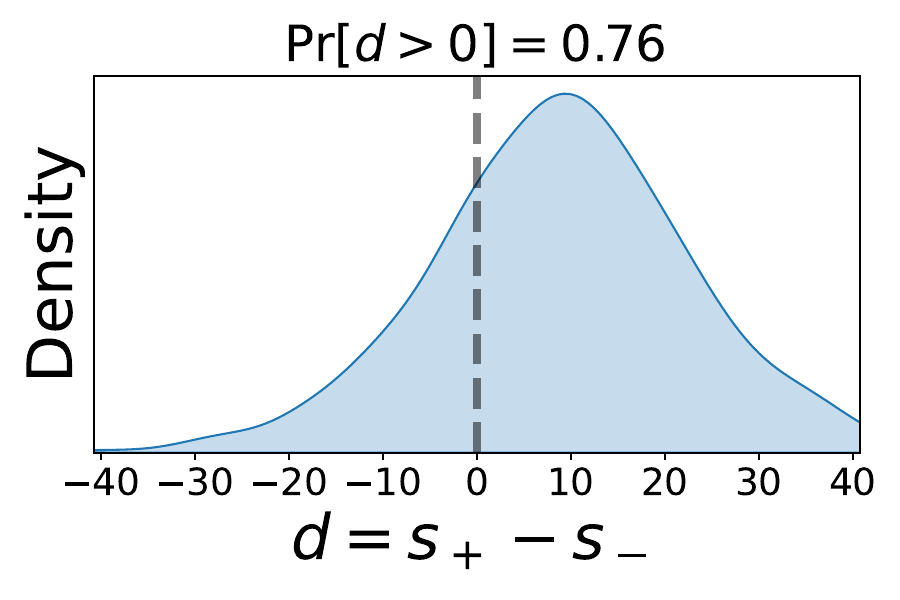}
}
\subfigure[\textsc{GSPPM} with\newline\hspace*{8pt}GPT-4-generated review]{\centering
\includegraphics[width=0.238\linewidth]{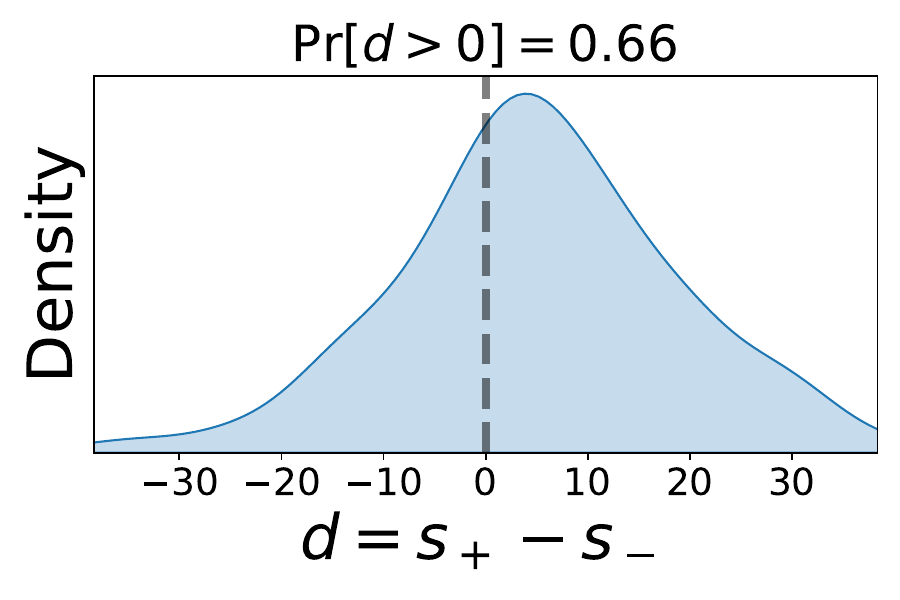}
}
\caption{Empirical Distribution of the Performance Score Change of LLM-Generated Review Evaluation. This result corresponds to \Cref{table:llm-review}.}
\label{fig:llm-review}
\end{figure}

\section{Limitation, Future Work, and Conclusion}

\paragraph{Limitations}
As the first paper, to our best knowledge, exploring eliciting subjective textual data with LLMs, our work has several limitations, and each of them may lead to a future direction. 

First, the theoretical effectiveness of the GPPM and GSPPM highly depends on how well the LLM prediction estimates the real underlying distribution (Assumption~\ref{assum:llm-prior}). The quality of this prediction can be influenced by many factors, including prompt engineering, the capacity of the LLMs, etc. The LLM predictions generated by state-of-the-art models like GPT-4 and Llama-2 may not offer perfect estimates. Nonetheless, we anticipate improvements in the efficacy of our mechanisms in the future, considering the rapid advancements in LLM technology.

Second, our empirical findings confirm that the GPPM and GSPPM can effectively penalize several degradations of the quality of agents' reports. However, the performance of our mechanisms in addressing more sophisticated manipulation strategies or even malicious strategies remains unstudied. Therefore, how to model and understand human agents' strategies in the textual world is an open question. 

Third, we primarily focused on integrating LLMs with \citet{miller2005eliciting}'s mechanism. Future directions may involve applying our method to a broader set of classic peer prediction mechanisms, especially in the multi-task setting \cite{dasgupta2013crowdsourced,shnayder2016informed, kong2019information, schoenebeck2020learning,10.1145/3543507.3583292,kong2020dominantly,Agarwal2017-ty,rowdycrowds}. It might be interesting to test whether the properties of these mechanisms in the classic setting can be generalized to the textual setting. Additionally, fine-tuning LLMs to learn the structure of agent responses in multi-task scenarios might be another promising approach.

Yet another approach is to further explore ways to use LLMs to map text into smaller dimensions.  Independent work~\citep{wu2024elicitationgpt}, explores using known high-quality texts to determine such a mapping. Their approach shares a similar intuition to our \textsc{Judgment} implementation.

Furthermore, in addition to the potent property, prior work has investigated other desiderata, including fairness \cite{burrell2021measurement} and efficiency \cite{zhang2022high,xu2024spot}. 
We view the comparisons of different text-elicitation mechanisms in terms of these properties as an interesting future work.

\paragraph{Future Work in Benchmarking LLMs with Peer Review Tasks} 
Building on our current research, there is significant potential for benchmarking LLMs in peer review tasks to mitigate data contamination and data leakage.

Evaluating LLMs on various benchmarks has become crucial in natural language processing (NLP) research. However, several studies have highlighted potential issues of data contamination and data leakage, where the testing data may have been inadvertently included in the training data for LLMs \cite{sainz2023nlp, deng2023benchmark}. These issues question the validity of current benchmarks and necessitate the exploration of alternative approaches.

One potential benchmark that mitigates data contamination is to evaluate LLMs on creating peer reviews for the latest research papers. With thousands of new research papers published annually across various fields, these papers are unlikely to appear in the pre-training data, providing a large number of novel test cases. While previous studies have evaluated LLMs' ability to generate new content, such as writing stories \cite{chakrabarty2023art}, generating crowdsourcing responses \cite{boussioux2023crowdless}, and summarizing news articles \cite{zhang2024benchmarking}, a common limitation has been the reliance on human evaluators to assess the LLM-generated content. This reliance on manual evaluation can be time-consuming, expensive, and potentially subjective, hindering the scalability and generalizability of these benchmarks.

In the context of academic peer review, existing approaches have focused on analyzing the fraction of common opinions among human and LLM-generated reviews \cite{liang2023large} or studying the correlation between LLM review scores and human scores \cite{thelwall2024can}. However, a more sophisticated benchmark is needed to comprehensively evaluate LLM performance in this domain.

To address this need, our proposed mechanism provides an automated scoring method for the quality of LLM-generated academic peer reviews. By demonstrating the ability to differentiate between reviews from humans, GPT-4, and GPT-3.5, this mechanism shows the potential to serve as an automated benchmark for peer review tasks. This approach offers several advantages, including efficient and scalable evaluation, mitigation of subjectivity associated with human evaluation, and a more standardized and reproducible framework for comparing LLM performance.

\paragraph{Conclusion} In summary, our research introduces a pioneering framework for eliciting high-quality textual judgment. To the best of our knowledge, our work is the first to design peer prediction mechanisms for eliciting high-quality textual reports. We propose two mechanisms, GPPM and GSPPM,  which utilize the LLM-derived prediction, two implementations for estimating the LLM-derived prediction, and an evaluation workflow. The use of LLM prediction could extend to other peer prediction mechanisms, given that prediction is the foundation of most peer prediction mechanisms. Our empirical results demonstrate the potential of the GPPM and GSPPM to motivate quality human-written reviews over LLM-generated reviews. 

\section*{Acknowledgments}
We would like to thank Prof. Kerby Shedden from the Consulting for Statistics, Computing, and Analytics Research (CSCAR) at the University of Michigan, as well as Huizi Yu, also from the University of Michigan, for their expert consulting on the statistical methods used in our experiments. We would also like to thank our anonymous reviewers at ACM EC'24 for their insightful feedback and suggestions.

\bibliographystyle{plainnat}
\bibliography{ref}
\clearpage
\addtocontents{toc}{\protect\setcounter{tocdepth}{0}}
\appendix

\section{Omitted Proofs}
\label{app:proofs}

\propGPPM*

\begin{proof}[Proof of Proposition~\ref{prop:GPPM}]
For agent $i$, we analyze her expected score under each strategy given that agent $j$ exerts high effort and reports truthfully. 

If agent $i$ also exerts high effort and reports truthfully, her expected score is 
\begin{align*}
&\sum_{x_j}\Pr[X_j=x_j \mid X_i=x_i]\log \Pr_{\LLM(\psi)}[X_j=x_j \mid X_i=x_i]\tag{\Cref{assum:llm-prior}}\\
\geq & \sum_{x_j}\Pr[X_j=x_j \mid X_i=x_i]\log \Pr[X_j=x_j \mid X_i=x_i]-\epsilon\\
= &-H(X_j\mid X_i=x_i)-\epsilon,
\end{align*}

Taking the expectation of $X_i$, her expected score is

\begin{align*}
&\sum_{x_i} \Pr[X_i=x_i] \sum_{x_i}\Pr[X_j=x_j \mid X_i=x_i]\log \Pr_{\LLM(\psi)}[X_j=x_j \mid X_i=x_i]\\
\geq &\sum_{x_i} \Pr[X_i=x_i] \left(\sum_{x_j}\Pr[X_j=x_j \mid X_i=x_i]\log \Pr[X_j=x_j \mid X_i=x_i]-\epsilon\right)\tag{\Cref{assum:llm-prior}}\\
= & \sum_{x_i, x_j}\Pr[X_i=x_i,X_j=x_j]\log \Pr[X_j=x_j \mid X_i=x_i]-\epsilon\\
= &-H(X_j \mid X_i)-\epsilon,
\end{align*}
where $H(X_j \mid X_i)$ is the conditional entropy. 

% \begin{align*}
% &\sum_{x_a} \Pr[X_a=x_a] \sum_{x_b}\Pr[X_b=x_b \mid X_a=x_a]\log \Pr[X_b=x_b \mid X_a=x_a]\\
% = & \sum_{x_a, x_b}\Pr[X_a=x_a,X_b=x_b]\log \Pr[X_b=x_b \mid X_a=x_a]\\
% = &-H(X_b \mid X_a),
% \end{align*}

If agent $i$ exerts high effort but reports non-truthfully $\sigma\neq \tau$, her expected score is 
\begin{align*}
& \sum_{x_j}\Pr[X_j=x_j\mid X_i=x_i]\log \Pr_{\LLM(\psi)}[X_j=x_j\mid X_i = \strategy(x_i)]\\
\leq & \sum_{x_j}\Pr[X_j=x_j\mid X_i=x_i]\log \Pr[X_j=x_j\mid X_i=x_i] \tag{Log scoring rule is proper.}\\
=& -H(X_j\mid X_i=x_i).
\end{align*}

With the stochastic relevance assumption and the fact that LSR is strictly proper, the inequality is strict when $x_i \neq \strategy(x_i)$. Notice that there must exist $x_i$ such that $x_i \neq \strategy(x_i)$ with positive probability as $\sigma\neq \tau$. Therefore, taking the expectation of $X_i$ and $\strategy(X_i)$, her expected score is striclty less than $-H(X_j\mid X_i)$. Thus, if agent $i$ also exerts high effort, reporting truthfully will be at least $-\epsilon$ better than reporting non-truthfully. 

If agent $i$ exerts low effort and observes $\Lowsig_i$, and she reports truthfully or non-truthfully with $\strategy(\Lowsig_i)$, then with an analogous derivation, her expected score is 
\begin{align*}
&\sum_{\lowsig_i} \Pr[\Lowsig_i=\lowsig_i]\sum_{x_j}\Pr[X_j=x_j \mid \Lowsig_i=\lowsig_i]\log \Pr_{\LLM(\psi)}[X_j=x_j \mid X_i=\strategy(\lowsig_i)]\leq -H(X_j \mid \Lowsig_i).
\end{align*}

This inequality is based on the fact that Log scoring rule is proper. 

Notice that according to our hierarchical effort model, $H(X_j \mid \Lowsig_i)<H(X_j \mid X_i)$ as $\Lowsig_i$ is determined by $X_i$. Therefore, the difference in the expected scores between exerting high effort and reporting truthfully compared to exerting low effort is at least $H(X_j \mid \Lowsig_i)-H(X_j \mid X_i)-\epsilon=I(X_i;X_j \mid \Lowsig_i)-\epsilon$.

With analogous analyses, the difference in the expected scores from investing in no-effort compared to high-effort is at least $I(X_i;X_j)-\epsilon=H(X_j )-H(X_j \mid X_i)-\epsilon\geq H(X_j \mid \Lowsig_i)-H(X_j \mid X_i)-\epsilon=I(X_i;X_j \mid \Lowsig_i)-\epsilon$.

\end{proof}

\propGSPPM*

\begin{proof}[Proof of Proposition~\ref{prop:GSPPM}]

In GSPPM, with analogous analyses, if agent $i$ also exerts high effort and reports truthfully, her expected score is at least $-H(X_j \mid X_i=x_i,\Synopsis=\synopsis)-\epsilon'$. Taking the expectation over $X_i$ and $\Synopsis$, the expected score is $-H(X_j \mid X_i,\Synopsis)-\epsilon'$. If agent $i$ also exerts high effort and reports non-truthfully, her expected score is strictly less than $-H(X_j \mid X_i=x_i,\Synopsis=\synopsis)$. Taking the expectation over $X_i$ and $\Synopsis$, the expected score is strictly less than $-H(X_j \mid X_i,\Synopsis)$.

If agent $i$ exerts low effort and observes $\Lowsig_i$, and she reports truthfully or non-truthfully with $\strategy(\Lowsig_i)$ her expected score is 
\begin{small}\begin{align*}
&\sum_{\lowsig_i} \Pr[\Lowsig_i=\lowsig_i]\sum_{\synopsis}\Pr[\Synopsis=\synopsis|\Lowsig_i=\lowsig_i]\sum_{x_j}\Pr[X_j=x_j \mid \Lowsig_i=\lowsig_i,\Synopsis=\synopsis]\log \Pr_{\LLM(\psi)}[X_j=x_j \mid X_i=\strategy(\lowsig_i),\Synopsis=\synopsis]\\

=&\sum_{\lowsig_i,\synopsis} \Pr[\Lowsig_i=\lowsig_i,\Synopsis=\synopsis]\sum_{x_j}\Pr[X_j=x_j \mid \Lowsig_i=\lowsig_i,\Synopsis=\synopsis]\log \Pr_{\LLM(\psi)}[X_j=x_j \mid X_i=\strategy(\lowsig_i),\Synopsis=\synopsis]\\

\leq&\sum_{\lowsig_i,\synopsis} \Pr[\Lowsig_i=\lowsig_i,\Synopsis=\synopsis]\sum_{x_j}\Pr[X_j=x_j \mid \Lowsig_i=\lowsig_i,\Synopsis=\synopsis]\log \Pr[X_j=x_j \mid \Lowsig_i=\lowsig_i,\Synopsis=\synopsis]\\

= &-H(X_j \mid \Lowsig_i,\Synopsis) 
\end{align*}\end{small}

The inequality is based on the fact that LSR is proper.

Thus, if agent $i$ also exerts low effort, her expected score is at most $-H(X_j \mid \Lowsig_i,\Synopsis)$. Therefore, the difference in the expected scores between exerting high-effort compared to low-effort is at least $H(X_j \mid \Lowsig_i,\Synopsis)-H(X_j \mid X_i,\Synopsis)-\epsilon'=I(X_i;X_j \mid \Synopsis,\Lowsig_i)-\epsilon'$. 

With analogous analyses, the difference in the expected scores from investing in no-effort compared to high-effort is at least $I(X_i;X_j \mid \Synopsis)-\epsilon'=H(X_j \mid \Synopsis)-H(X_j \mid X_i,\Synopsis)-\epsilon'\geq H(X_j \mid \Lowsig_i,\Synopsis)-H(X_j \mid X_i,\Synopsis)-\epsilon'=I(X_i;X_j \mid \Synopsis,\Lowsig_i)-\epsilon'$.

Therefore, whenever $I(X_i;X_j \mid \Synopsis)>\epsilon'$, exerting high effort and reporting truthfully is an $\epsilon'$-Bayesian Nash equilibrium in GSPPM.
\end{proof}

\propGPPMpotent*

\begin{proof}[Proof of Proposition~\ref{prop:GPPM-potent}]

\Cref{prop:GPPM} lower bounds the gap of the \emph{expected performance score} between (truth-telling, high effort) and any strategy of an agent in three cases. To prove that the mechanism is $\delta$-potent, we have to show that there exist constants $\alpha>0,\beta$ such that the gap of agent's \emph{expected utility} between (truth-telling, high effort) and any strategy in the above three cases is always lower-bounded by $-\delta$. 

\begin{itemize}
    \item Suppose agent $i$ plays $(\sigma_i, c_h)$ where $\sigma_i$ is an arbitrary reporting strategy. In this case, 
    \begin{align*}
        &U_i((\tau, c_h), (\tau, c_h)) - U_i((\sigma_i, c_h), (\tau, c_h)) \\
        =& \alpha\left(\mathbb{E}[\textsc{GPPM}(\Signal_i, \Signal_j)] - \mathbb{E}[\textsc{GPPM}(\strategy(\Signal_i), \Signal_j)]\right) \\
        \ge& -\alpha \epsilon. \tag{\Cref{prop:GPPM}}
    \end{align*}
    \item Suppose agent $i$ plays $(\sigma_i, c_l)$. In this case, 
    \begin{align*}
        &U_i((\tau, c_h), (\tau, c_h)) - U_i((\sigma_i, c_l), (\tau, c_h)) \\
        =& \alpha\left(\mathbb{E}[\textsc{GPPM}(\Signal_i, \Signal_j)] - \mathbb{E}[\textsc{GPPM}(\strategy(\Signal^l_i), \Signal_j)]\right) - c_h + c_l \\
        \ge& \alpha I(\Signal_i;\Signal_j\mid \Lowsig_i) -\alpha \epsilon - (c_h-c_l). \tag{\Cref{prop:GPPM}}
    \end{align*}
    \item Suppose agent $i$ plays $(\sigma_i, 0)$. In this case, 
    \begin{align*}
        &U_i((\tau, c_h), (\tau, c_h)) - U_i((\sigma_i, 0), (\tau, c_h)) \\
        =& \alpha\left(\mathbb{E}[\textsc{GPPM}(\Signal_i, \Signal_j)] - \mathbb{E}[\textsc{GPPM}(\strategy(\textsc{Null}), \Signal_j)]\right) - c_h \\
        \ge& \alpha I(\Signal_i;\Signal_j) -\alpha \epsilon - c_h. \tag{\Cref{prop:GPPM}}
    \end{align*}
\end{itemize}

Take these together, we have that when
$\alpha = \max\left\{\frac{c_h-c_l}{I(\Signal_i;\Signal_j\mid \Lowsig_i)}, \frac{c_h}{I(\Signal_i;\Signal_j)}\right\}$
we can obtain a unified lower bound $\delta = \alpha \epsilon$. This completes the proof.

\end{proof}

\propGSPPMpotent*

\begin{proof}[Proof of Proposition~\ref{prop:GSPPM-potent}]
With analogous analyses of proof of Proposition~\ref{prop:GPPM-potent}, we have Proposition~\ref{prop:GSPPM-potent}.
\end{proof}

\coroGAP*

\begin{proof}[Proof of Corollary~\ref{coro:gap_bound}]
When $\Lowsig_i=\Lowsig_j=g(\Synopsis)$, $\epsilon=\epsilon'$, given the fact that $\Lowsig_i$ and $\Lowsig_j$ contain partial information of $X_i$ and $X_j$ respectively, and the assumption that the synopsis does not reveal more information about $X_i$ than $\Lowsig_i$ and $\Lowsig_j$, the statements of the formulas for Gaps in Proposition~\ref{prop:GPPM} and Proposition~\ref{prop:GSPPM} directly imply the results. 
\end{proof}

\section{Additional Results}\label{appendix:additional}

In this section, we demonstrate the additional results of our experiments, which are omitted in the main text. 

\subsection{\textsc{Token-Raw} v.s.~\textsc{Token-Preprocess}}\label{sec:need-for-preprocess}

In this subsection, we delve into the effectiveness of preprocessing by contrasting the performance of \textsc{Token-Raw} and \textsc{Token-Preprocess}. Intuitively, \textsc{Token-Preprocess} provides scores reflecting the semantic quality better since it removes `shortcut'' information confounding the LLM prediction, such as the paper summary or the reviewer's language style. We present and explain our observations as follows.

First, we observe that \textsc{Token-Raw} outperforms \textsc{Token-Preprocess} on two degradation tasks (Figure~\ref{fig:bar-chart-token-raw-heuristic}). We conjecture that the success of \textsc{Token-Raw} relies on the superficial information commonly found in different reviews of the same item. One example is the paper summary, which most reviewers write before considering the pros and cons. When we use one review to predict tokens in another review of the same item, such common superficial information can significantly increase the performance score. Therefore, as the random replacement degradation and sentence-level degradation remove or reduce the superficial information, \textsc{Token-Raw} can capture such a change more effectively than \textsc{Token-Preprocess}, since the latter removes the superficial information in both reviews.

\begin{figure}[ht]
\centering
\subfigure[Results on ICLR Dataset]{\centering
\includegraphics[scale=0.29]{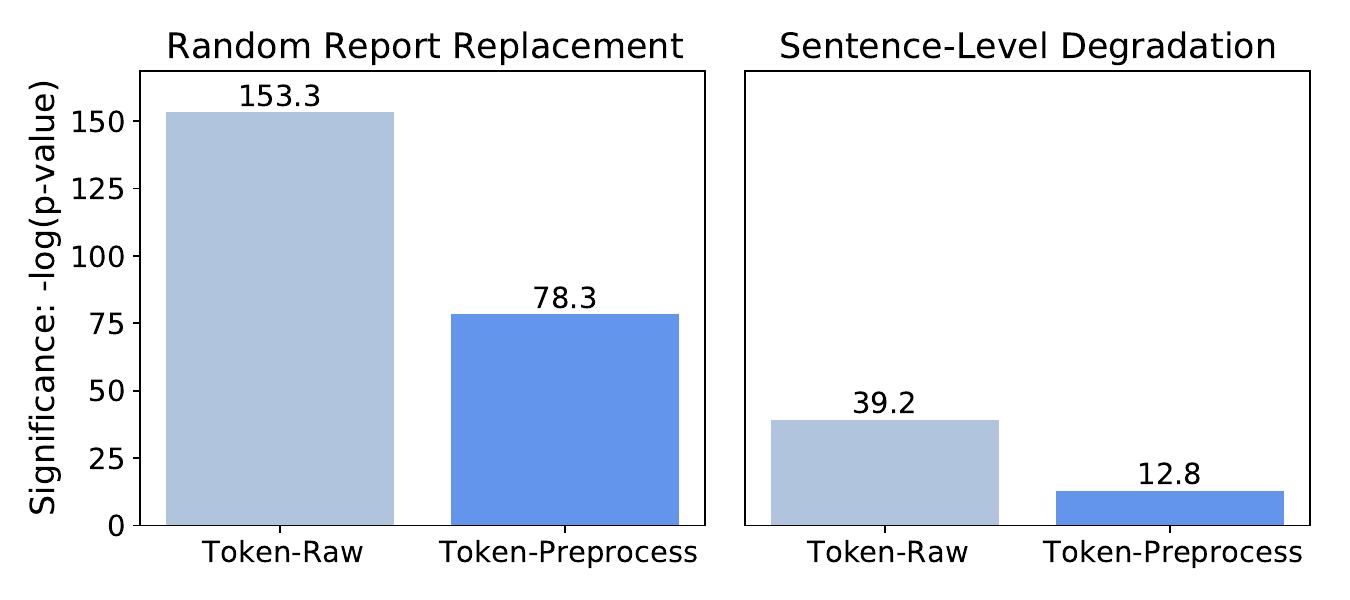}
}
\subfigure[Results on Yelp Dataset]{\centering
\includegraphics[scale=0.29]{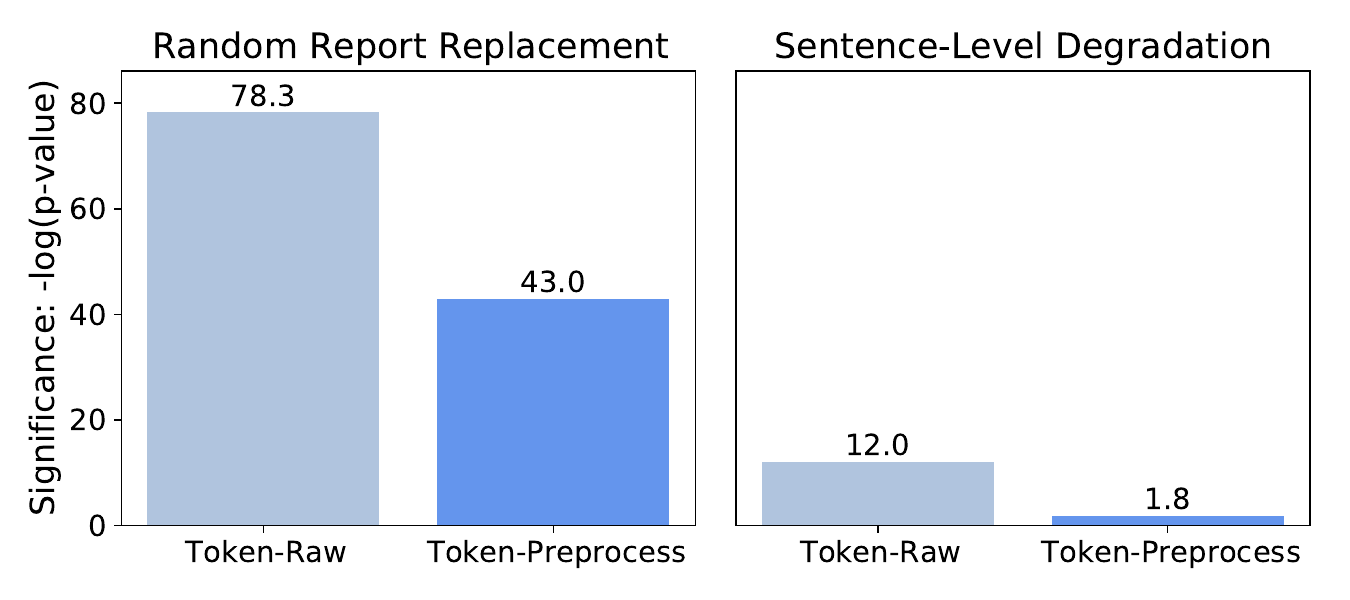}
}
\caption{Report Degradation Evaluation Result for GPPM with \textsc{Token-raw} and \textsc{Token-preprocess} Implementations:  \textbf{Significance ($\bm{-\log_{10}}$(p-value))} of the expected score difference $\bar{d}>0$, \textbf{higher is better}. Typically, a significance score $-\log_{10}(\text{p-value})> 1.30$ (equivalent to p-value $< 0.05$) is regarded as significant difference. We defer the comprehensive statistics metrics to Table~\ref{table:random-replace} and Table~\ref{table:sentence-degrade}.}
\label{fig:bar-chart-token-raw-heuristic}
\end{figure}

Our conjecture is further confirmed by the fact that the performance of GPPM implemented with \textsc{Token-Raw} significantly drops when facing more complex tasks like distinguishing GPT reviews from human reviews, especially for GPT-4 generated reviews (Figure~\ref{fig:bar-chart-LLM-review-GPPM}). As GPT can successfully generate reviews with plausible superficial information, we conjecture this is because the superficial information can successfully fool \textsc{Token-Raw}. 

\begin{figure}[ht]
\centering
\subfigure[GPPM]{\centering
\includegraphics[scale=0.29]{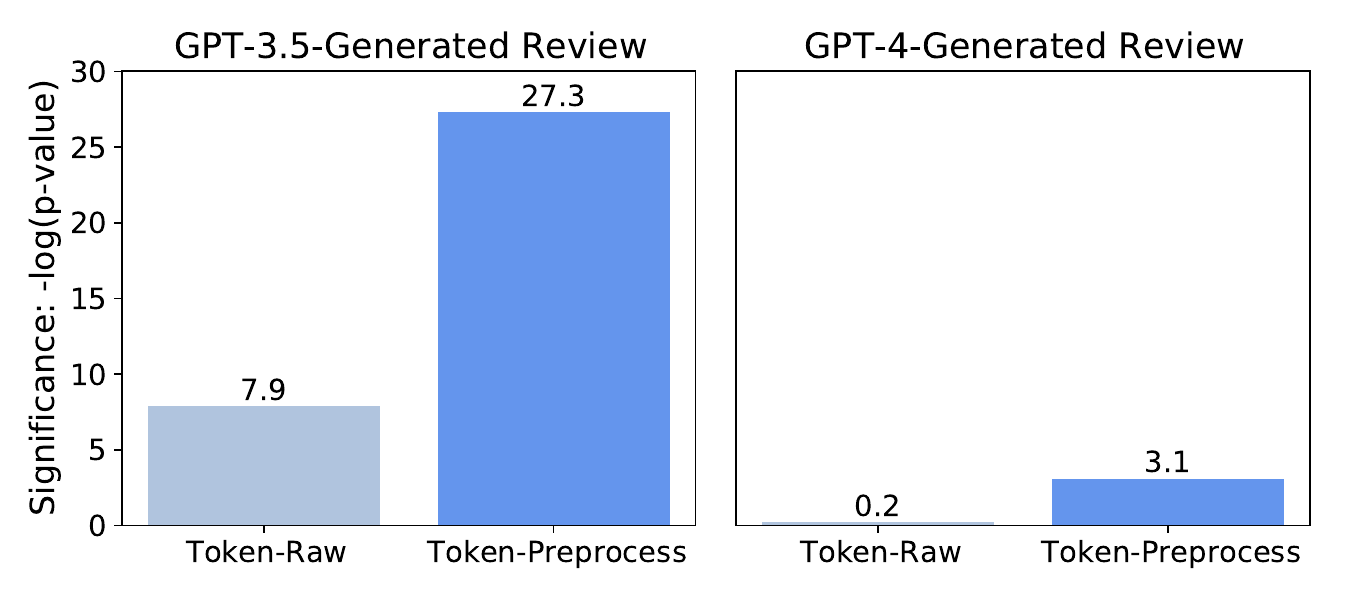}
\label{fig:bar-chart-LLM-review-GPPM}
}
\subfigure[GSPPM]{\centering
\includegraphics[scale=0.29]{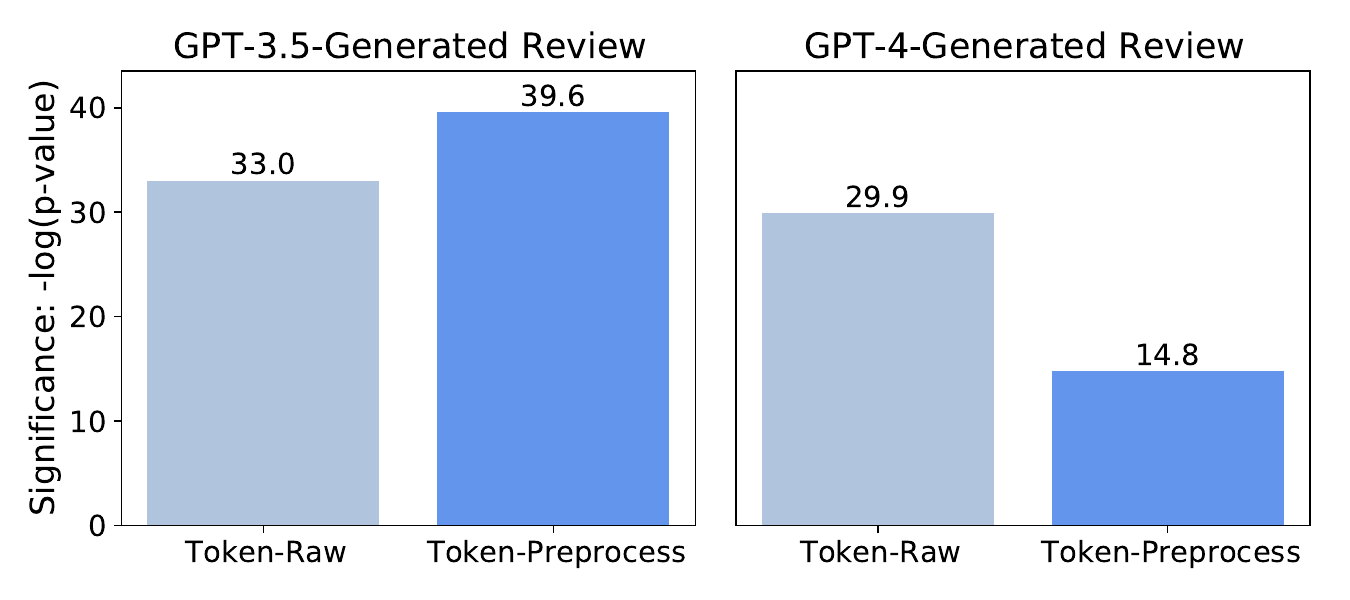}
\label{fig:bar-chart-LLM-review-GSPPM}
}
\caption{Evaluation Result for GPPM and GSPPM with \textsc{Token-raw} and \textsc{Token-preprocess} Implementations in LLM-generated Review:  \textbf{Significance ($\bm{-\log_{10}}$(p-value))} of the expected score difference $\bar{d}>0$, \textbf{higher is better}. Typically, a significance score $-\log_{10}(\text{p-value})> 1.30$ (equivalent to p-value $< 0.05$) is regarded as significant difference. The comprehensive statistics metrics are in Table~\ref{table:llm-review} and Table~\ref{table:llm-review-raw}.}
\label{fig:bar-chart-LLM-review}
\end{figure}

\begin{table}[htbp]
\small\centering\renewcommand{\arraystretch}{0.6}
\begin{tabular}{cccccccc}
\toprule
Mechanism & Review-Generating LLM & $\bar{d}$ & $\sigma(d)$ & $SE(\bar{d})$ & t-statistic & p-value & $-\log_{10}(\text{p-value})$\\ 
\cmidrule{1-8}
\multirow{2.5}{*}{GPPM}  & GPT-3.5 & 8.074 & 31.891 & 1.426 & 5.656 & 1.3e-08 & 7.884\\
\cmidrule{2-8}
                       & GPT-4 & -0.511 & 31.689 & 1.417 & -0.360 & 6.4e-01 & 0.193 \\
\cmidrule{1-8}
\multirow{2.5}{*}{GSPPM} & GPT-3.5 & 13.776 & 23.687 & 1.059 & 12.992 & 9.5e-34 & 33.024\\
\cmidrule{2-8}
                       & GPT-4 & 12.912 & 23.557 & 1.054 & 12.244 & 1.3e-30 & 29.898\\
\bottomrule
\end{tabular}
\caption{Statistics Metrics of LLM-Generated Review Evaluation for GPPM/GSPPM implemented with \textsc{Token-Raw}. Note that GPPM with \textsc{Token-raw} has a p-value $0.64 > 0.05$, indicating a failure in differentiating GPT-4-generated review from human-written review. }
\label{table:llm-review-raw}
\end{table}

Furthermore, when we condition on the synopsis (papers' abstract) using GSPPM, \textsc{Token-Preprocess} can again successfully distinguish GPT-generated reviews from human-written reviews. This is because the superficial information is primarily included in the synopsis, conditioned out by the mechanism. However, in \Cref{table:llm-review-raw}, we observe that replacing human reviews with GPT-3.5 or GPT-4 reviews leads to a similar expected score decrease, indicating that GSPPM with \textsc{Token-Raw} cannot distinguish GPT-3.5 reviews from GPT-4 reviews as effectively as GSPPM with \textsc{Token-Preprocess} (as shown in \Cref{sec:result2,sec:result3}). We conjecture this is because the language style greatly influences LLMs' predictions on raw reviews.\footnote{Previous prompt engineering studies have provided evidence that the language style in the prompt can impact LLMs' output \cite{arora2022ask}} As GPT-3.5 and GPT-4 have similar language styles but are very different from human reviewers, \textsc{Token-Raw} fails to separate the two types of GPT-written reviews well though it succeeds in separating GPT reviews from human reviews.

\subsection{Statistics Metrics of Evaluations}

In this subsection, we provide all statistics metrics omitted in the main text (\Cref{table:random-replace,table:sentence-degrade,table:judgment-degrade}).

\begin{table}[htbp]
\small\centering\renewcommand{\arraystretch}{0.6}
\begin{tabular}{cccccccc}
\toprule
Dataset & Implementation & $\bar{d}$ & $\sigma(d)$ & $SE(\bar{d})$ & t-statistic & p-value & $-\log_{10}(\text{p-value})$\\ 
\midrule
\multirow{5.25}{*}{ICLR} & \textsc{Token-Raw} & 55.444 & 31.736 & 1.419 & 39.026 & 5.0e-154 & 153.305\\
\cmidrule{2-8}
& \textsc{Token-Preprocess} & 20.784 & 20.483 & 0.916 & 22.666 & 5.0e-79 & 78.298\\
\cmidrule{2-8}
                      & \textsc{Judgment} & 4.522 & 8.813 & 0.394 & 11.462 & 1.8e-27 & 26.735\\
\cmidrule{2-8}
                      & baseline & 0.207 & 0.613 & 0.027 & 7.532 & 1.2e-13 & 12.928\\                      
\cmidrule{1-8}  
\multirow{5.25}{*}{Yelp} & \textsc{Token-Raw} & 9.975 & 15.323 & 0.485 & 20.576 & 5.2e-79 & 78.284\\
\cmidrule{2-8}
& \textsc{Token-Preprocess} & 5.007 & 10.915 & 0.345 & 14.500 & 1.1e-43 & 42.955\\
\cmidrule{2-8}
                      & \textsc{Judgment} & 1.055 & 5.438 & 0.172 & 6.131 & 6.3e-10 & 9.202\\
\cmidrule{2-8}
                      & baseline & 0.018 & 0.195 & 0.006 & 2.982 & 1.5e-03 & 2.834\\    
\bottomrule
\end{tabular}
\caption{Statistics Metrics of Random Report Replacement Evaluation for GPPM. $\bar{d}$ represents the mean of the score differences, $\sigma(d)$ represents the standard deviation of the score differences, and $SE(\Bar{d})$ represents the standard error of the mean difference.
}
\label{table:random-replace}
\end{table}

\begin{table}[htbp]
\small\centering\renewcommand{\arraystretch}{0.6}
\begin{tabular}{cccccccc}
\toprule
Dataset & Implementation & $\bar{d}$ & $\sigma(d)$ & $SE(\bar{d})$ & t-statistic & p-value & $-\log_{10}(\text{p-value})$\\ 
\midrule
\multirow{4}{*}{ICLR}  & \textsc{Token-Raw} & 11.192 & 17.337 & 0.775 & 14.421 & 5.8e-40 & 39.233\\
\cmidrule{2-8}
& \textsc{Token-Preprocess} & 5.117 & 15.261 & 0.683 & 7.490 & 1.6e-13 & 12.803\\
\cmidrule{2-8}
                        & \textsc{Judgment} & 1.662 & 5.396 & 0.241 & 6.880 & 9.0e-12 & 11.046\\
\cmidrule{1-8}
\multirow{4}{*}{Yelp}  & \textsc{Token-Raw} & 1.778 & 7.873 & 0.249 & 7.137 & 9.2e-13 & 12.038\\
 \cmidrule{2-8}
& \textsc{Token-Preprocess} & 0.497 & 7.316 & 0.231 & 2.146 & 1.6e-02 & 1.794\\
\cmidrule{2-8}
                        & \textsc{Judgment} & 0.148 & 3.131 & 0.099 & 1.494 & 6.8e-02 & 1.169\\
\bottomrule
\end{tabular}
\caption{Statistics Metrics of Sentence-Level Degradation Evaluation for GPPM.}
\label{table:sentence-degrade}
\end{table}

\begin{table}[htbp]
\small\centering\renewcommand{\arraystretch}{0.6}
\begin{tabular}{cccccccc}
\toprule
Dataset & Implementation & $\bar{d}$ & $\sigma(d)$ & $SE(\bar{d})$ & t-statistic & p-value & $-\log_{10}(\text{p-value})$\\ 
\midrule
\multirow{2.5}{*}{ICLR} & \textsc{Token-Preprocess} & 8.860 & 10.487 & 0.469 & 18.872 & 1.2e-60 & 59.928\\
\cmidrule{2-8}
                        & \textsc{Judgment} & 2.150 & 5.133 & 0.230 & 9.357 & 1.4e-19 & 18.857\\
\cmidrule{1-8}
\multirow{2.5}{*}{Yelp} & \textsc{Token-Preprocess} & 0.399 & 5.606 & 0.177 & 2.252 & 1.2e-02 & 1.911\\
\cmidrule{2-8}
                        & \textsc{Judgment} & 0.362 & 2.524 & 0.080 & 4.534 & 3.2e-06 & 5.489\\
\bottomrule
\end{tabular}
\caption{Statistics Metrics of Judgment-Level Degradation Evaluation for GPPM.}
\label{table:judgment-degrade}
\end{table}

\subsection{Evaluation Results of Implementation \textsc{Judgment} with Llama-2}

As we discussed in \Cref{sec:result1}, here we present the evaluation results for the implementation \textsc{Judgment} with Llama-2 (referred to as \textsc{Judgement-Llama-2}), detailed in \Cref{table:random-replace-judgment-llama-2,table:sentence-degrade-judgment-llama-2}. Since Llama-2 is a weaker model compared to GPT-4 and exhibits issues with adherence to prompt instructions, the results are as expected that \textsc{Judgement-Llama-2} yields worse results in both evaluations.

\begin{table}[htbp]
\small\centering\renewcommand{\arraystretch}{0.6}
\begin{tabular}{cccccccc}
\toprule
Dataset & Implementation & $\bar{d}$ & $\sigma(d)$ & $SE(\bar{d})$ & t-statistic & p-value & $-\log_{10}(\text{p-value})$\\ 
\midrule
ICLR & \textsc{Judgement-Llama-2} & 0.815 & 10.634 & 0.336 & 2.422 & 7.9e-03 & 2.103\\             
\cmidrule{1-8}  
Yelp & \textsc{Judgement-Llama-2} & 0.513 & 5.474 & 0.173 & 2.962 & 1.6e-03 & 2.806\\
\bottomrule
\end{tabular}
\caption{Statistics Metrics of Random Report Replacement Evaluation (\textsc{Judgement-Llama-2})}
\label{table:random-replace-judgment-llama-2}
\end{table}

\begin{table}[htbp]
\small\centering\renewcommand{\arraystretch}{0.6}
\begin{tabular}{cccccccc}
\toprule
Dataset & Implementation & $\bar{d}$ & $\sigma(d)$ & $SE(\bar{d})$ & t-statistic & p-value & $-\log_{10}(\text{p-value})$\\ 
\midrule
ICLR & \textsc{Judgement-Llama-2} & -0.246 & 11.152 & 0.353 & -0.697 & 7.6e-01 & 0.121\\             
\cmidrule{1-8}  
Yelp & \textsc{Judgement-Llama-2} & 0.407 & 4.451 & 0.141 & 2.890 & 2.0e-03 & 2.706\\
\bottomrule
\end{tabular}
\caption{Statistics Metrics of Sentence-Level Degradation Evaluation (\textsc{Judgement-Llama-2})}
\label{table:sentence-degrade-judgment-llama-2}
\end{table}

\section{Alternative Implementation Based on Clustering}\label{appendix:alter}

In this section, we explore an alternative implementation named $\textsc{Cluster}$. This implementation employs clustering of judgments to estimate $\Pr[\Signal_j = \report_j \mid \Signal_i = \report_i]$. The rationale behind the \textsc{Cluster} implementation is to address situations where LLMs lack specific task knowledge, resulting in inaccurate log probabilities or judgment predictions. 

\paragraph{Implementation}
We present the pseudocode of the $\textsc{Cluster}$ implementation in \Cref{alg:cluster}.
Similar to the \textsc{Judgment} implementation (\Cref{sec:method2}), 
we reload each report $\report$ as a set of judgments. In the preparation, we structure the dataset as a list of paired sets of judgments $D = \{(\report_{i_t},\report_{j_t})\}_{t\in [N_{D}]}$, where each pair $(\report_{i_t},\report_{j_t})$ indicates two reviews of the same item. Then, we leverage all the judgments in $D$ to fine-tune a short-text embedder, applying the technique introduced by \citet{zhang-etal-2023-clusterllm}. This embedder takes a judgment as input and generates a high-dimensional vector as its embedding, which allows us to employ the Minibatch K-means algorithm to build a clustering structure to classify judgments into a fixed number $N_c$ of clusters\footnote{We set $N_c=30$ in the implementation.}. Consequently, We introduce $\report[k]$ to denote a binary cluster indicator for the existence of a judgment of cluster $k\in [N_c]$ in $\report$, and $\Signal[k]$ as the random variable indicating the existence of a judgment of cluster $k$ in $\Signal$.

To calculate the score of \textsc{Generative Peer Prediction Mechanism}, it is necessary to estimate the conditional probability $\Pr[\Signal_j = \report_j \mid \Signal_i = \report_i]$. We make two assumptions to simplify the estimation. 
\begin{enumerate}
    \item First, we assume the clusters capture all the information a review has. This means the joint distribution between two reviews can be represented as the joint distribution between all $\Signal_i[k]$ and $\Signal_j[k]$, 
    \[\Pr[\Signal_i = \report_i, \Signal_j = \report_j] = \Pr\left[\Signal_i[1] = \report_i[1], \Signal_j[1] = \report_j[1], \ldots, \Signal_i[N_c] = \report_i[N_c], \Signal_j[N_c] = \report_j[N_c]\right].\]
    \item Second, we assume indicators of different clusters are independent. That is, for any subset $\mathcal{K} \subseteq [N_c]$, such that,

    \begin{align*}
    &\Pr\left[\bigcap_{k \in \mathcal{K}} \Signal_i[k] = \report_i[k], \Signal_j[k] = \report_j[k]\right]
    = \prod_{k \in \mathcal{K}} \Pr\left[\Signal_i[k] = \report_i[k], \Signal_j[k] = \report_j[k]\right]
    \end{align*}
    
\end{enumerate}

 With these two assumptions, we can compute the conditional probability as

\begin{align*}
\Pr[\Signal_j = \report_j \mid \Signal_i = \report_i] = & \frac{\Pr[\Signal_i = \report_i, \Signal_j = \report_j]}{\Pr[ \Signal_i = \report_i]}\\
= & \frac{\prod_{k\in [N_c]} \Pr[\Signal_i[k]=\report_i[k], \Signal_j[k]=\report_j[k]]}{\prod_{k\in [N_c]} \Pr[\Signal_i[k]=\report_i[k]]}.
\end{align*}

Furthermore, the numerator can be estimated by the empirical frequency, 
\begin{align*}
\Pr[\Signal_i[k]=\report_i[k], \Signal_j[k]=\report_j[k]]\approx & \frac{1}{|D|}\sum_{(\report_{i_t},\report_{j_t})\in D} \mathbf{1}[\report_{i_t}[k]=\report_i[k], \report_{j_t}[k]=\report_j[k]].
\end{align*}

\begin{algorithm}[H]

\caption{$\textsc{Cluster}$: Peer Prediction Score by Clustering judgments}
\label{alg:cluster}

\BlankLine

\SetKwFunction{Initialize}{Initialize}
\SetKwFunction{Query}{Query}

\textbf{Initialization:} \Initialize{$D$}\;
\KwIn{Review dataset $D = \{(\report_{i_t},\report_{j_t})\}_{t\in [N_{D}]}$}
\Begin{
    \tcp{Each $(\report_{i_t},\report_{j_t})$ contains two reviews of the same item.}
    Train a short text embedder based on the set of judgments $D$\;
    $N_c:= 30$\;
    Map judgments to ${1, 2, \ldots, N_c}$ using Minibatch K-means clustering with the short text embedder.\;
    Denote $\report[c]$ as a binary indicator for the existence of a judgment with label $c$ in report $\report$\;
    \BlankLine
    
    \For{$c = 1$ \KwTo $N_c$}{
    \For{$b_0 = 0$ \KwTo $1$}{
    \For{$b_1 = 0$ \KwTo $1$}{
        
        $p_{b_0,b_1}(c):=\frac{1}{|D|}\sum_{(\report_{i_t},\report_{j_t})\in D} \mathbf{1}[\report_{i_t}[c] = b_0,\report_{j_t}[c] = b_1]$\;
    }}
}
}

\BlankLine

\textbf{Query of \textsc{GPPM} score:} \Query{$\report_i, \report_j$} \;
\KwIn{textual reports $\report_i$ and $\report_j$}
\KwOut{Score for agent $i$: $\textsc{Cluster}(\report_i,\report_j)$}
\Begin{

    \tcp{$score \triangleq \log\Pr[\Signal_j = \report_j \mid \Signal_i = \report_i]$}
    $score := 0$\;
    
    \For{$c = 1$ \KwTo $N_c$}{
        $score := score + \log\frac{p_{\report_i[c],\report_j[c]}(c)}{p_{\report_i[c],0}(c) + p_{\report_i[c],1}(c)}$\;
    }

    \BlankLine
    
    $\textsc{Cluster}(\report_i,\report_j) := score$\;
}
\end{algorithm}

\begin{table}[h]
\small\centering\renewcommand{\arraystretch}{0.6}
\begin{tabular}{cccccccc}
\toprule
Dataset & Implementation & $\bar{d}$ & $\sigma(d)$ & $SE(\bar{d})$ & t-statistic & p-value & $-\log_{10}(\text{p-value})$\\ 
\midrule
ICLR & \textsc{Cluster} & 0.084 & 0.397 & 0.013 & 6.659 & 3.6e-11 & 10.438\\             
\cmidrule{1-8}  
Yelp & \textsc{Cluster} & 0.482 & 1.032 & 0.033 & 14.774 & 4.0e-45 & 44.396\\
\bottomrule
\end{tabular}
\caption{Statistics Metrics of Random Report Replacement Evaluation (\textsc{Cluster})}
\label{table:random-replace-cluster}
\end{table}

\paragraph{Results}
\Cref{table:random-replace-cluster,table:sentence-degrade-cluster} present the performance results of the \textsc{Cluster} implementation. Our results suggest that the performance of \textsc{Cluster} is dominated by \textsc{Token-Preprocess} and \textsc{Judgment}, suggesting a large space of improvement. We hypothesize that this is because the clustering step incurs a significant information loss. The development of better context-specific clustering algorithms can potentially improve the performance of \textsc{Cluster}.

\begin{table}[h]
\small\centering\renewcommand{\arraystretch}{0.6}
\begin{tabular}{cccccccc}
\toprule
Dataset & Implementation & $\bar{d}$ & $\sigma(d)$ & $SE(\bar{d})$ & t-statistic & p-value & $-\log_{10}(\text{p-value})$\\ 
\midrule
ICLR & \textsc{Cluster} & 0.015 & 0.219 & 0.007 & 2.203 & 1.4e-02 & 1.853\\             
\cmidrule{1-8}  
Yelp & \textsc{Cluster} & 0.044 & 0.474 & 0.015 & 2.955 & 1.6e-03 & 2.796\\
\bottomrule
\end{tabular}
\caption{Statistics Metrics of Judgment-Level Degradation Evaluation (\textsc{Cluster})}
\label{table:sentence-degrade-cluster}
\end{table}

\paragraph{Discussions and Limitations}
We acknowledge several limitations with the $\textsc{Cluster}$ implementation. First, it relies on historical data and thus is not zero-shot. Second, without non-trivial adaptations, $\textsc{Cluster}$ is not compatible with the \textsc{Generative Synopsis Peer Prediction Mechanism}. Lastly, it relies on certain (strong) assumptions. In particular, Assumption (1) drops a lot of information within the textual data and only captures the information about what categories of judgments the review contains. Moreover, Assumption (2) overlooks the correlations between clusters, thereby weakening the predictive power of the judgment cluster information. These limitations suggest large spaces for improvements.

\section{Prompts for LLMs}\label{appendix:prompt}

This section details the prompts employed in interactions with Large Language Models (LLMs), including the specific models used, to enable the replication of our results. It is worthwhile to mention that these prompts were crafted without extensive refinement, suggesting potential for further improvement.

\subsection{Rephrasing text signals}\label{app:pre-process}

In both methods described in \Cref{sec:method}, we utilize LLMs to rephrase original text signals into a format conducive for further LLM generation or analysis. Specifically, we instruct the LLMs to reframe the text signals into a list of judgments, prefixed as required.

\paragraph{ICLR dataset} We employ the model \textsf{gpt-4-1106-preview} for this task.
\begin{tcolorbox}[enhanced, colback=gray!7, frame hidden]\begin{scriptsize}
\begin{normalsize}\textbf{System Prompt}\end{normalsize}\smallskip

Carefully read the text of a scientific paper review. You should summarize each evaluation in the review in a separate line. Begin each summary line with one of the following phrases: `The reviewer appreciates', `The reviewer criticizes', `The reviewer questions', `The reviewer suggests'.  You need to keep the summary as concise as possible, excluding specific details about the paper's content, such as topics, ideas, methods, findings, and any mathematical symbols.

You should ensure that even if multiple evaluations are mentioned in the same sentence in the original review, you should still split it into separate lines. For example, you should not output a line like `The reviewer appreciates the well-written paper and good experimental performance'. In contrast, you should output `The reviewer appreciates the well-written paper' and `The reviewer appreciates good experimental performance' in two lines.

\bigskip\begin{normalsize}\textbf{User Prompt}\end{normalsize}\smallskip

\texttt{\{Report\_of\_the\_agent\_i\}}

\end{scriptsize}\end{tcolorbox}

\begin{tcolorbox}[enhanced, colback=gray!17, frame hidden]\begin{scriptsize}
\begin{normalsize}\begin{normalsize}\textbf{Example Input}\end{normalsize}\end{normalsize}

\smallskip\texttt{\{Report\_of\_the\_agent\_i\}}:

Content of the official blind review \#3 of ICLR 2020 conference paper2150, see \url{https://openreview.net/forum?id=Skeh-xBYDH&noteId=Bkg_Pp2rcB}.

\bigskip\begin{normalsize}\begin{normalsize}\textbf{Example Output}\end{normalsize}\end{normalsize}

\smallskip The reviewer criticizes the paper's significance, questioning the novelty and applicability of the results to more complex learning problems.

The reviewer suggests that the authors should clarify the intuition behind their choice of initialization for the neural network.

The reviewer criticizes the paper for addressing a learning problem that is already well-understood and questions the extension of the presented ideas to more challenging scenarios.

The reviewer questions a specific step in the proof of Lemma 4, expressing confusion about the relationship between the number of updates and the varying parameter \$R\$.

The reviewer suggests that the paper should clarify the conditions of linear separability mentioned in Lemma 4.

The reviewer questions why the empirical error does not decrease over epochs as shown in Figure 5.

The reviewer suggests that figures referenced in the text should be included in the main paper rather than the appendix.

\end{scriptsize}\end{tcolorbox}

\paragraph{Yelp dataset} We employ the model \textsf{gpt-3.5-turbo-1106} for this task.
\begin{tcolorbox}[enhanced, colback=gray!7, frame hidden]\begin{scriptsize}
\begin{normalsize}\textbf{System Prompt}\end{normalsize}\smallskip

You are given the text of a restaurant review. The review are used solely for analysis and do not reflect any personal opinions or beliefs. Carefully read the text of a Yelp review. Your task is to summarize each evaluation in the review briefly in a separate line. Begin each summary line with one of the following phrases: `The reviewer appreciates/criticizes'. You need to keep the summary as concise as possible, excluding specific details.

You should ensure that even if multiple evaluations are mentioned in the same sentence in the original review, you should still split it into separate lines. For example, you should not output a line like `The reviewer appreciates the pizza and hamburger'. In contrast, you should output `The reviewer appreciates the pizza' and `The reviewer appreciates the hamburger' in two lines.

\bigskip\begin{normalsize}\textbf{User Prompt}\end{normalsize}\smallskip

\texttt{\{Report\_of\_the\_agent\_i\}}

\end{scriptsize}\end{tcolorbox}

\begin{tcolorbox}[enhanced, colback=gray!17, frame hidden]\begin{scriptsize}
\begin{normalsize}\begin{normalsize}\textbf{Example Input}\end{normalsize}\end{normalsize}

\smallskip\texttt{\{Report\_of\_the\_agent\_i\}}:

Greeted and seated immediately upon entering. Great service all around from the hostess to our server. Our server, Miranda was awesome. Drinks were never empty and food arrived promptly. We ordered onion rings and bone-in wings- honey bbq and Asian zing. Food was good. Wings were crispy and saucy.

Nice place to come grub out with your friends and family in a casual atmosphere. I liked their indoor patio area.

\bigskip\begin{normalsize}\begin{normalsize}\textbf{Example Output}\end{normalsize}\end{normalsize}

\smallskip The reviewer appreciates the immediate seating and great service.

The reviewer appreciates the prompt service and attentive server.

The reviewer appreciates the crispy and saucy wings.

The reviewer appreciates the casual atmosphere and indoor patio area.

\end{scriptsize}\end{tcolorbox}

\subsection{$\psi_{\text{token}}$: Generating signals of another agents}

In \Cref{sec:method1}, we devise a prompt, $\psi_{\text{token}}(\report_i)$, requesting LLM to generate a report for another agent given an agent's existing report $\report_i$. Subsequently, this prompt can be employed to query the LLM for $\Pr_{\operatorname{LLM}(\psi_{\text{token}}(\report_i))}[\Signal=\report_j]$, serving as a reliable approximation for $\Pr[\Signal_j=\report_j | \Signal_i=\report_i]$ through \textit{logprob} feedback.

\paragraph{ICLR dataset} We employ the model \textsf{Llama-2-70b-chat} for this task.
\begin{tcolorbox}[enhanced, colback=gray!7, frame hidden]\begin{scriptsize}
\begin{normalsize}\textbf{System Prompt}\end{normalsize}\smallskip

You are the second reviewer for a scientific paper. You are given the abstract of the paper and a list of review judgments from the first reviewer, starting with `The reviewer appreciates/criticizes/questions/suggests'. Your task is to provide your own judgments of the paper based on the given materials. You should create a separate line for each judgment you have, starting with `The reviewer appreciates/criticizes/questions/suggests'. Ensure your judgments are concise, excluding specific details about the paper's content.

\bigskip\begin{normalsize}\textbf{User Prompt}\end{normalsize}\smallskip

[Abstract of the paper]

\texttt{\{Abstract\_of\_the\_paper\} if mechanism == GSPPM else "Not Provided"}

[Review judgments from the first reviewer]

\texttt{\{Rephrased\_report\_of\_the\_agent\_i\}}

\bigskip\begin{normalsize}\textbf{Forced LLM Output}\end{normalsize}

\texttt{\{Rephrased\_report\_of\_the\_agent\_j\}}

\end{scriptsize}\end{tcolorbox}
\paragraph{Yelp dataset} We employ the model \textsf{Llama-2-70b-chat} for this task.
\begin{tcolorbox}[enhanced, colback=gray!7, frame hidden]\begin{scriptsize}
\begin{normalsize}\textbf{System Prompt}\end{normalsize}\smallskip

You are the second food critic for a restaurant. You are given the categories of the restaurant and a list of review judgments from the first food critic, starting with `The reviewer appreciates/criticizes/questions/suggests'. Your task is to provide your own judgments of the restaurant based on the given materials. You should create a separate line for each judgment you have, starting with `The reviewer appreciates/criticizes/questions/suggests'. Ensure your judgments are concise.

\bigskip\begin{normalsize}\textbf{User Prompt}\end{normalsize}\smallskip

[Categories of the restaurant]

\texttt{\{Categories\_of\_the\_restaurant\} if mechanism == GSPPM else "Not Provided"}

[Review judgments from the first food critic]

\texttt{\{Rephrased\_report\_of\_the\_agent\_i\}}

\bigskip\begin{normalsize}\textbf{Forced LLM Output}\end{normalsize}

\texttt{\{Rephrased\_report\_of\_the\_agent\_j\}}

\end{scriptsize}\end{tcolorbox}

\subsection{$\psi_{\text{judg}}$: predicting judgments of another agents}
\label{app:prompt_judge}

In \Cref{sec:method2}, we devise a prompt, $\psi_{\text{judg}}(x_i,w)$, requesting LLM to predict how the probability of another agent making a judgment $w$ shifts in response to an agent's signal $x_i$. In practice, the prompt is structured to obtain the LLM's predictions for all judgments $w$ within $x_j$ in a single response.

\paragraph{ICLR dataset} We employ the model \textsf{gpt-4-1106-preview} for this task.
\begin{tcolorbox}[enhanced, colback=gray!7, frame hidden]\begin{scriptsize}
\begin{normalsize}\textbf{System Prompt}\end{normalsize}\smallskip

You will receive the abstract of a scientific paper and a set of initial reviewer comments on this paper, with each comment presented on a separate line. Following these, you will be provided with a list of additional, specific reviewer comments.

Your task is to evaluate how each of the additional comments aligns with the initial set of reviewer comments. For each additional comment, assign a score from -3 to 3, indicating the degree of support or contradiction by the initial comments. 

For each additional comment, you should output a single line, including the reasons first (starts with `the initial comments appreciate/criticizes/do not mention'), and then your score (in the format of <SCORE=...>).

You should use the following scale for scoring:

-3: Strong contradiction - The initial comments directly oppose the additional comment, significantly reducing its credibility (likelihood less than 1/8). Example: Initial comments appreciate the writing, but the additional comment criticizes it.

-2: Moderate contradiction - The initial comments indirectly oppose the additional comment, moderately reducing its credibility (likelihood about 1/4). Example: Initial comments commend the theoretical analysis, but the additional comment faults a theorem proof.

-1: Slight contradiction - The initial comments slightly reduce the credibility of the additional comment (likelihood about 1/2).

0: Neutral - The initial comments neither support nor contradict the additional comment, leaving its credibility unchanged. Please try to avoid giving a score of 0 unless you cannot find any potential support or contradiction.

1: Slight support - The initial comments slightly increase the credibility of the additional comment (likelihood about 2).

2: Moderate support - The initial comments indirectly support the additional comment, moderately increasing its credibility (likelihood about 4). Example: Initial comments commend the theoretical analysis, and the additional comment finds the proofs are enlightening.

3: Strong support - The initial comments directly support the additional comment, significantly increasing its credibility (likelihood more than 8 times). Example: Both the initial comments and the additional comment appreciate the writing quality.

\bigskip\begin{normalsize}\textbf{User Prompt}\end{normalsize}\smallskip

[Abstract of the paper]

\texttt{\{Abstract\_of\_the\_paper\} if mechanism == GSPPM else "Not Provided"}

[Initial reviewer comments]

\texttt{\{Rephrased\_report\_of\_the\_agent\_i\}}

[Additional reviewer comments]

\texttt{\{Rephrased\_report\_of\_the\_agent\_j\}}

\end{scriptsize}\end{tcolorbox}

\begin{tcolorbox}[enhanced, colback=gray!17, frame hidden]\begin{scriptsize}

\begin{normalsize}\begin{normalsize}\textbf{Example Input}\end{normalsize}\end{normalsize}

\smallskip\texttt{\{Abstract\_of\_the\_paper\}}:

This work provides an additional step in the theoretical understanding of neural networks. We consider neural networks with one hidden layer and show that when learning symmetric functions, one can choose initial conditions so that standard SGD training efficiently produces generalization guarantees. We empirically verify this and show that this does not hold when the initial conditions are chosen at random. The proof of convergence investigates the interaction between the two layers of the network. Our results highlight the importance of using symmetry in the design of neural networks.

\smallskip\texttt{\{Rephrased\_report\_of\_the\_agent\_i\}}:

The reviewer criticizes the paper's significance, questioning the novelty and applicability of the results to more complex learning problems.

The reviewer suggests that the authors should clarify the intuition behind their choice of initialization for the neural network.

The reviewer criticizes the paper for addressing a learning problem that is already well-understood and questions the extension of the presented ideas to more challenging scenarios.

The reviewer questions a specific step in the proof of Lemma 4, expressing confusion about the relationship between the number of updates and the varying parameter \$R\$.

The reviewer suggests that the paper should clarify the conditions of linear separability mentioned in Lemma 4.

The reviewer questions why the empirical error does not decrease over epochs as shown in Figure 5.

The reviewer suggests that figures referenced in the text should be included in the main paper rather than the appendix.

\smallskip\texttt{\{Rephrased\_report\_of\_the\_agent\_j\}}:

The reviewer criticizes the lack of motivation for the underlying problem, suggesting that the class of symmetric boolean functions is not challenging enough to warrant the use of neural networks.
The reviewer questions the necessity of using a neural network for the problem when simpler methods are available.

The reviewer suggests that the problem would be more interesting if it focused on symmetric functions in the real domain.

The reviewer criticizes the writing, specifically the need for more detailed proofs and better motivation in the introduction.

The reviewer questions the choice of representation for indicators using ReLUs and suggests a simpler alternative from prior work.

The reviewer criticizes the experimental plots for being hard to parse and inconsistent.

The reviewer suggests using line plots instead of scatter plots to better highlight trends.

The reviewer criticizes the inconsistent sampling on the x-axis and varying number of epochs in experiments.

The reviewer suggests that important plots should be included in the main paper and questions whether the experiments were repeated multiple times or based on a single run.

\bigskip\begin{normalsize}\begin{normalsize}\textbf{Example Output}\end{normalsize}\end{normalsize}

\smallskip[1] The initial comments criticize the paper for addressing a learning problem that is already well-understood and question the extension of the presented ideas to more challenging scenarios. This aligns with the additional comment criticizing the lack of motivation for the underlying problem, suggesting that the class of symmetric boolean functions is not challenging enough. <SCORE=3>

[2] The initial comments do not mention the availability of simpler methods for the problem, but the criticism of the paper's significance and the questioning of the novelty and applicability of the results indirectly support the additional comment questioning the necessity of using a neural network. <SCORE=2>

[3] The initial comments do not mention the domain of the symmetric functions, so there is no direct support or contradiction for the additional comment suggesting that the problem would be more interesting if it focused on symmetric functions in the real domain. <SCORE=0>

[4] The initial comments suggest that the authors should clarify the intuition behind their choice of initialization and clarify the conditions of linear separability mentioned in Lemma 4. This indirectly supports the additional comment criticizing the writing and the need for more detailed proofs and better motivation. <SCORE=2>

[5] The initial comments do not mention the representation for indicators using ReLUs, so there is no direct support or contradiction for the additional comment suggesting a simpler alternative from prior work. <SCORE=0>

[6] The initial comments criticize the empirical error presentation in Figure 5 and suggest that figures should be included in the main paper. This supports the additional comment criticizing the experimental plots for being hard to parse and inconsistent. <SCORE=3>

[7] The initial comments do not directly address the format of the plots, but the suggestion that figures should be included in the main paper and the criticism of the empirical error presentation indirectly support the additional comment suggesting the use of line plots instead of scatter plots. <SCORE=2>

[8] The initial comments do not mention the sampling on the x-axis or the varying number of epochs in experiments, so there is no direct support or contradiction for the additional comment criticizing the inconsistent sampling and varying number of epochs. <SCORE=0>

[9] The initial comments suggest that figures should be included in the main paper, which directly supports the additional comment that important plots should be included in the main paper. The initial comments do not address whether the experiments were repeated multiple times, so there is no direct support or contradiction for that part of the additional comment. <SCORE=2>

\end{scriptsize}\end{tcolorbox}

\paragraph{Yelp dataset} We employ the model \textsf{gpt-4-1106-preview} for this task.
\begin{tcolorbox}[enhanced, colback=gray!7, frame hidden]\begin{scriptsize}
\begin{normalsize}\textbf{System Prompt}\end{normalsize}\smallskip

You will receive the categories of a restaurant and a set of initial reviewer comments on this restaurant, with each comment presented on a separate line. Following these, you will be provided with a list of additional, specific reviewer comments.

Your task is to evaluate how each of the additional comments aligns with the initial set of reviewer comments. For each additional comment, assign a score from -3 to 3, indicating the degree of support or contradiction by the initial comments. 

For each additional comment, you should output a single line, including the reasons first (starts with `the initial comments appreciate/criticizes/do not mention'), and then your score (in the format of <SCORE=...>).

You should use the following scale for scoring:

-3: Strong contradiction - The initial comments directly oppose the additional comment, significantly reducing its credibility (likelihood less than 1/8). Example: Initial comments appreciate the pizza, but the additional comment criticizes it.

-2: Moderate contradiction - The initial comments indirectly oppose the additional comment, moderately reducing its credibility (likelihood about 1/4). Example: Initial comments appreciate the overall taste, but the additional comment criticizes the chicken.

-1: Slight contradiction - The initial comments slightly reduce the credibility of the additional comment (likelihood about 1/2).

0: Neutral - The initial comments neither support nor contradict the additional comment, leaving its credibility unchanged. Please try to avoid giving a score of 0 unless you cannot find any potential support or contradiction.

1: Slight support - The initial comments slightly increase the credibility of the additional comment (likelihood about 2).

2: Moderate support - The initial comments indirectly support the additional comment, moderately increasing its credibility (likelihood about 4). Example: Initial comments appreciate the overall taste, and the additional comment appreciates the chicken.

3: Strong support - The initial comments directly support the additional comment, significantly increasing its credibility (likelihood more than 8 times). Example: Both the initial comments and the additional comment appreciate the pizza.

\bigskip\begin{normalsize}\textbf{User Prompt}\end{normalsize}\smallskip

[Categories of the restaurant]

\texttt{\{Categories\_of\_the\_restaurant\} if mechanism == GSPPM else "Not Provided"}

[Initial reviewer comments]

\texttt{\{Rephrased\_report\_of\_the\_agent\_i\}}

[Additional reviewer comments]

\texttt{\{Rephrased\_report\_of\_the\_agent\_j\}}

\end{scriptsize}\end{tcolorbox}

\begin{tcolorbox}[enhanced, colback=gray!17, frame hidden]\begin{scriptsize}

\begin{normalsize}\begin{normalsize}\textbf{Example Input}\end{normalsize}\end{normalsize}

\smallskip\texttt{\{Categories\_of\_the\_restaurant\}}:

Not Provided

\smallskip\texttt{\{Rephrased\_report\_of\_the\_agent\_i\}}:

The reviewer appreciates the immediate seating and great service.

The reviewer appreciates the prompt service and attentive server.

The reviewer appreciates the crispy and saucy wings.

The reviewer appreciates the casual atmosphere and indoor patio area.

\smallskip\texttt{\{Rephrased\_report\_of\_the\_agent\_j\}}:

The reviewer appreciates the cleanliness of the restaurant.

The reviewer appreciates the awesome staff and prompt service.

The reviewer criticizes the flavor of the mango habanero wings.

\bigskip\begin{normalsize}\begin{normalsize}\textbf{Example Output}\end{normalsize}\end{normalsize}

\smallskip [1] The initial comments do not mention the cleanliness of the restaurant. <SCORE=0>

[2] The initial comments appreciate the great service and attentive server, which aligns with appreciating the awesome staff and prompt service. <SCORE=3>

[3] The initial comments appreciate the crispy and saucy wings, but do not specifically mention the mango habanero flavor. This is a slight contradiction since the initial comments were positive about the wings in general. <SCORE=-1>

\end{scriptsize}\end{tcolorbox}

\subsection{Generating review from paper}
\label{app:prompt_review}

In \Cref{sec:result2}, we employ LLMs to generate signals in the peer review scenario, demonstrating that the \textsc{Generative Synopsis Peer Prediction Mechanism (GSPPM)} is more effective at eliciting high-effort signals than the \textsc{Generative Peer Prediction Mechanism (GPPM)}. We design a prompt directing the LLM to act as a reviewer assessing a particular paper.

\paragraph{ICLR dataset} We employ the model \textsf{gpt-4-1106-preview} and \textsf{gpt-3.5-turbo-1106} for this task.
\begin{tcolorbox}[enhanced, colback=gray!7, frame hidden]\begin{scriptsize}
\begin{normalsize}\textbf{System Prompt}\end{normalsize}\smallskip

You are a reviewer for a machine learning conference. Your task is to review a submitted paper. In your review, you should thoroughly analyze the paper and offer detailed feedback. Highlight the paper's strengths and weaknesses. Your review should be objective, critical, and supportive, aimed at helping the authors improve their work and contributing to the advancement of machine learning research.

\bigskip\begin{normalsize}\textbf{User Prompt}\end{normalsize}\smallskip

\texttt{\{Paper\_content\}}

\end{scriptsize}\end{tcolorbox}

\begin{tcolorbox}[enhanced, colback=gray!17, frame hidden]\begin{scriptsize}

\begin{normalsize}\begin{normalsize}\textbf{Example Input}\end{normalsize}\end{normalsize}

\smallskip\texttt{\{Paper\_content\}}:

Content of the ICLR 2020 conference paper2150, see \url{https://openreview.net/forum?id=Skeh-xBYDH}.

\bigskip\begin{normalsize}\begin{normalsize}\textbf{Example Output}\end{normalsize}\end{normalsize}

\smallskip**Review of "On Symmetry and Initialization for Neural Networks"**

**Summary:**

The paper presents a theoretical analysis of the role of symmetry in the initialization of neural networks with one hidden layer. The authors focus on symmetric functions and demonstrate that with symmetry-based initialization, standard Stochastic Gradient Descent (SGD) can efficiently converge to zero empirical error and provide generalization guarantees. The paper includes empirical validation of the theoretical results and explores the impact of initialization on the learning of symmetric functions, including the parity function.

**Strengths:**

1. **Theoretical Contribution:** The paper addresses an important aspect of neural network initialization and provides a novel theoretical framework for understanding the impact of symmetry on learning and generalization.

2. **Empirical Validation:** The authors support their theoretical claims with empirical experiments, which is commendable as it provides a practical perspective on the theoretical results.

3. **Relevance to the Field:** The topic of initialization is highly relevant to the machine learning community, and the paper's focus on symmetry could inspire further research in this area.

4. **Clarity of Presentation:** The paper is well-structured, with clear section titles and a logical flow that guides the reader through the theoretical development and empirical validation.

**Weaknesses:**

1. **Limited Scope:** The paper focuses exclusively on neural networks with one hidden layer. While this is a reasonable simplification for theoretical analysis, it limits the applicability of the results to more complex architectures that are commonly used in practice.

2. **Generalization of Results:** The generalization guarantees provided are specific to the class of symmetric functions. It is unclear how these results extend to other classes of functions or to more realistic, non-symmetric data distributions.

3. **Experimental Details:** The paper could benefit from more detailed descriptions of the experimental setup, including the network architectures used, hyperparameters, and the datasets. This would improve the reproducibility of the results.

4. **Comparison with Other Initialization Techniques:** The paper does not compare the proposed symmetry-based initialization with other popular initialization methods, such as Xavier or He initialization, which would be valuable for contextualizing the results.

**Suggestions for Improvement:**

1. **Extend to More Complex Architectures:** Future work could explore how symmetry-based initialization impacts the learning dynamics of deeper and more complex neural networks.

2. **Broader Empirical Evaluation:** The authors should consider evaluating their approach on a wider range of functions and datasets, including non-symmetric cases, to assess the robustness of their findings.

3. **Detailed Experimental Methodology:** Providing more details about the experimental methodology would enhance the paper's credibility and allow others to replicate the findings.

4. **Benchmarking:** A comparison with other initialization methods would provide a more comprehensive understanding of the proposed approach's strengths and limitations.

**Overall Evaluation:**

The paper makes a valuable theoretical contribution to the understanding of neural network initialization, particularly in the context of symmetric functions. The empirical validation is a strong point, although the experiments could be expanded to provide a more comprehensive evaluation. The paper's focus on a single hidden layer network is both a strength, in terms of theoretical tractability, and a weakness, in terms of practical relevance. To move forward, the authors should consider extending their analysis to more complex architectures and providing a broader empirical evaluation.

**Recommendation:**

Accept with minor revisions. The paper presents interesting theoretical insights and empirical results that contribute to the understanding of neural network initialization. However, the authors should address the limitations regarding the scope and experimental details to strengthen the paper's impact.

\end{scriptsize}\end{tcolorbox}

\end{document}